%% file: main.tex
\documentclass[sigconf]{acmart}

\AtBeginDocument{%
  }

\setcopyright{none}
\copyrightyear{2026}
\acmYear{2026}
\acmDOI{XXXXXXX.XXXXXXX}
\acmConference[Conference acronym 'XX]{Make sure to enter the correct
  conference title from your rights confirmation email}{June 03--05,
  2018}{Woodstock, NY}
\acmISBN{978-1-4503-XXXX-X/2018/06}
\renewcommand\footnotetextcopyrightpermission[1]{}

\usepackage{mathtools}
\usepackage{amsthm}
\usepackage{subcaption}
\usepackage{multirow}
\usepackage{colortbl}
\usepackage{framed}
\usepackage{fvextra}
\usepackage{algorithm}
\usepackage{algorithmic}
\usepackage{float}
\usepackage[capitalize,noabbrev]{cleveref}
\usepackage{makecell}
\usepackage{pifont}
\usepackage{xspace}
\newcommand{\Require}{\REQUIRE}

\newcommand{\State}{\STATE}
\newcommand{\Return}{\RETURN}
\newcommand{\Comment}[1]{\COMMENT{#1}}
\newcommand{\For}[1]{\FOR{#1}}
\newcommand{\EndFor}{\ENDFOR}
\newcommand{\If}[1]{\IF{#1}}
\newcommand{\EndIf}{\ENDIF}
\newcommand{\Else}{\ELSE}
\newcommand{\Repeat}{\REPEAT}
\newcommand{\Until}[1]{\UNTIL{#1}}
\theoremstyle{plain}
\newtheorem{theorem}{Theorem}[section]
\newtheorem{proposition}[theorem]{Proposition}

\newtheorem{corollary}[theorem]{Corollary}
\theoremstyle{definition}
\newtheorem{definition}[theorem]{Definition}
\newtheorem{assumption}[theorem]{Assumption}
\theoremstyle{remark}

\definecolor{cpagentblue}{rgb}{0.9,1.0,1.0}
\definecolor{promptbg}{RGB}{245,245,245}
\definecolor{takeawaybg}{HTML}{F6E3BA}

\newcommand{\promptboxsep}{4pt}
\newcommand{\promptboxfont}{\ttfamily\fontsize{7pt}{8pt}\selectfont}
\newcommand{\promptverbfont}{\scriptsize}
\DefineVerbatimEnvironment{verbatim}{Verbatim}{
  fontsize=\promptverbfont,
  breaklines=true,
  breakanywhere=true,
  breakindent=0pt,
  breakautoindent=false,
  breaksymbolleft={}
}
\makeatletter
\newenvironment{promptboxapp}[1][]{%
  \MakeFramed{\advance\hsize-\width\FrameRestore}%
  \promptboxfont
  \setlength{\parindent}{0pt}%
  \setlength{\emergencystretch}{2em}%
  \raggedright\sloppy}
  {\endMakeFramed}
\newenvironment{takeawaybox}{%
  \par\noindent\begin{minipage}{\linewidth}%
  \MakeFramed{\advance\hsize-\width\FrameRestore}%
  \small\setlength{\parindent}{0pt}}
  {\endMakeFramed\end{minipage}\par}
\makeatother

\newcommand{\model}{\textsc{CP-Agent}\xspace} 

\newcommand{\cfg}[4]{%
{\scriptsize
\ifnum#1=1\textcolor{green!55!black}{\ding{51}}\else\textcolor{red!65!black}{\ding{55}}\fi
\ifnum#2=1\textcolor{green!55!black}{\ding{51}}\else\textcolor{red!65!black}{\ding{55}}\fi
\ifnum#3=1\textcolor{green!55!black}{\ding{51}}\else\textcolor{red!65!black}{\ding{55}}\fi
\ifnum#4=1\textcolor{green!55!black}{\ding{51}}\else\textcolor{red!65!black}{\ding{55}}\fi
}}
\newcommand{\rt}[2]{#1{\scriptsize\,(#2)}}

\newcommand{\ind}{\mathbf{1}}
\newcommand{\clip}[1]{\bigl[#1\bigr]_{+}}

\title{\model: A Calibrated Risk-Controlled Agent for Feedback-Driven Competitive Programming}
\author{Peisong Wang}
\affiliation{%
  \institution{The Hong Kong University of Science and Technology (Guangzhou)}
  \city{Guangzhou}
  \country{China}}
\author{Bowen Liu}
\affiliation{%
  \institution{The Hong Kong University of Science and Technology (Guangzhou)}
  \city{Guangzhou}
  \country{China}}
\author{Zehua Li}
\affiliation{%
  \institution{The Hong Kong University of Science and Technology (Guangzhou)}
  \city{Guangzhou}
  \country{China}}
\author{Yuyao Wang}
\affiliation{%
  \institution{The Hong Kong University of Science and Technology (Guangzhou)}
  \city{Guangzhou}
  \country{China}}
\author{Zhiwei Ma}
\affiliation{%
  \institution{The Hong Kong University of Science and Technology (Guangzhou)}
  \city{Guangzhou}
  \country{China}}
\author{Yuhan Li}
\affiliation{%
  \institution{The Hong Kong University of Science and Technology (Guangzhou)}
  \city{Guangzhou}
  \country{China}}
\author{Jia Li}
\authornote{Corresponding author.}
\affiliation{%
  \institution{The Hong Kong University of Science and Technology (Guangzhou)}
  \city{Guangzhou}
  \country{China}}
\renewcommand{\shortauthors}{Wang et al.}

\begin{document}

\begin{abstract}
Large language models still struggle with contest-level programming, while many agentic remedies rely on massive inference-time sampling or expensive multi-stage post-training. We study when execution feedback reliably helps an LLM CP solver and which mechanisms govern the gains. We model feedback-driven solving as a calibrated stopped process and identify three quantities: false-admission risk, program-level evidence against bad programs, and the active-state success hazard. Under held-out trace calibration and selection from a pre-declared finite controller manifest, the resulting structural certificate lower-bounds the clean success probability before false admission. We instantiate mechanisms targeting these quantities as Dual-Granularity Verification, Test Augmentation, and Experience-Driven Self-Evolving, yielding \model. Without updating any parameters, \model raises Pass@1 from 25.8\% to 48.5\% on LiveCodeBench Pro and improves Refine@5 by 11.0\% on ICPC-Eval. Across three LLM backbones, \model lies on the cost--accuracy efficiency frontier, and ablations show that each component primarily affects its corresponding certificate quantity.
\end{abstract}

\begin{CCSXML}
<ccs2012>
   <concept>
       <concept_id>10010147.10010178.10010179</concept_id>
       <concept_desc>Computing methodologies~Natural language processing</concept_desc>
       <concept_significance>500</concept_significance>
       </concept>
   <concept>
       <concept_id>10002950.10003648.10003700</concept_id>
       <concept_desc>Mathematics of computing~Stochastic processes</concept_desc>
       <concept_significance>300</concept_significance>
       </concept>
 </ccs2012>
\end{CCSXML}

\ccsdesc[500]{Computing methodologies~Natural language processing}
\ccsdesc[300]{Mathematics of computing~Stochastic processes}

\keywords{competitive programming, large language models, agentic reasoning, execution feedback, code generation}

\maketitle

\begin{figure}[t]
    \centering
    \includegraphics[width=\linewidth]{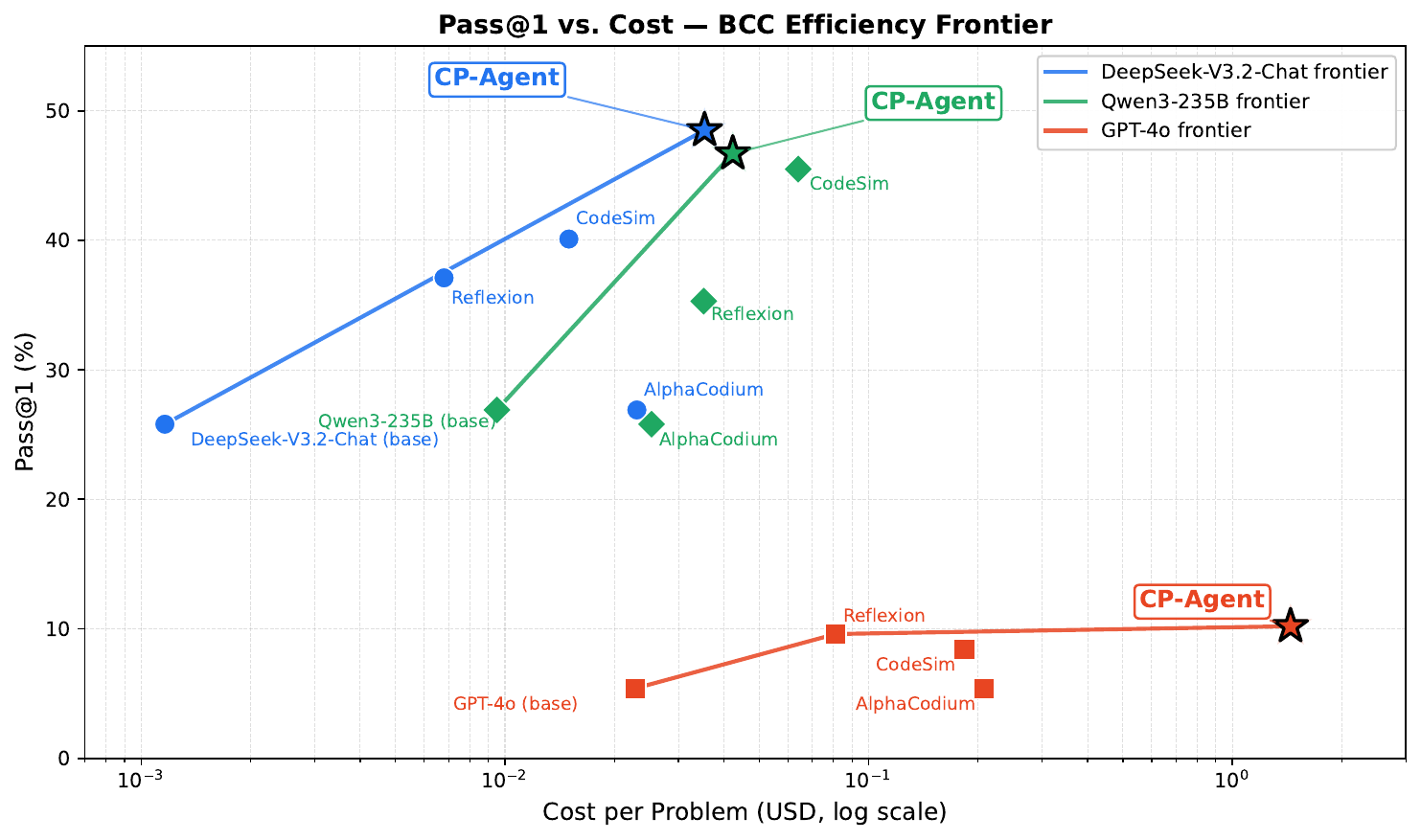}
    \caption{Multi-backbone Pass@1 versus cost on LiveCodeBench Pro for DeepSeek-V3.2-Chat, GPT-4o, and Qwen3-235B-A22B-Instruct, with the VRS/BCC (DEA) efficiency frontier shown as the upper envelope of each backbone. Cost is the average API cost per problem on a log scale. \model lies on the efficiency frontier for all three backbones.}
    \Description{A scatter plot with average API cost per problem on a logarithmic x-axis ranging from about 0.001 to 0.05 US dollars, and Pass@1 accuracy on the y-axis ranging from 0 to about 55 percent. Points are grouped into three backbones using different colors and marker shapes: DeepSeek-V3.2-Chat, GPT-4o, and Qwen3-235B-A22B-Instruct. For each backbone, a piecewise-linear upper-envelope curve connects its most cost-efficient methods. Within every backbone the CP-Agent point sits on this envelope in the upper-right region; baselines such as Reflexion lie on the envelope only for DeepSeek-V3.2-Chat and fall strictly below it for the other backbones, indicating that CP-Agent attains the highest Pass@1 at comparable per-problem cost across all three backbones.}
    \label{fig:efficiency}
\end{figure}
\section{Introduction}
Large language models (LLMs) now perform well across a broad range of downstream tasks, including code generation and everyday software-engineering assistance. Contest-level competitive programming (CP), however, remains a sharp failure case: on ICPC/IOI-style benchmarks~\cite{jain2025livecodebench,xu2026icpc,zheng2026livecodebench}, even strong LLMs reach below 30\% Pass@1 (Section~\ref{sec:experiment}). CP is judged by hidden correctness and asymptotic feasibility under strict time and memory limits, so plausible implementations earn no credit when the algorithm or an edge case is wrong, and software-engineering virtues such as maintainability or modular design do not compensate.

Recent work has explored two directions: (i) tackling competitive programming problems through massive code generation combined with heuristic-based inference and execution-based pruning, exemplified by AlphaCode~\cite{li2022competition}, AlphaCode2~\cite{leblond2023alphacode}, and o1-ioi~\cite{el2025competitive}; (ii) enhancing the intrinsic reasoning ability of LLMs, including large reasoning models~\cite{guo2025deepseek,el2025competitive,openai2025_o4mini_high,liu2025deepseek} that are often trained with reinforcement learning. While these directions improve performance, they often come with substantial computational cost-large-scale sampling during inference, or expensive rollouts during training.

CP offers rich, executable feedback: compilation errors, test failures, and counterexamples directly expose concrete failure modes~\cite{shinn2023reflexion}. This makes agentic solving a natural fit. Two gaps, however, remain for agentic CP\@. First, the literature disagrees on which mechanisms reliably help. MapCoder~\cite{islam2024mapcoder} reports that additional test-case generation is unhelpful while AlphaCodium~\cite{ridnik2024code} reports gains; MapCoder and Oi-Assistant~\cite{yuhua2025oi} find retrieval of similar problems useful, while AlgoSimBench~\cite{li2025algosimbench} finds it unstable. Second, recent strong agentic CP systems increasingly invest in multi-stage agentic post-training~\cite{team2026grandcode}. In contrast, for training-free LLM agents on CP, it remains unclear whether tool orchestration alone improves performance and which mechanisms drive its gains. 

These gaps converge to one essential question: which controllable mechanisms turn execution feedback into measurable gains on contest-level CP? We answer it through a calibrated risk-control view of feedback-driven solving.
We analyze feedback-driven CP solving as a calibrated, risk-controlled stopped process. Its state reduces to three certificate quantities: the residual bad-candidate envelope $\rho_t$, the accumulated program-level evidence $I_t$, and the active-state success hazard $\underline h_t$. Each is moved by exactly one abstract action class: $\textsf{RiskProbe}$ on $\rho_t$, $\textsf{EvidenceAcquire}$ on $I_t$, and $\textsf{ContextAcquire}$ on $\underline h_t$. The calibrated false-admission UCB $\bar q_t \approx \bar\rho_t \exp(-\mathcal I_t)$ follows as a derived quantity. This one-to-one reduction yields a unified design-and-selection framework: for any pre-declared finite manifest of controllers, it supplies a post-selection-valid lower bound on clean-before-false success.
Guided by this framework, we design $\model$ with one controller per action class: Dual-Granularity Verification contracts $\rho_t$, Test Augmentation adds to $I_t$, and Experience-Driven Self-Evolving raises $\underline h_t$. A frozen admission gate and a refinement loop complete the agent.
Extensive empirical results across three backbones on two contest-level benchmarks, together with theory-grounded mechanism ablations on the five-controller calibration class, jointly validate $\model$'s effectiveness and its calibrated certificate.

Our contributions are threefold:
\begin{itemize}
    \item \textbf{A calibrated risk-control theory for feedback-driven CP.}
    We prove an admission-risk theorem and a simultaneously valid finite-manifest selection rule over five abstract action classes (Section~\ref{sec:theory}).
    \item \textbf{A theory-induced agent.} We instantiate this controller as \model, a tool-using framework with no parameter updates (Section~\ref{sec:method}).\footnote{Our implementation is open-sourced at \url{https://github.com/NineAbyss/CP-Agent}.}
    \item \textbf{Trace-calibrated mechanism evidence.} On LiveCodeBench Pro and ICPC-Eval across three backbones, \model raises Pass@1 from 25.8\% to 48.5\% and Refine@5 by $+11.0$\%; calibration attributes the gains to TA (evidence), HP (bad-plan survival), and Experience (success hazard) (Section~\ref{sec:experiment}).
\end{itemize}

\begin{figure*}[t]
    \centering
    \includegraphics[width=\textwidth]{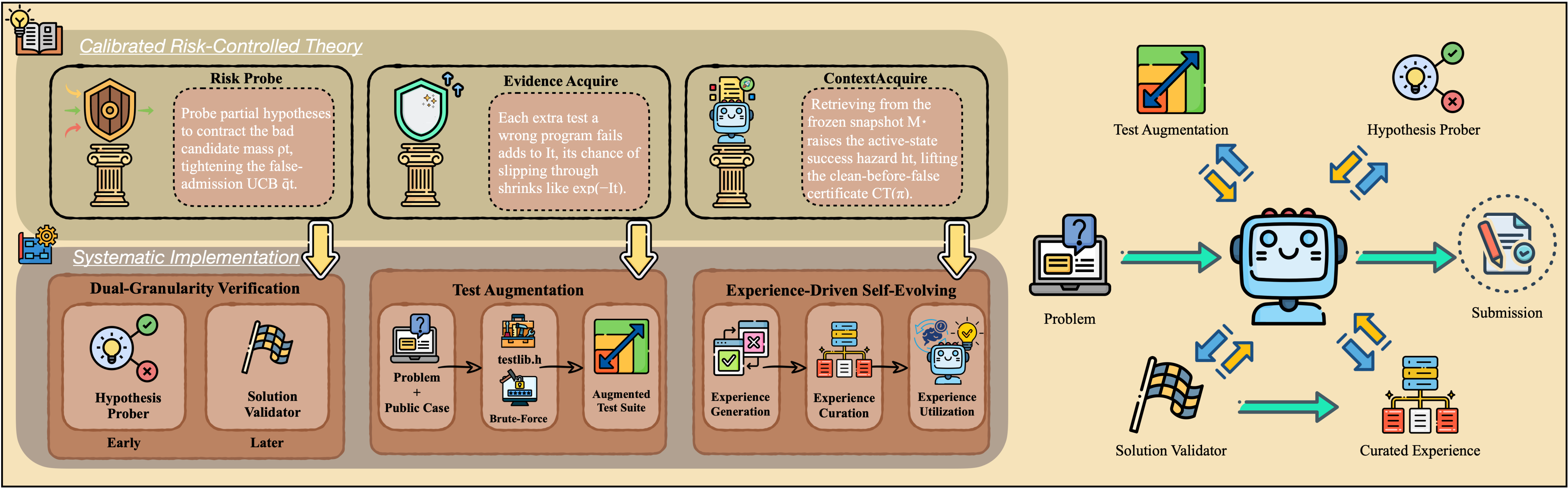}
    \caption{Overview of \model. The risk-control theory defines a certificate state and abstract action classes; the concrete \model controller is declared before target testing and audited on held-out traces through the calibrated lower-bound objective. \model implements these handles with Dual-Granularity Verification, Test Augmentation, and Experience-Driven Self-Evolving.
}
    \Description{A wide block diagram with three horizontal layers connected by arrows. The top layer is the theoretical certificate state, listing the three quantities rho_t (residual bad-candidate envelope), I_t (program-level evidence in nats), and underline h_t (clean-success hazard), each linked to one abstract action class: RiskProbe acts on rho_t, EvidenceAcquire acts on I_t, and ContextAcquire acts on underline h_t. The middle layer is the CP-Agent controller, with three modules drawn as boxes: a Hypothesis Prober that compiles and runs short C++ snippets on the left, a Solution Validator and Test Augmentation pair in the middle that runs candidates against public tests and consensus-filtered augmented tests, and an Experience-Driven Self-Evolving memory bank on the right that retrieves frozen Fail-to-Pass triplets. Arrows from each middle-layer module point up to its matching certificate quantity. The bottom layer shows the temporal data-split pipeline as three boxes in a row: D_hist_mem for memory construction, D_cal for held-out calibration and controller selection, and D_test for target evaluation, with horizontal arrows indicating that the controller manifest and the memory snapshot M-star are frozen before D_cal and D_test.}
    \label{fig:framework}
\end{figure*}

\section{A Calibrated Risk-Controlled Theory of Feedback Control}
\label{sec:theory}
A feedback-driven CP agent repeatedly faces the same decision: admit the current candidate, refine it, or acquire more information. We specify certificate quantities that audit this decision, abstract action classes that act on them, and a calibration rule that turns them into a frozen controller; the takeaway box summarizes the role of the analysis and Appendix~\ref{app:glossary} consolidates the Section~\ref{sec:theory} notation.

\begin{definition}[Controlled stopped process and certificate state]
\label{def:stopped-process}
Fix a horizon $T$, false-admission budget $\alpha$, and a frozen policy $\pi$.  Throughout, ``step $t$'' denotes a \emph{refinement step}---one response--revision attempt (one $\textsf{Refine}$ advance, equivalently one outer-loop iteration of Algorithm~\ref{alg:orchestration}), matched to Refine@$K$ and Pass@$1$. Within-trajectory interaction events such as individual LLM outputs or tool calls are termed \emph{interaction steps} and appear only in diagnostic figures (e.g., Figure~\ref{fig:verification_tool_freq}). Before refinement step $t$, the trajectory is active when
$A_t=\{\tau_S>t-1,\tau_F>t-1\}$.  On active trajectories, the implementation state is $X_t$ (problem text, candidate code, feedback, retrieved memory, and tool outputs).  The certificate exposes a lower-dimensional summary state
\[
Z_t:=\phi(X_t)=(\rho_t,I_t,\underline h_t,t),
\]
where $\rho_t$ is an upper envelope on residual bad-candidate mass, $I_t$ is accumulated program-level evidence, and $\underline h_t$ is a lower envelope on clean-success hazard.  The residual admission-risk score is
\begin{equation}
\label{eq:risk-score}
r_t(Z_t):=\rho_t\exp(-I_t).
\end{equation}
Here $\rho_t$ upper-bounds the bad-candidate probability after upstream screening and $\exp(-I_t)$ upper-bounds the probability that such a candidate survives the evidence gate, so their product upper-bounds false admission; $I_t$ is in nats so evidence accumulates additively as $I_t=\sum_{s\le t}\Delta I_s$.
Let $\Gamma_t\in\{0,1\}$ denote the controller's observable admission gate at step $t$, the binary admit/reject signal logged on every active trace. If the controller admits a candidate through gate $\Gamma_t$, hidden adjudication defines
\[
F_t:=\ind\{A_t,\Gamma_t=1,B_t^\epsilon\},\qquad
S_t:=\ind\{A_t,\Gamma_t=1,\neg B_t^\epsilon\},
\]
where $B_t^\epsilon$ denotes the event that the admitted candidate fails the deterministic hidden-judge on at least one hidden test, which we treat as the $\epsilon$-significant error event under the benchmark's hidden-test distribution. The stopping times are $\tau_F:=\inf\{t:F_t=1\}$ and $\tau_S:=\inf\{t:S_t=1\}$; the certified event is $\{\tau_S\le T,\tau_S<\tau_F\}$.
\end{definition}

\begin{definition}[Abstract action classes]
\label{def:abstract-actions}
We choose action classes to span the certificate: each certificate quantity in $Z_t$ admits one action that can move it in the direction of a larger lower bound---$\textsf{RiskProbe}$ on $\rho_t$, $\textsf{EvidenceAcquire}$ on $I_t$, and $\textsf{ContextAcquire}$ on $\underline h_t$---together with $\textsf{Stop}$ and $\textsf{Refine}$ for the commit/advance decision.  The controller chooses actions from
\begin{equation}
\label{eq:abstract-actions}
\begin{aligned}
\mathcal A(Z_t)=\{&\textsf{Stop},\textsf{Refine},\textsf{RiskProbe},\\
&\textsf{EvidenceAcquire}(m),\textsf{ContextAcquire}(k)\}.
\end{aligned}
\end{equation}
Each action acts on the certificate as follows:
\begin{align}
\textsf{Stop}:\ &\Gamma_t^\alpha=\ind\{r_t(Z_t)\le \hat\theta_{t,\alpha}\}, \label{eq:stop-action}\\
\textsf{RiskProbe}:\ &\rho_{t+}\le \lambda_t(Z_t)\,\rho_t,\quad \lambda_t(Z_t)\in[0,1], \label{eq:riskprobe-action}\\
\textsf{EvidenceAcquire}(m):\ &I_{t+}=I_t+\Delta I_t(m,Z_t), \label{eq:evidence-action}\\
\textsf{ContextAcquire}(k):\ &\underline h_{t+}\ge \underline h_t+\Delta h_t(k,Z_t), \label{eq:context-action}
\end{align}
where $\rho_{t+},I_{t+},\underline h_{t+}$ denote the post-action certificate components, and $\textsf{Refine}$ advances to the next candidate distribution without admitting the current candidate. The integers $m$ and $k$ are action intensities, e.g., evidence budget and context-retrieval budget.
\end{definition}

\begin{assumption}[Held-out exchangeability]
\label{assump:trace-calibration}
Calibration and evaluation problem trajectories are independent draws from the same benchmark distribution.  Independence is required across problem trajectories at a fixed step $t$; it is \emph{not} required across refinement steps within a trajectory.
\end{assumption}

\begin{definition}[Controller manifest]
\label{def:controller-manifest}
The controller manifest is
\[
\begin{aligned}
\pi:=\bigl(\Gamma^{\rm obs}_{1:T},\,\mathcal U_{\rm probe},\,m,\,\gamma_{\rm evid},\,k_{\rm gen},\,\ell_{\rm alg},\,R_{\rm route},\,D_{\rm dedup},\\
p_{\rm prompt},\,d_{\rm decode},\,T,\,M^\star,\,\omega_{\rm write}\bigr)
\end{aligned}
\]
containing: the observable admission gate $\Gamma^{\rm obs}_{1:T}$; the $\textsf{RiskProbe}$ tool interface $\mathcal U_{\rm probe}$ and its model-controlled invocation policy; the $\textsf{EvidenceAcquire}$ intensity $m$ and aggregation rule $\gamma_{\rm evid}$; the two $\textsf{ContextAcquire}$ retrieval channels---general-bank injection budget $k_{\rm gen}$ and algorithmic-bucket retrieval budget $\ell_{\rm alg}$ (so the action intensity $k$ in Eq.~\eqref{eq:context-action} is realized as the pair $(k_{\rm gen},\ell_{\rm alg})$); the retrieval routing $R_{\rm route}$; the deduplication rule $D_{\rm dedup}$; the prompt template $p_{\rm prompt}$; the decoding parameters $d_{\rm decode}$; the horizon $T$; the memory snapshot $M^\star$; and the write mode $\omega_{\rm write}$. For every certificate-bearing result, $\omega_{\rm write}=\textsf{audit-only}$, i.e., $M^\star$ is read-only and evaluation-time experiences are written only to the audit log $\widetilde M$. Each entry of $\pi$ is either pre-declared or selected on held-out traces, and all entries are frozen before final evaluation under the split $D_{\rm hist}^{\rm mem}\to D_{\rm cal}\to D_{\rm test}$ with $M^\star$ built from $D_{\rm hist}^{\rm mem}$.
\end{definition}

\paragraph{Calibration estimators.}
For each fixed $\pi$ and step $t$, let $n_t^\pi$ active calibration traces contain $f_t^\pi$ false admissions and $s_t^\pi$ clean successes.  Under Assumption~\ref{assump:trace-calibration} the counts are binomial conditional on $n_t^\pi$, so we use the exact one-sided Clopper--Pearson intervals~\cite{clopper1934use}
\begin{equation}
\label{eq:qh-calibration}
\bar q_{t,\mathrm{raw}}^\pi:=\mathrm{UCB}_{\rm Binom}\!\left(f_t^\pi,n_t^\pi,\tfrac{\delta}{2T}\right),\quad
\underline h_t^\pi:=\mathrm{LCB}_{\rm Binom}\!\left(s_t^\pi,n_t^\pi,\tfrac{\delta}{2T}\right)
\end{equation}
on the deployment active-state hazards $q_t^\pi:=\Pr_\pi(F_t{=}1\mid A_t)$ and $h_t^\pi:=\Pr_\pi(S_t{=}1\mid A_t)$, jointly valid for $t\le T$ by a union bound (proof of Theorem~\ref{thm:finite-class-selection}); operationally, both bounds share the same active-row denominator $n_t^\pi$ on $D_{\rm cal}$.

\begin{theorem}[Calibrated admission gate]
\label{thm:risk-gate}
Let $\Theta_t$ be a finite grid of admission thresholds for the risk score in Eq.~\eqref{eq:risk-score}.  On held-out calibration traces, let $n_t(\theta)$ be the number of active candidate admissions with $r_t(Z_t)\le\theta$, and let $f_t(\theta)$ be the number of such admissions that hidden evaluation rejects.  Define
\[
\bar R_t(\theta)=\mathrm{UCB}_{\rm Binom}\!\left(f_t(\theta),n_t(\theta),\frac{\delta_{\mathrm{gate}}}{T|\Theta_t|}\right)
\]
and select
\begin{equation}
\label{eq:theta-select}
\hat\theta_{t,\alpha}=\max\{\theta\in\Theta_t:\bar R_t(\theta)\le\alpha_t\},
\end{equation}
with $\hat\theta_{t,\alpha}{:=}-\infty$, $\Gamma_t^\alpha{\equiv}0$ if the set is empty, and the conservative conventions $\mathrm{UCB}_{\rm Binom}(0,0,\eta){:=}1$ and zero-probability conditioning evaluated as $0$.  Then, with probability at least $1-\delta_{\mathrm{gate}}$ over calibration, the gate $\Gamma_t^\alpha=\ind\{r_t(Z_t)\le\hat\theta_{t,\alpha}\}$ satisfies
\[
\Pr(F_t=1\mid A_t,\Gamma_t^\alpha=1)\le\alpha_t
\]
simultaneously for all $t\le T$ and all data-dependent $\hat\theta_{t,\alpha}\in\Theta_t$.
\end{theorem}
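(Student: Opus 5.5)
The plan is the standard ``Learn-then-Test''/union-bound argument: reduce the data-dependent choice of $\hat\theta_{t,\alpha}$ to a finite family of fixed-threshold binomial tests, and control all of them simultaneously. For each fixed pair $(t,\theta)$ with $t\le T$ and $\theta\in\Theta_t$, define the deployment conditional risk
\[
R_t(\theta):=\Pr\bigl(F_t=1\mid A_t,\ r_t(Z_t)\le\theta\bigr),
\]
with the stated convention that this is $0$ when the conditioning event has probability zero. This is a fixed population quantity not depending on calibration data, since $\pi$ (hence the law of $Z_t$, $\Gamma_t$, and the hidden adjudication) is frozen before $D_{\rm cal}$ by Definition~\ref{def:controller-manifest}. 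Under Assumption~\ref{assump:trace-calibration}, conditional on the number $n_t(\theta)$ of active calibration traces with $r_t(Z_t)\le\theta$, the count $f_t(\theta)$ is $\mathrm{Binom}(n_t(\theta),R_t(\theta))$. Independence is needed only across problems at a fixed $t$, which is exactly what the assumption provides; within-trajectory dependence across steps never enters because each $(t,\theta)$ is treated marginally.

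Next I invoke exactness of the one-sided Clopper--Pearson bound: for each fixed $(t,\theta)$ and every value of $n_t(\theta)$, $\Pr\bigl(R_t(\theta)>\bar R_t(\theta)\mid n_t(\theta)\bigr)\le \delta_{\rm gate}/(T|\Theta_t|)$. The degenerate case $n_t(\theta)=0$ is handled by the convention $\mathrm{UCB}_{\rm Binom}(0,0,\eta)=1\ge R_t(\theta)$. Averaging over $n_t(\theta)$ removes the conditioning. A union bound over the at most $\sum_{t\le T}|\Theta_t|$ pairs (each $t$ contributes $|\Theta_t|\cdot\delta_{\rm gate}/(T|\Theta_t|)=\delta_{\rm gate}/T$) gives a good event $\mathcal E$ with $\Pr(\mathcal E)\ge1-\delta_{\rm gate}$. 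On $\mathcal E$ we have $R_t(\theta)\le\bar R_t(\theta)$ for all $t\le T$ and all $\theta\in\Theta_t$ simultaneously.

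On $\mathcal E$, the selection step is deterministic. For each $t$, the selected $\hat\theta_{t,\alpha}$ lies in $\{\theta:\bar R_t(\theta)\le\alpha_t\}$, so $R_t(\hat\theta_{t,\alpha})\le\bar R_t(\hat\theta_{t,\alpha})\le\alpha_t$. Since $\{\Gamma_t^\alpha=1\}=\{r_t(Z_t)\le\hat\theta_{t,\alpha}\}$ and $\hat\theta$ is frozen at deployment, $\Pr(F_t=1\mid A_t,\Gamma_t^\alpha=1)=R_t(\hat\theta_{t,\alpha})\le\alpha_t$. If the feasible set is empty, $\Gamma_t^\alpha\equiv0$ and the conditional probability is $0$ by convention. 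Because the inequality holds for every $\theta$ in the grid, it also holds for any data-dependent choice from $\Theta_t$, which gives the ``all data-dependent $\hat\theta$'' clause.

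The main obstacle is the second step: making the binomial model for $f_t(\theta)$ legitimate. One must check that $F_t$ on calibration traces means ``admitted at threshold $\theta$ and hidden-rejected'', matching the deployment event, even though the calibration trajectory's own gate may differ from the threshold gate. I would address this by defining the calibration counts on the logged score $r_t(Z_t)$ and hidden verdict of the candidate present at step $t$, independently of whether that trace actually admitted. The activity event $A_t$ must also be generated by the same frozen $\pi$ with no dependence on $\theta$, so that thresholding is a pure post-hoc filter on i.i.d. rows.
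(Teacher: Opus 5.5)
Your proposal is correct and follows the paper's proof essentially step for step. For each fixed $(t,\theta)$ you take a Clopper--Pearson UCB on $\Pr_\pi(F_t=1\mid A_t,r_t(Z_t)\le\theta)$, handling the random active denominator by conditioning on it and then averaging. A union bound over $t\le T$ and $\theta\in\Theta_t$ gives a good event of probability at least $1-\delta_{\mathrm{gate}}$. On that event the selection step is deterministic, and the empty-set case is handled by the zero-probability convention.

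The consistency caveat you raise in your last paragraph is not argued in the paper's proof either. It concerns how the calibration counts relate to the trace's own gate $\Gamma_t$ versus the threshold gate, and the requirement that $A_t$ not depend on $\theta$. The paper absorbs it into its standing pre-declaration hypothesis: the score map, grid, and gate rule are fixed by $\pi$ and determined by observables.
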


\paragraph{Controller certificate.}
Theorem~\ref{thm:risk-gate} controls risk-threshold stopping when a score threshold is part of the manifest, and audits the frozen observable gate otherwise.  To select information-acquisition actions, we compare two separately calibrated UCBs on the active false-admission hazard --- the \emph{raw} bound from admitted-failure counts and the \emph{mechanism-factorized} bound that multiplies probe-survival and evidence-gate-survival rates --- and take their minimum (the mechanism factor wins when the probe and evidence channel together reject most of the bad mass; the raw bound wins when calibration data is sparse, since unfit mechanism factors fall back to $1$):
\begin{align}
\bar q_{t,\mathrm{mech}}^\pi
&=\bar\rho_{t}^{\mathrm{probe},\pi}\exp\!\bigl[-\mathcal I_{t}^{\mathrm{evid},\pi}\bigr],\nonumber\\
\bar q_{t,\mathrm{ctrl}}^\pi
&=\min\!\left\{\bar q_{t,\mathrm{raw}}^\pi,\,\bar q_{t,\mathrm{mech}}^\pi\right\}.
\label{eq:false-risk}
\end{align}
Here $\bar q_{t,\mathrm{raw}}^\pi$ is the Clopper--Pearson UCB on $q_t^\pi$ from active calibration trajectories.  For the mechanism factor, let $W_t^{\rm probe},W_t^{\rm evid}\in\{0,1\}$ denote survival through the upstream risk-probe stage and the evidence gate, respectively.  Since $\{F_t{=}1\}\subseteq\{B_t^\epsilon,$ $W_t^{\rm probe}{=}1,$ $W_t^{\rm evid}{=}1\}$ on $A_t$, the chain rule of conditional probabilities (not independence) gives
{\footnotesize
\[
\Pr_\pi(F_t{=}1\mid A_t)\le
\underbrace{\Pr_\pi(B_t^\epsilon,W_t^{\rm probe}{=}1\mid A_t)}_{\le\,\bar\rho_t^{\mathrm{probe},\pi}\ \text{(Prop.~\ref{prop:probe-odds})}}
\cdot
\underbrace{\Pr_\pi(W_t^{\rm evid}{=}1\mid B_t^\epsilon,W_t^{\rm probe}{=}1,A_t)}_{\le\,\exp[-\mathcal I_t^{\mathrm{evid},\pi}]\ \text{(Prop.~\ref{prop:evidence-acquire})}}.
\]
}
We split $\delta=\delta_{\mathrm{raw}}+\delta_{\mathrm{gate}}+\delta_{\mathrm{probe}}+\delta_{\mathrm{evid}}$, allocating $\delta_{\mathrm{raw}}/(2T)$ to each of $\bar q_{t,\mathrm{raw}}^\pi$ and $\underline h_t^\pi$ (Eqs.~\eqref{eq:qh-calibration},~\eqref{eq:hazard}), $\delta_{\mathrm{gate}}$ to the threshold grid (Theorem~\ref{thm:risk-gate}), $\delta_{\mathrm{probe}}/T$ to $\bar\rho_t^{\mathrm{probe},\pi}$, and $\delta_{\mathrm{evid}}/T$ to $\bar b_t^\pi(m)$.  A single union bound makes all UCBs/LCBs jointly valid with calibration probability $\ge 1-\delta$, so $\bar q_{t,\mathrm{ctrl}}^\pi$ is a valid UCB on $q_t^\pi$.  If a mechanism factor is not separately calibrated, the corresponding term is set to $1$ and Eq.~\eqref{eq:false-risk} falls back to $\bar q_{t,\mathrm{raw}}^\pi$.  The context-conditioned success term is
\begin{equation}
\label{eq:hazard}
\underline h_t^\pi(M)=\mathrm{LCB}_{\rm Binom}\!\left(s_t^{M,\pi},n_t^{M,\pi},\tfrac{\delta_{\rm raw}}{2T}\right),
\end{equation}
and the symbol $\alpha$ in Definition~\ref{def:stopped-process} denotes the false-admission \emph{risk budget}, distinct from the calibration confidence $\delta$.  The controller-level certificate is
\begin{equation}
\label{eq:mechanism-certificate}
C_T^{\rm ctrl}(\pi)=
\clip{\prod_{t=1}^{T}(1-\bar q_{t,\mathrm{ctrl}}^\pi)-
\prod_{t=1}^{T}(1-\underline h_t^\pi(M))}.
\end{equation}

\begin{theorem}[Simultaneously valid finite-manifest calibration]
\label{thm:finite-class-selection}
Let $\Pi_{\alpha}$ be a finite, pre-declared class of frozen controllers generated from the abstract action classes in Eq.~\eqref{eq:abstract-actions}; each controller maps certificate states to stop/refine decisions, model-controlled risk-probe calls, evidence intensity $m$, context intensity $k$, prompts, routing/deduplication rules, decoding parameters, a retrievable memory snapshot $M^\star$, and write mode $\omega_{\rm write}$.  Allocate the calibration failure probability $\delta$ across the joint family of UCB/LCB statements as
\[
\delta=\delta_{\mathrm{raw}}+\delta_{\mathrm{gate}}+\delta_{\mathrm{probe}}+\delta_{\mathrm{evid}},
\]
with the per-component per-step splitting of Section~\ref{sec:theory} applied for each $\pi\in\Pi_\alpha$ separately when estimating Eq.~\eqref{eq:qh-calibration}, Eq.~\eqref{eq:false-risk}, and Eq.~\eqref{eq:hazard}.  The calibrated selection rule is
\begin{equation}
\label{eq:controller-select}
\pi^\star\in\arg\max_{\pi\in\Pi_{\alpha}}
C_T^{\rm ctrl}(\pi).
\end{equation}
Then, by a single union bound over $\Pi_\alpha$, threshold grids, mechanism factors, and steps, with probability $\ge 1-\delta$ over calibration the UCBs/LCBs in Eq.~\eqref{eq:mechanism-certificate} are simultaneously valid for every $\pi\in\Pi_\alpha$.  Consequently, any $\pi^\star$ selected by Eq.~\eqref{eq:controller-select}---and any pre-declared frozen $\pi$---inherits a post-selection-valid lower bound $C_T^{\rm ctrl}(\pi^\star)$; if its manifest includes a score-threshold admission rule, the threshold also satisfies Theorem~\ref{thm:risk-gate}.
\end{theorem}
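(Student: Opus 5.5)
The plan is a union bound over a finite family of events, followed by a deterministic argument on the good event. First fix any $\pi\in\Pi_\alpha$ and $t\le T$ and list the elementary statements: (i) $q_t^\pi\le \bar q_{t,\mathrm{raw}}^\pi$; (ii) $h_t^\pi(M)\ge \underline h_t^\pi(M)$; (iii) the $|\Theta_t|$ threshold statements $\Pr(F_t=1\mid A_t, r_t\le\theta)\le \bar R_t(\theta)$ from Theorem~\ref{thm:risk-gate}; (iv) the probe-survival UCB $\bar\rho_t^{\mathrm{probe},\pi}$ from Prop.~\ref{prop:probe-odds}; (v) the evidence-gate bound $\bar b_t^\pi(m)$ underlying $\exp[-\mathcal I_t^{\mathrm{evid},\pi}]$ from Prop.~\ref{prop:evidence-acquire}.

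\textbf{Per-statement validity.} Under Assumption~\ref{assump:trace-calibration}, each count is binomial given its active denominator. Independence across problem trajectories at fixed $t$ suffices for this; no independence across steps is needed. Exact Clopper--Pearson coverage then bounds the failure probabilities by $\delta_{\mathrm{raw}}/(2T)$, $\delta_{\mathrm{raw}}/(2T)$, $\delta_{\mathrm{gate}}/(T|\Theta_t|)$, $\delta_{\mathrm{probe}}/T$ and $\delta_{\mathrm{evid}}/T$, respectively. The conventions $\mathrm{UCB}(0,0,\cdot)=1$ and ``uncalibrated factor $:=1$'' make the degenerate cases trivially valid. Summing over $t\le T$ and the grid gives at most $\delta$ per controller, as stated in the paper.

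\textbf{Union bound over $\Pi_\alpha$.} To make the result simultaneous over the class, either replace each budget by its value divided by $|\Pi_\alpha|$, or read the per-$\pi$ allocation as already dividing $\delta$ across $\Pi_\alpha$. This is the step I expect to be the main obstacle, because the statement says the splitting is ``applied for each $\pi$ separately''. I would make explicit that the single union bound requires $\sum_{\pi}\delta(\pi)\le\delta$, for example $\delta(\pi)=\delta/|\Pi_\alpha|$. Finiteness and pre-declaration (before $D_{\rm cal}$ is seen) ensure that the family of events is fixed in advance, so the union bound is legitimate. On the resulting good event $\mathcal G$, with $\Pr(\mathcal G)\ge1-\delta$, all listed statements hold for every $\pi$ and $t$ at once.

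\textbf{Deterministic consequences on $\mathcal G$.}
\begin{itemize}
\item The chain-rule display before Eq.~\eqref{eq:false-risk} gives $q_t^\pi\le\bar q_{t,\mathrm{mech}}^\pi$. Together with (i), this yields $q_t^\pi\le \bar q_{t,\mathrm{ctrl}}^\pi$.
\item The certified probability is bounded by a product-form inequality:
\[
\Pr(\tau_S\le T,\tau_S<\tau_F)\ \ge\ \prod_t(1-q_t^\pi)-\prod_t(1-h_t^\pi).
\]
It follows since $\prod_t(1-q_t)$ lower-bounds the probability of no false admission through $T$, $\prod_t(1-h_t)$ upper-bounds the probability of no clean success, and the event of no false admission minus the event of never succeeding is contained in the certified event. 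This uses that $S_t$ and $F_t$ are disjoint and that the hazards are conditional on $A_t$.
\item Substituting $q_t^\pi\le\bar q_{t,\mathrm{ctrl}}^\pi$ and $h_t^\pi\ge\underline h_t^\pi$ uses monotonicity of each product in its factors. Clipping at $0$ preserves the bound, so the certified probability is at least $C_T^{\rm ctrl}(\pi)$ for all $\pi$ simultaneously.
\end{itemize}

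\textbf{Post-selection and the threshold.} Because the inequality holds for every $\pi$ on $\mathcal G$, it holds in particular for the data-dependent $\pi^\star$ of Eq.~\eqref{eq:controller-select}, or for any frozen $\pi$; no additional cost is paid for selection. The threshold claim follows because event (iii) is part of $\mathcal G$ and Theorem~\ref{thm:risk-gate}'s argument applies verbatim to the selected $\hat\theta_{t,\alpha}$.
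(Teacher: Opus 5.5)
Your overall architecture matches the paper's proof (Section~8 of Appendix~\ref{app:proofs}). It uses exact Clopper--Pearson coverage with a random active denominator, divides every per-row level by $|\Pi_\alpha|$ (which is how the paper resolves the ``for each $\pi$ separately'' wording), forms one good event $G$ with $\Pr(G)\ge 1-\delta$, derives deterministic consequences on $G$, and gets selection at no extra cost. However, your inline derivation of the product-form certificate rests on a false claim. $\prod_t(1-h_t^\pi)$ does \emph{not} upper-bound the probability of no clean success, because a false admission also ends a trajectory without a clean success. For example, take $T=2$, $h_1=0$, $q_1=1/2$, $h_2=1$, $q_2=0$. Then $\Pr(\tau_F>2)=\Pr(\tau_S>2)=1/2$, while $\prod_t(1-h_t)=0$. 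So your chain $\Pr(\tau_F>T)-\Pr(\tau_S>T)\ge\prod_t(1-q_t)-\prod_t(1-h_t)$ fails here ($0$ versus $1/2$). The hazards conditional on $A_t$ only control the \emph{still-active} event. Since $A_{t+1}\subseteq A_t\cap\{S_t=0\}$, you get $\Pr(A_{T+1})\le\prod_t(1-h_t)$. You must then subtract the intersection, using $\Pr(\tau_S\le T<\tau_F)=\Pr(\tau_F>T)-\Pr(A_{T+1})$ with $A_{T+1}=\{\tau_S>T,\tau_F>T\}$, not $\Pr(\tau_S>T)$. This is exactly Steps~2--3 of Corollary~\ref{thm:structural-certificate}, which the paper's proof simply invokes on $G$; citing it repairs this step.

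The second problem is the mechanism bound. Your first bullet takes $q_t^\pi\le\bar q_{t,\mathrm{mech}}^\pi$ from the main-text chain-rule display, and item~(v) treats $\exp[-\mathcal I_t^{\mathrm{evid},\pi}]$ as the quantity certified by Proposition~\ref{prop:evidence-acquire}. That proposition only gives $\Pr_\pi(W_t^{\mathrm{evid}}=1\mid B_t^\epsilon,W_t^{\mathrm{probe}}=1,A_t)\le\bar b_t^\pi(m)=\exp[-\Delta I_t(m)]$. The main text defines $\mathcal I_t^{\mathrm{evid},\pi}=\sum_{s\le t}\Delta I_s(m_s)$, under which $\exp[-\mathcal I_t^{\mathrm{evid},\pi}]\le\exp[-\Delta I_t(m)]$. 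So the bound you need fails for $t\ge 2$: earlier suites tested different candidates, and no cross-step sequential survival was calibrated. The paper's proof (Sections~4.2 and~5 of Appendix~\ref{app:proofs}) fixes this in two ways. First, $\mathcal I_t^{\mathrm{evid},\pi}$ sums only over evidence channels acting on the step-$t$ admission decision, each calibrated on its sequential conditional subset at level $\delta_{\mathrm{evid}}/(T J_{\max}|\Pi_\alpha|)$; with the single TA channel this gives $\mathcal I_t^{\mathrm{evid},\pi}=\Delta I_t(m)$. Second, the gate semantics $\Gamma_t=1\Rightarrow W_t^{\mathrm{probe}}=W_t^{\mathrm{evid}}=1$ on $A_t$ is made an explicit manifest hypothesis. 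Without both, only $\bar q_{t,\mathrm{raw}}^\pi$ is a valid UCB, and your good event does not cover the mechanism half of $\bar q_{t,\mathrm{ctrl}}^\pi$.
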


\paragraph{Numerical instantiation.} Before $D_{\rm cal}$ opens we fix $\delta{=}0.10$ split equally as $\delta_{\mathrm{raw}}{=}\delta_{\mathrm{gate}}{=}\delta_{\mathrm{probe}}{=}\delta_{\mathrm{evid}}{=}0.025$, $|\Theta_t|{=}1$ (two-stage realization in \S\ref{sec:method}), $|\Pi_\alpha|{=}5$ (Table~\ref{tab:calibrated_certificate}), $T{=}5$ for ICPC Refine@5 (or $T{=}1$ for LCB-Pro Pass@1), and per-step $\alpha_t{=}0.10$.  The per-step per-controller Clopper--Pearson~\cite{clopper1934use} allocations for Eqs.~\eqref{eq:qh-calibration}--\eqref{eq:hazard}, Theorem~\ref{thm:risk-gate}, and Propositions~\ref{prop:probe-odds}--\ref{prop:evidence-acquire} are, respectively,
{\small
\[
\tfrac{\delta_{\mathrm{raw}}}{2T|\Pi_\alpha|},\ \tfrac{\delta_{\mathrm{gate}}}{T|\Theta_t||\Pi_\alpha|},\ \tfrac{\delta_{\mathrm{probe}}}{T|\Pi_\alpha|},\ \tfrac{\delta_{\mathrm{evid}}}{T|\Pi_\alpha|},
\]
}%
with $\mathrm{UCB}_{\rm Binom}(0,0,\eta){=}1$ and $\mathrm{LCB}_{\rm Binom}(0,0,\eta){=}0$ for empty cells.  Concretely, the raw per-row confidence on the Full controller at any single $t$ is $0.025/(2{\cdot}5{\cdot}5){=}5{\times}10^{-4}$.

\begin{proposition}[RiskProbe bad-and-survive UCB]
\label{prop:probe-odds}
Let $W_t^{\mathrm{probe}}$ be a $\{0,1\}$-valued probe-survival indicator at step $t$, set to $1$ when the probe is not invoked.  On held-out calibration traces, let $n_t^{\pi}$ be the number of trajectories active at step $t$ under controller $\pi$ (so $A_t$ holds), and let $f_t^{\mathrm{probe},\pi}$ count those for which the hidden-judge label $B_t^\epsilon$ holds and the recorded $W_t^{\mathrm{probe}}=1$.  Define
\[
\bar\rho_t^{\mathrm{probe},\pi}:=\mathrm{UCB}_{\rm Binom}\!\left(f_t^{\mathrm{probe},\pi},\,n_t^{\pi},\,\delta_{\mathrm{probe}}/T\right).
\]
Then $\bar\rho_t^{\mathrm{probe},\pi}$ is a valid one-sided UCB on $\Pr_\pi(B_t^\epsilon,W_t^{\mathrm{probe}}{=}1\mid A_t)$ for $t\le T$ with confidence $1-\delta_{\mathrm{probe}}$, factoring as $\sigma_t^{\mathrm{bad},\pi}\rho_t^{\pi}$ with $\sigma_t^{\mathrm{bad},\pi}:=\Pr_\pi(W_t^{\mathrm{probe}}{=}1\mid B_t^\epsilon,A_t)$ and $\rho_t^{\pi}:=\Pr_\pi(B_t^\epsilon\mid A_t)$, so the probe contracts bad-and-survive mass whenever bad candidates survive at a strictly lower rate than the prior.
\end{proposition}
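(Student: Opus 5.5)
The plan is to reduce the statement to the exact coverage of a one-sided Clopper--Pearson bound, applied to a Bernoulli indicator defined per active trajectory, and then use a union bound over steps.

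\emph{Step 1: define the indicator.} Fix $\pi$ and $t\le T$. For each calibration trajectory that is active at step $t$, set
\[
Y_t^{(j)}:=\ind\{B_t^\epsilon,\ W_t^{\mathrm{probe}}=1\}.
\]
The convention $W_t^{\mathrm{probe}}=1$ when the probe is not invoked makes $Y_t^{(j)}$ well defined on every active row. It is also recorded from logged quantities: the hidden-judge label and the probe outcome.

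\emph{Step 2: conditional binomial law.} By Assumption~\ref{assump:trace-calibration}, calibration trajectories are i.i.d.\ draws from the deployment distribution under the frozen $\pi$. Activity $A_t$ is a per-trajectory event, so conditional on the set of active indices (hence on $n_t^\pi$), the $Y_t^{(j)}$ are i.i.d.\ Bernoulli with mean $p_t:=\Pr_\pi(B_t^\epsilon,W_t^{\mathrm{probe}}=1\mid A_t)$. Therefore $f_t^{\mathrm{probe},\pi}\mid n_t^\pi\sim\mathrm{Binom}(n_t^\pi,p_t)$. Only cross-trajectory independence at the fixed step $t$ is used, which is exactly what the assumption grants.

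\emph{Step 3: exact coverage, then union bound.} The exact one-sided Clopper--Pearson UCB at level $\eta=\delta_{\mathrm{probe}}/T$ satisfies $\Pr(p_t>\mathrm{UCB}_{\rm Binom}(f,n,\eta)\mid n)\le\eta$ for every $n$. For $n=0$ the convention $\mathrm{UCB}=1$ holds trivially. Integrating over $n_t^\pi$ removes the conditioning, and a union bound over $t\le T$ gives simultaneous validity with confidence $1-\delta_{\mathrm{probe}}$. If the bound is also needed uniformly over $\Pi_\alpha$, as in Theorem~\ref{thm:finite-class-selection}, the level is further divided by $|\Pi_\alpha|$.

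\emph{Step 4: the factorization.} This is pure chain rule:
\[
\Pr_\pi(B,W=1\mid A_t)=\Pr_\pi(W=1\mid B,A_t)\Pr_\pi(B\mid A_t)=\sigma_t^{\mathrm{bad},\pi}\rho_t^\pi,
\]
with the paper's convention for zero-probability conditioning when $\rho_t^\pi=0$. Since $\sigma_t^{\mathrm{bad},\pi}\le 1$, the bad-and-survive mass never exceeds $\rho_t^\pi$. It is strictly contracted relative to the unprobed prior exactly when $\sigma_t^{\mathrm{bad},\pi}<1$, i.e.\ when bad candidates are rejected with positive probability. I read ``survive at a strictly lower rate than the prior'' as this condition. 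A comparison with the good-candidate survival rate is an odds-ratio refinement, and I would note it but not rely on it.

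The main obstacle I expect is justifying the conditional binomial law in Step 2 when activity depends on earlier steps of the same trajectory. The point to make precise is that conditioning on the random set of active trajectories preserves the i.i.d.\ structure across trajectories. This holds because each trajectory's activity and its $Y_t$ are functions of that trajectory alone, so it follows from independence across trajectories combined with identical distribution.
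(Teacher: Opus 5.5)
Your proposal is correct and follows the paper's proof essentially step for step: a Bernoulli indicator on active rows, i.i.d.-ness from Assumption~\ref{assump:trace-calibration} with the random active denominator handled by conditioning and then mixing, an exact Clopper--Pearson bound at level $\delta_{\mathrm{probe}}/T$ with a union bound over $t\le T$, and the chain rule for $\sigma_t^{\mathrm{bad},\pi}\rho_t^{\pi}$. Your reading of the contraction clause as $\sigma_t^{\mathrm{bad},\pi}<1$, with the comparison to the good-candidate survival rate treated as an odds refinement, also matches the paper's closing remark.
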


\begin{proposition}[Candidate-level evidence from EvidenceAcquire]
\label{prop:evidence-acquire}
Let $W_t^{\mathrm{evid}}\in\{0,1\}$ denote the program-level evidence-gate survival at step $t$ (recorded as $1$ when the gate is not invoked).  On held-out calibration traces calibrated under the chain-rule conditional, let $\bar b_t^\pi(m)$ be a one-sided UCB on $\Pr_\pi(W_t^{\mathrm{evid}}=1\mid B_t^\epsilon,W_t^{\mathrm{probe}}=1,A_t)$ at level $\delta_{\mathrm{evid}}/T$ from active rows that are simultaneously bad and probe-surviving.  Then setting $\Delta I_t(m):=-\log \bar b_t^\pi(m)$ and $\mathcal I_t^{\mathrm{evid},\pi}:=\sum_{s\le t}\Delta I_s(m_s)$ yields, simultaneously for $t\le T$ with confidence $1-\delta_{\mathrm{evid}}$,
\[
\Pr_\pi\bigl(W_t^{\mathrm{evid}}=1\,\big|\,B_t^\epsilon,W_t^{\mathrm{probe}}=1,A_t\bigr)\le\exp\!\bigl[-\Delta I_t(m)\bigr].
\]
The bound is suite-level: $\bar b_t^\pi(m)$ is a Bernoulli rate over the consensus-filtered suite, with no independence required across the $m$ tests.
\end{proposition}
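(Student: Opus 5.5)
The claim follows from exact Clopper--Pearson coverage applied to a binomial count, together with a union bound over steps and a monotonicity step that converts the UCB into the exponential form. Throughout, fix $\pi$ and a step $t$.

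\textbf{Step 1: the count is binomial.} Consider the held-out calibration rows active at $t$ under $\pi$ with $B_t^\epsilon$ and $W_t^{\mathrm{probe}}=1$. Call their number $n_t^{\mathrm{bp},\pi}$, and let $g_t^\pi$ be the number of these rows with $W_t^{\mathrm{evid}}=1$. By Assumption~\ref{assump:trace-calibration}, calibration trajectories are i.i.d.\ draws of the same deployment process. Hence, conditional on the set of rows that are active, bad, and probe-surviving at $t$, the indicators $W_t^{\mathrm{evid}}$ on those rows are i.i.d.\ Bernoulli with parameter
\[
p_t:=\Pr_\pi(W_t^{\mathrm{evid}}=1\mid B_t^\epsilon,W_t^{\mathrm{probe}}=1,A_t).
\]
To justify this I would condition on the vector of row-wise indicators $\ind\{A_t,B_t^\epsilon,W_t^{\mathrm{probe}}=1\}$. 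Because trajectories are independent, conditioning on one row's membership event leaves the other rows' outcomes unaffected. The main subtlety is that $W_t^{\mathrm{evid}}$ is a single suite-level pass/fail per row. The $m$ tests may therefore be arbitrarily dependent within a row, and the only independence used is across rows. This is exactly why the statement calls the bound suite-level.

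\textbf{Step 2: coverage, then a union bound over steps.} The exact one-sided Clopper--Pearson bound gives
\[
\Pr\bigl(p_t>\bar b_t^\pi(m)\,\big|\,n_t^{\mathrm{bp},\pi}\bigr)\le \delta_{\mathrm{evid}}/T .
\]
When $n_t^{\mathrm{bp},\pi}=0$, the convention $\mathrm{UCB}_{\rm Binom}(0,0,\eta)=1$ makes the statement trivially true, and so does the case where the conditioning event has zero probability. Integrating out the conditioning yields unconditional coverage. A union bound over $t\le T$ then gives simultaneous validity with probability at least $1-\delta_{\mathrm{evid}}$. Within the larger family, the same argument runs with level $\delta_{\mathrm{evid}}/(T|\Pi_\alpha|)$, as allocated in Theorem~\ref{thm:finite-class-selection}.

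\textbf{Step 3: conversion to the exponential form.} On the good event, set $\Delta I_t(m):=-\log\bar b_t^\pi(m)\in[0,\infty]$, with $\bar b=0$ impossible for exact upper bounds except in the degenerate case $\log 0$, read as $+\infty$. Then $\exp[-\Delta I_t(m)]=\bar b_t^\pi(m)\ge p_t$, which is the displayed inequality. The cumulative quantity $\mathcal I_t^{\mathrm{evid},\pi}$ is simply the running sum that Eq.~\eqref{eq:false-risk} consumes. Since the displayed claim concerns only the per-step $\Delta I_t$, no cross-step independence is needed, which matches the clause of Assumption~\ref{assump:trace-calibration} that drops such independence.

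\textbf{Main obstacle.} I expect Step 1 to be the hard part, because conditioning on a data-dependent subpopulation can break the binomial structure. I would handle it through the row-wise conditioning argument above, with exchangeability supplying independence across rows. I would also note explicitly that $m$ and the consensus filter are frozen in the manifest before $D_{\rm cal}$ opens, so the suite and its aggregation rule $\gamma_{\rm evid}$ do not depend on the calibration outcomes.
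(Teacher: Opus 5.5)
Your proposal is correct and follows the paper's proof essentially step for step. Like the paper, you restrict to active, bad, probe-surviving calibration rows, treat the suite-level all-pass indicator as a Bernoulli with a random denominator handled by conditioning, apply Clopper--Pearson at level $\delta_{\mathrm{evid}}/T$ with the empty-cell convention, union-bound over $t$, and read off $\exp[-\Delta I_t(m)]=\bar b_t^\pi(m)$.

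One aside needs softening: the paper's appendix states that evidence may accumulate only over channels acting on the same step-$t$ admission, not over refinement steps $s\le t$ (for CP-Agent it reduces to $\Delta I_t(m)$). So this proposition does not license feeding a cross-step running sum $\mathcal I_t^{\mathrm{evid},\pi}$ into Eq.~\eqref{eq:false-risk}.
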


\begin{proposition}[ContextAcquire as calibrated hazard envelope]
\label{prop:experience-hazard}
Under the split $D_{\rm hist}^{\rm mem}\to D_{\rm cal}\to D_{\rm test}$, a $\textsf{ContextAcquire}(k)$ action with frozen $M^\star$ realizes Eq.~\eqref{eq:context-action} by assigning $\underline h_{t+}\leftarrow\underline h_t^\pi(M^\star)$, with $\underline h_t^\pi(M^\star)$ the LCB in Eq.~\eqref{eq:hazard} estimated on traces using $M^\star$. The increment $\Delta h_t(k,Z_t):=\underline h_t^\pi(M^\star)-\underline h_t^{\pi,\varnothing}$ is the snapshot-on/off LCB gap, reported as a deployment diagnostic. Both the controller-level certificate Eq.~\eqref{eq:mechanism-certificate} and the generic certificate Eq.~\eqref{eq:main-certificate} use only $\underline h_t^\pi(M^\star)$ as the LCB factor, read off after $M^\star$ is frozen.
\end{proposition}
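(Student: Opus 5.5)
The statement to prove is Proposition~\ref{prop:experience-hazard}. It is essentially definitional plus a validity claim, so the plan is to verify three things in order.

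\textbf{Step 1: well-definedness of the LCB on $D_{\rm cal}$.} Because $M^\star$ is built only from $D_{\rm hist}^{\rm mem}$ and is frozen (read-only, $\omega_{\rm write}=\textsf{audit-only}$) before $D_{\rm cal}$ opens, the snapshot is a deterministic function of data independent of $D_{\rm cal}$ and $D_{\rm test}$. Hence conditioning on $M^\star$ leaves Assumption~\ref{assump:trace-calibration} intact for calibration trajectories run under $(\pi,M^\star)$.

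Consequently, conditional on $n_t^{M^\star,\pi}$, the count $s_t^{M^\star,\pi}$ of clean successes among active rows is $\mathrm{Binomial}(n_t^{M^\star,\pi},h_t^\pi(M^\star))$, where $h_t^\pi(M^\star)=\Pr_\pi(S_t{=}1\mid A_t)$ under deployment with $M^\star$. The exact Clopper--Pearson lower bound in Eq.~\eqref{eq:hazard} is then valid at level $\delta_{\rm raw}/(2T)$. The main subtlety is the random, data-dependent denominator $n_t$. I would handle it by conditioning on the active set: within-trajectory dependence across steps is irrelevant because each $t$ uses only cross-trajectory independence. Clopper--Pearson coverage holds conditionally for every value of $n_t$, hence unconditionally, with the empty-cell convention $\mathrm{LCB}=0$ trivially valid.

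\textbf{Step 2: realizing Eq.~\eqref{eq:context-action}.} Setting $\underline h_{t+}:=\underline h_t^\pi(M^\star)$ and $\Delta h_t:=\underline h_t^\pi(M^\star)-\underline h_t^{\pi,\varnothing}$ makes Eq.~\eqref{eq:context-action} hold with equality relative to the snapshot-off envelope $\underline h_t=\underline h_t^{\pi,\varnothing}$. This step is algebraic. I would note that $\Delta h_t$ may be negative, which is why it is labeled a diagnostic rather than a guaranteed improvement.

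\textbf{Step 3: use in the certificates.} The off-snapshot LCB never enters Eq.~\eqref{eq:mechanism-certificate}; only $\underline h_t^\pi(M^\star)$ does. So the joint validity of the certificate needs only the $\delta_{\rm raw}/(2T)$ allocation per step (and per $\pi$ when a manifest is involved), already covered by the union bound in Theorem~\ref{thm:finite-class-selection}. No extra confidence budget is consumed by the diagnostic gap, provided it is not used for selection.

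\textbf{Main obstacle.} I expect the main obstacle to be the freezing and leakage argument in Step 1: ruling out that evaluation-time writes, or selection over memory variants, make $M^\star$ depend on $D_{\rm cal}$. I would discharge it by appealing to the audit-only write mode in Definition~\ref{def:controller-manifest} and by treating any memory variants as distinct entries of the finite manifest $\Pi_\alpha$.
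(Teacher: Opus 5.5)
Your proposal is correct and follows essentially the same route as the paper's proof. Freezing $M^\star$ on $D_{\rm hist}^{\rm mem}$ in audit-only mode, with any memory variants absorbed into the finite-manifest union bound, keeps the active calibration rows i.i.d. The Clopper--Pearson LCB with a random active denominator then gives $h_t^\pi(M^\star)\ge\underline h_t^\pi(M^\star)$, and both certificates use this LCB directly rather than the on/off gap. Your explicit check that Eq.~\eqref{eq:context-action} holds with equality relative to $\underline h_t^{\pi,\varnothing}$, and that $\Delta h_t$ may be negative, is a harmless elaboration beyond what the paper spells out.
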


\begin{corollary}[Stopped clean-before-false certificate]
\label{thm:structural-certificate}
Under Assumption~\ref{assump:trace-calibration}, suppose a frozen controller $\pi$ has simultaneous active-step bounds $q_t^\pi\le\bar q_t^\pi$ and $h_t^\pi\ge\underline h_t^\pi$ for all $t$.  Then, with probability at least $1-\delta$ over calibration, a fresh evaluation problem satisfies
\begin{equation}
\label{eq:main-certificate}
\Pr_\pi(\tau_S\le T,\tau_S<\tau_F)
\ge
C_T(\pi):=
\clip{\prod_{t=1}^{T}(1-\bar q_t^\pi)-\prod_{t=1}^{T}(1-\underline h_t^\pi)}.
\end{equation}
\end{corollary}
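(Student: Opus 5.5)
The argument is deterministic once the hazard bounds hold, with the calibration probability entering only through that event. First I condition on the calibration event $\mathcal E$ on which the simultaneous bounds $q_t^\pi\le\bar q_t^\pi$ and $h_t^\pi\ge\underline h_t^\pi$ hold for all $t\le T$. By the union-bound allocation in the proof of Theorem~\ref{thm:finite-class-selection}, or directly from Eq.~\eqref{eq:qh-calibration} with $\delta/(2T)$ per statement, $\mathcal E$ has calibration probability at least $1-\delta$. Assumption~\ref{assump:trace-calibration} ensures these deployment hazards refer to the same law that governs a fresh evaluation problem. On $\mathcal E$ the claim becomes a purely probabilistic inequality about the stopped process, and that inequality is what I prove.

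Second, I decompose the target event. Let $E_S=\{\tau_S\le T,\tau_S<\tau_F\}$. Because $S_t$ and $F_t$ both require $\Gamma_t=1$ on $A_t$, and are separated only by $B_t^\epsilon$ versus $\neg B_t^\epsilon$, at most one of them can fire at a given step. Hence $\tau_S\neq\tau_F$ whenever either is finite. I will write
\[
\Pr(E_S)=\Pr(\tau_S\le T)-\Pr(\tau_F<\tau_S\le T)\ge \Pr(\tau_S\wedge\tau_F\le T)-\Pr(\tau_F\le T).
\]
The inequality holds because $\{\tau_S\wedge\tau_F\le T\}\subseteq E_S\cup\{\tau_F\le T\}$.

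Third, I bound each term by a telescoping product over active states. Survival to step $t$ without any stop is exactly $A_{t+1}$, and $\Pr(A_{t+1}\mid A_t)=1-q_t^\pi-h_t^\pi\le 1-h_t^\pi$. Therefore $\Pr(\tau_S\wedge\tau_F>T)=\prod_t\Pr(A_{t+1}\mid A_t)\le\prod_t(1-\underline h_t^\pi)$, which gives $\Pr(\tau_S\wedge\tau_F\le T)\ge 1-\prod_t(1-\underline h_t^\pi)$.

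For the false-admission term I need $\Pr(\tau_F\le T)\le 1-\prod_t(1-\bar q_t^\pi)$. Here $\Pr(\tau_F\le T)=\sum_t q_t^\pi\Pr(A_t)$. Since $\Pr(A_t)\le\prod_{s<t}(1-q_s^\pi)$, each summand is at most $q_t^\pi\prod_{s<t}(1-q_s^\pi)$. The sum $\sum_t q_t\prod_{s<t}(1-q_s)$ equals $1-\prod_t(1-q_t)$ and is monotone increasing in each $q_t$, so replacing $q_t^\pi$ by $\bar q_t^\pi$ gives the bound. Combining the two estimates yields $\Pr(E_S)\ge\prod_t(1-\bar q_t^\pi)-\prod_t(1-\underline h_t^\pi)$. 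Since a probability is nonnegative, I can take the positive part, which is $C_T(\pi)$.

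The main obstacle is the second inequality in the $\tau_F$ bound, namely $\Pr(A_t)\le\prod_{s<t}(1-q_s^\pi)$. It uses only that $A_{s+1}\subseteq A_s\cap\{F_s=0\}$, but I must check that the hazards are genuinely conditional on $A_t$ alone and not on finer histories, since the within-trajectory dependence allowed by Assumption~\ref{assump:trace-calibration} could otherwise break the product. The monotonicity of $1-\prod(1-q_t)$ in each $q_t$ is also needed, and I will verify it by the telescoping identity. I would handle both points by working with the marginal active-state hazards exactly as defined. The chain rule $\Pr(A_{t+1})=\Pr(A_t)\Pr(A_{t+1}\mid A_t)$ holds without any independence, which avoids this issue.
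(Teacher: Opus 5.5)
Your proof is correct and shares the paper's overall structure. You lower-bound $\Pr(\tau_F>T)$ by $\prod_t(1-\bar q_t^\pi)$. You upper-bound the still-active mass $\Pr(A_{T+1})$ by $\prod_t(1-\underline h_t^\pi)$ via the chain rule and $\Pr(A_{t+1}\mid A_t)\le 1-h_t^\pi$; this step is identical to the paper's. Then you subtract. The proof differs from the paper's in two places.

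\textbf{The decomposition.} The paper proves the exact disjoint identity $\{\tau_F>T\}=\{\tau_S\le T,\tau_S<\tau_F\}\,\dot\cup\,A_{T+1}$. To get the reverse containment it invokes the stopped-process convention and the within-step exclusivity of $S_t,F_t$. Your pathwise inclusion $\{\tau_S\wedge\tau_F\le T\}\subseteq E_S\cup\{\tau_F\le T\}$ gives only the one-sided inequality $\Pr(E_S)\ge\Pr(\tau_F>T)-\Pr(A_{T+1})$. That is all the certificate needs, and it uses none of that structure. Your exclusivity remark and the intermediate equality are therefore never actually used.

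\textbf{No-false survival.} The paper conditions on $\{\tau_F>t-1\}$ and uses the nesting $\{F_t=1\}\subseteq A_t\subseteq\{\tau_F>t-1\}$ to get $\Pr(F_t=1\mid\tau_F>t-1)\le q_t^\pi$. The product then falls out of the chain rule factor by factor. You instead write the first-passage sum $\Pr(\tau_F\le T)=\sum_t q_t^\pi\Pr(A_t)$ and bound $\Pr(A_t)\le\prod_{s<t}(1-q_s^\pi)$. You then close with the telescoping identity and coordinatewise monotonicity of $1-\prod_t(1-q_t)$.

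The paper's route is shorter and needs no separate monotonicity step. Yours makes the discarded slack explicit: you drop $h_s^\pi$ from $\Pr(A_{s+1}\mid A_s)=1-q_s^\pi-h_s^\pi$, i.e.\ the mass removed by earlier clean successes. The paper discards exactly the same mass when it passes from $\{\tau_F>t-1\}$ to $A_t$, and both routes end at the identical bound.

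A minor nit: under the paper's convention that conditioning on a null event gives $0$, that equality fails when $\Pr(A_s)=0$. The inequalities $\Pr(A_{s+1}\mid A_s)\le 1-q_s^\pi$ and $\Pr(A_{s+1}\mid A_s)\le 1-h_s^\pi$ are what you actually use, and they still hold, so nothing breaks.
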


\begin{proof}
The product form follows from the chain rule over stopped events, not from step independence.  We assume the trajectory begins active, $\Pr(A_1)=1$, and adopt the convention that any conditional probability with a zero-probability conditioning event evaluates to $0$ (these conditioning events contribute $0$ to the bound).  We argue in three steps.

\textit{Step 1: no-false survival.} The events $\{\tau_F>t\}$ form a decreasing filtration with $\{\tau_F>t\}=\{\tau_F>t-1\}\cap\{F_t=0\}$. Since $F_t=1$ implies the trajectory is active at $t$, we have the chain $\{F_t=1\}\subseteq A_t\subseteq\{\tau_F>t-1\}$, and for events $E\subseteq B\subseteq C$ with $\Pr(B)>0$ the identities $\Pr(E\mid C)=\Pr(E)/\Pr(C)\le\Pr(E)/\Pr(B)=\Pr(E\mid B)$ give
\[
\Pr_\pi(F_t=1\mid\tau_F>t-1)\le \Pr_\pi(F_t=1\mid A_t)=q_t^\pi\le \bar q_t^\pi.
\]
Iterating the chain rule on $\Pr(\tau_F>T)=\prod_{t=1}^{T}\Pr(F_t=0\mid\tau_F>t-1)$ yields
\begin{equation}
\Pr_\pi(\tau_F>T)\ge\prod_{t=1}^{T}(1-\bar q_t^\pi).
\label{eq:cert-proof-1}
\end{equation}

\textit{Step 2: still-active upper bound.} The active event satisfies $A_{t+1}=A_t\cap\{S_t=0,F_t=0\}\subseteq A_t\cap\{S_t=0\}$, hence
\[
\Pr_\pi(A_{t+1}\mid A_t)\le \Pr_\pi(S_t=0\mid A_t)=1-h_t^\pi\le 1-\underline h_t^\pi.
\]
Iterating, using $A_{T+1}\subseteq A_T\subseteq\cdots\subseteq A_1$ and $\Pr(A_1)=1$,
\begin{equation}
\Pr_\pi(A_{T+1})=\prod_{t=1}^{T}\Pr_\pi(A_{t+1}\mid A_t)\le\prod_{t=1}^{T}(1-\underline h_t^\pi).
\label{eq:cert-proof-2}
\end{equation}

\textit{Step 3: clean-before-false event.} The certified event $\{\tau_S\le T,\tau_S<\tau_F\}$ is exactly the complement, inside $\{\tau_F>T\}$, of the still-active event $\{A_{T+1}\}$: trajectories with no false admission by $T$ either succeed by $T$ (clean stopping at some $\tau_S\le T<\tau_F$) or remain active at $T+1$. Therefore
\[
\Pr_\pi(\tau_S\le T,\tau_S<\tau_F)=\Pr_\pi(\tau_F>T)-\Pr_\pi(A_{T+1}).
\]
Combining \eqref{eq:cert-proof-1} and \eqref{eq:cert-proof-2}, and taking the positive part since probabilities are non-negative, yields Eq.~\eqref{eq:main-certificate}.  The decomposition includes the degenerate case $\tau_S=\tau_F=\infty$, which has $\tau_F>T$ and $\tau_S>T$ and therefore lies in $A_{T+1}$.  The validity of the per-step bounds $\bar q_t^\pi$ and $\underline h_t^\pi$ follows from Clopper--Pearson on independent active calibration rows at fixed $t$ (Assumption~\ref{assump:trace-calibration}); the union bound over $t$ is absorbed by the $\delta_{\mathrm{raw}}/(2T)$ allocation in Eq.~\eqref{eq:qh-calibration}.
\end{proof}

\begin{table}[t]
\centering
\footnotesize
\setlength{\tabcolsep}{3pt}
\renewcommand{\arraystretch}{1.05}
\begin{tabular}{@{}p{0.26\linewidth}p{0.32\linewidth}p{0.36\linewidth}@{}}
\toprule
Action class & Certificate effect & \model realization (control variable) \\
\midrule
$\textsf{Stop}$ & enforce calibrated risk gate / frozen observable gate & final candidate gate (admission threshold) \\
$\textsf{Refine}$ & move to next active candidate & solution generation / repair (horizon $T$, retry) \\
$\textsf{RiskProbe}$ & $\rho_t\!\mapsto\!\lambda_t\rho_t$ & Hypothesis Prober (model-decided) \\
$\textsf{EvidenceAcquire}(m)$ & $I_t\!\mapsto\!I_t+\Delta I_t(m)$ & Validator + Test Augmentation ($m$, $\gamma_{\rm evid}$) \\
$\textsf{ContextAcquire}(k)$ & $\underline h_t\!\mapsto\!\underline h_t+\Delta h_t(k)$ & Experience memory (budget $k$) \\
\bottomrule
\end{tabular}
\caption{Theory-to-system map. The theory derives abstract action classes from certificate quantities; \model instantiates each class with concrete tools and calibrated knobs.}
\label{tab:certificate-operationalization}
\end{table}

\begin{takeawaybox}
\textbf{(T1) Stopped-process state.} Feedback-driven CP solving compresses into three certificate quantities $(\rho_t,I_t,\underline h_t)$, with calibrated realizations $(\bar\rho_t^{\mathrm{probe},\pi},\mathcal I_t^{\mathrm{evid},\pi},\underline h_t^\pi(M))$ entering the certificate.
\textbf{(T2) One action per quantity.} Each quantity is moved by exactly one action class---RiskProbe on $\rho_t$, EvidenceAcquire on $I_t$, ContextAcquire on $\underline h_t$---which specifies what a useful tool must change.
\textbf{(T3) Audit target.} The calibration rule and stopped-process certificate select and audit frozen manifests on held-out traces; Section~\ref{sec:method} instantiates the three classes as Hypothesis Prober, Solution Validator / Test Augmentation, and frozen Experience retrieval.
\end{takeawaybox}
The detailed proofs of all theorems are provided in Appendix~\ref{app:proofs}.
\section{Method}
\label{sec:method}

\paragraph{Motivation \& Overview.}
\model is the instantiation of the selected controller $\pi^\star$: each abstract action class maps to a concrete tool (Table~\ref{tab:certificate-operationalization}).
All manifest entries---prompts, the observable admission gate, the HP and TA interfaces, the retrieval budget, the historical memory snapshot $M^\star$, the write mode, and the maximum horizon---are frozen before benchmark evaluation, either pre-declared or audited on held-out traces $D_{\rm cal}$ via Theorem~\ref{thm:finite-class-selection}.

\subsection{Dual-Granularity Verification}
\label{sec:verification}

$\textsf{RiskProbe}$ and $\textsf{EvidenceAcquire}(m)$ act at different granularities: the former screens a reasoning state before it is admitted, the latter gates a complete candidate before it is trusted.
\model realizes them as a Hypothesis Prober for the logged upstream gate $G_t^{\mathrm{HP}}$ and a Solution Validator for the program-level admission screen inside $\Gamma_t$.
Separating the two roles prevents the upstream screen from being double-counted as calibrated program-level evidence.

\paragraph{Hypothesis Prober.}
During early-stage reasoning, the agent often needs to verify local claims that are hard to establish purely in natural language---e.g., whether a recurrence holds on random small instances, whether a greedy rule fails on a constructed counterexample, or whether an invariant is preserved by a transition.
\model exposes a Hypothesis Prober that compiles and runs short C++ snippets for this purpose. Under the frozen prompt and decoding policy the backbone itself decides when to call HP from its observable reasoning state---no hand-written trigger grid---and every call/pass/fail outcome is logged for calibration.
A surviving HP call contracts the posterior odds of an upstream reasoning error (Proposition~\ref{prop:probe-odds}) and tightens the residual envelope $\bar\rho_t^{\mathrm{probe},\pi}$ entering Eq.~\eqref{eq:false-risk}; HP thus moves $\rho_t$ (and the aggregate $\bar q_t$) alone, not the program-level evidence channel.

\paragraph{Solution Validator.}
Once the agent produces a complete candidate implementation, the Solution Validator compiles and runs it against the local test suite: any public/local failure rejects the candidate and returns counterexample-level feedback.
By default SV is a deterministic hard screen and contributes no calibrated statistical evidence; only when the suite uses fresh sampled tests or an externally calibrated set is an all-pass event additionally folded into $\mathcal I_t$ as the binary survival channel of Proposition~\ref{prop:evidence-acquire}.

\subsection{Test Augmentation}
\label{sec:augmentation}

The action class $\textsf{EvidenceAcquire}(m)$ makes the admission gate more informative against public-test-passing errors and exposes the evidence-acquisition intensity as a frozen controller entry.
Public test suites $\mathcal{T}_{\text{basic}}$ are fixed and shallow, so passing them is a weak certificate: candidates routinely clear $\mathcal{T}_{\text{basic}}$ yet fail hidden tests in corner regimes.
Test Augmentation (TA) instantiates $\textsf{EvidenceAcquire}$ by appending consensus-filtered checks beyond $\mathcal{T}_{\text{basic}}$; an all-pass on the augmented suite enters $\mathcal I_t$ as the binary survival channel of~Proposition~\ref{prop:evidence-acquire}.

\paragraph{Augmentation pipeline.}

\model implements test augmentation via three stages.
\textbf{Input Synthesis.} Given the statement, the agent synthesizes a C++ input generator based on \texttt{testlib.h}\footnote{\url{https://github.com/MikeMirzayanov/testlib}} and command suites that sweep corner regimes such as min/max values, degenerate structures, adversarial patterns, and random sizes.
\textbf{Oracle Labeling.} Ignoring time and memory limits often makes it easier to write a correct brute-force solver than an efficient one~\cite{ridnik2024code,he2025hardtests}; \model therefore executes independently generated brute-force solvers on synthesized inputs to obtain candidate reference outputs.
\textbf{Consensus Filtering.} Because brute-force programs can still be imperfect, \model runs multiple solvers in parallel and majority-votes their outputs. Candidates are evaluated on augmented tests only after passing public tests, so augmentation serves as additional diagnostic evidence after the basic admission screen.
\textbf{Budget and retries.} The frozen TA manifest caps synthesis at 32 candidate augmented inputs per problem. This is an upper bound rather than a fixed suite size: the generator does not keep sampling until a nonempty suite is obtained, and there is no separate retention cap after filtering. Instead, \model retains all valid inputs that pass synthesis, compilation/execution, and consensus labeling within the cap; generator synthesis/compilation/execution may be retried up to five times. Consequently, GenRate measures the fraction of problems for which this capped protocol yields a nonempty suite, while the candidate-level evidence calibration treats the resulting suite as one binary survival channel.

\subsection{Experience-Driven Self-Evolving}
\label{sec:memory}
The action class $\textsf{ContextAcquire}(k)$ calls for a compact context prior $M$ that raises the active-state success hazard $\underline h_t(M)$ in Corollary~\ref{thm:structural-certificate}.  A naive instantiation---continuously updating $M$ across the entire problem stream---would let target-split outcomes leak into $M$ and invalidate the calibration premise of Theorem~\ref{thm:finite-class-selection}: the hazard lift would no longer be data-independent of the evaluation split.  \model resolves this with a temporal frozen-memory protocol,
\[
D_{\rm hist}^{\rm mem}\rightarrow D_{\rm cal}\rightarrow D_{\rm test}.
\]
In \emph{memory-building mode}, validated fail-to-pass traces from $D_{\rm hist}^{\rm mem}$ are written into a retrievable memory bank.  In \emph{certificate-evaluation mode}, the resulting snapshot $M^\star$ is frozen before $D_{\rm cal}$ and $D_{\rm test}$; both read $M^\star$ read-only, and target-split experiences are written only to the audit log, never retrieved by subsequent target problems.  Dataset instantiation of $D_{\rm hist}^{\rm mem}/D_{\rm cal}/D_{\rm test}$ is reported in \S\ref{sec:experiment}.
\paragraph{Memory-building stage.}
\emph{Notation.} On $D_{\rm hist}^{\rm mem}=(q_1,\ldots,q_n)$ we index problems by $r$ and within-problem attempts by $t$; the target-split index $i\in D_{\rm cal}\cup D_{\rm test}$ is disjoint from $r$.

The design question is what evidence each experience should carry so that the resulting prior can move the hazard but not the calibration premise.  We require each retrievable item to be a self-contained \emph{Fail$\to$Pass} triplet---a failed attempt, its observable failure evidence, and a later observable passing correction on the \emph{same} problem.  The triplet anchors every learnt pattern to an observable counterexample, so retrieval at evaluation time substitutes for an additional probe rather than supplying labels; and because it is built only on $D_{\rm hist}^{\rm mem}$, it leaves the active-row denominator $n_t^\pi$ on $D_{\rm cal}$ untouched.

Concretely, on problem $q_r$ the agent maintains a build-time memory state $M^{\rm build}_r$ distilled from validated fixes.  Candidate solutions are generated as
\begin{equation}
s_{r,t} = \mathrm{Gen}(q_r, M^{\rm build}_r, h_{r,t-1}),
\label{eq:gen_with_memory}
\end{equation}
where $h_{r,t-1}$ summarizes within-problem history.  The Solution Validator returns a pass indicator and public/local feedback,
\begin{equation}
(v_{r,t},\,f_{r,t}) = \Phi(q_r, s_{r,t}),\qquad v_{r,t}\in\{0,1\},
\label{eq:validator}
\end{equation}
with $f_{r,t}$ the actionable failure feedback (unused when $v_{r,t}{=}1$), matching the $(v,f)\gets\Phi(\cdot)$ call in Algorithm~\ref{alg:orchestration}.  For any failing attempt $t$, define the first later observable pass
\begin{equation}
t^+(r,t) := \min\{u>t \mid v_{r,u}=1\},
\label{eq:first_pass_after_fail}
\end{equation}
when it exists.  The extracted experience is
\begin{equation}
e_{r,t}=\mathrm{Exp}(q_r,s_{r,t},s_{r,t^+(r,t)},f_{r,t}),
\label{eq:experience_extraction}
\end{equation}
realising the Fail$\to$Pass triplet above as a reusable memory item.  The build-time memory update is
\begin{equation}
M^{\rm build}_{r+1}=\mathrm{Update}(M^{\rm build}_{r},e_{r,t}).
\label{eq:memory_update}
\end{equation}
Because both $s_{r,t^+(r,t)}$ and $f_{r,t}$ are observable on $D_{\rm hist}^{\rm mem}$, no hidden label from $D_{\rm cal}\cup D_{\rm test}$ enters $\mathrm{Exp}(\cdot)$, $\mathrm{Update}(\cdot)$, prompts, or routing.

\paragraph{Frozen certificate-evaluation stage.}
After historical memory construction, calibration chooses a snapshot $M^\star$ and controller manifest $\pi^\star$ using $D_{\rm cal}$ with memory fixed.  For every target calibration and test problem,
\begin{equation}
M_i^{\rm target}\equiv M^\star,\qquad i\in D_{\rm cal}\cup D_{\rm test}.
\label{eq:frozen_memory_eval}
\end{equation}
An extractable target-split experience is recorded only in an audit log $\widetilde M$ for optional post-benchmark deployment:
\begin{equation}
\widetilde M_{i+1}=\widetilde M_i\cup\{e_i\},\qquad
M_{i+1}^{\rm target}=M^\star.
\label{eq:audit_memory}
\end{equation}
Thus the calibrated certificate's memory prior is data-independent of the target evaluation outcomes---exactly the i.i.d.-given-controller premise that Theorem~\ref{thm:finite-class-selection} requires of the held-out rows.

\paragraph{Experience Curation.}
The hazard lift attributable to $M$ is bounded by the description length of the retrieved evidence (Proposition~\ref{prop:experience-hazard}), so curation is the part of the manifest that directly determines how much $\underline h_t$ can move.  We design three controllers, one per MDL component, and freeze all three into the manifest (Definition~\ref{def:controller-manifest}):

\emph{(i) Routing} restricts the active hypothesis bucket by splitting the snapshot into two repositories,
$M^\star := (\mathcal{M}^{\mathrm{gen}}, \mathcal{M}^{\mathrm{alg}})$, where $\mathcal{M}^{\mathrm{gen}}$ collects task-agnostic debugging patterns and $\mathcal{M}^{\mathrm{alg}}_{\tau_1,\tau_2}$ is indexed by a two-level taxonomy $(\tau_1,\tau_2)$ with 8 primary categories and 68 secondary tags.

\emph{(ii) LLM-mediated deduplication} merges redundant traces: on each insertion the curation prompt compares the candidate against existing summaries in the target bank and rejects entries judged duplicate or covered by an existing generalization, so retrieved memory carries non-redundant evidence rather than near-copies of the same pattern.

\emph{(iii) Capacity} controls the description length of retrieved memory by capping each bank, $|\mathcal{M}^{\mathrm{gen}}|\le 20$ and $|\mathcal{M}^{\mathrm{alg}}_{\tau_1,\tau_2}|\le K{=}10$; once a bank is at capacity the same semantic rule keeps at most the allowed number of representative experiences,
\[
\mathcal{B} \leftarrow \mathrm{Dedup}(\mathcal{B};K).
\]

\paragraph{Experience Utilization.}
At evaluation time the two repositories play distinct roles in the hazard.  The general bank supplies a \emph{task-agnostic prior shift} applied to every problem: a compact summary is appended to the system prompt,
\begin{align*}
\textsf{SysPrompt}_i = {} & \textsf{BaseSysPrompt} \\
& \Vert\ \mathrm{Format}(\mathcal{M}^{\mathrm{gen}}, k_{\text{gen}}),
\end{align*}
raising the baseline component of $\underline h_t(M^\star)$ uniformly across $q_i$.  The algorithmic bank supplies a \emph{task-conditional lift} that fires only when $q_i$ is routed to a populated bucket: \model predicts a taxonomy label and retrieves the corresponding frozen bucket,
\begin{subequations}\label{eq:alg_retrieval_pair}
\begin{align}
(\hat{\tau}_1,\hat{\tau}_2) &= \mathrm{ClassifyAlg}(q_i), \label{eq:alg_classify}\\
E_i^{\mathrm{alg}} &= \mathrm{RetrieveAlgExp}(\hat{\tau}_1,\hat{\tau}_2;\ell,M^\star), \label{eq:alg_retrieve}
\end{align}
\end{subequations}
contributing the residual, problem-specific portion of the lift.  Both pathways read $M^\star$ read-only, so the realized hazard $\underline h_t^\pi(M^\star)$ remains the calibrated quantity entering Corollary~\ref{thm:structural-certificate}, and the refinement-step-$1$ ($t{=}1$) acceptance gain in Table~\ref{tab:exp_steps} is its empirical instantiation.
\subsection{Tool Orchestration}
\label{sec:tool-orchestration}
\model runs a stage-wise refinement loop coordinating the three components. For each problem $q_i$, the system prompt is initialized with the general bank of $M^\star$ and, when helpful, retrieved algorithmic experiences $E_i^{\mathrm{alg}}$. The frozen backbone then drafts a solution and may call the Hypothesis Prober to falsify local hypotheses before committing; the Solution Validator compiles and runs each draft on public tests, whose feedback drives subsequent edits; once all public tests pass, Test Augmentation generates consensus-filtered cases. In deployment mode, attempts may be appended to retrievable memory via Eq.~\eqref{eq:memory_update}; in certificate-evaluation mode they are written only to the audit log $\widetilde M$ via Eq.~\eqref{eq:audit_memory}. The mapping to the abstract controller of Section~\ref{sec:theory} is direct: HP $\to\textsf{RiskProbe}$, SV/TA $\to\textsf{EvidenceAcquire}(m)$, Experience $\to\textsf{ContextAcquire}(k)$. Algorithm~\ref{alg:orchestration} gives the full loop, with an outer loop over the $T$ refinement steps of Def.~\ref{def:stopped-process} and an inner loop over within-step HP probes (the interaction steps of Fig.~\ref{fig:verification_tool_freq}), so a single refinement step---including $T{=}1$ on LCB-Pro---can host many HP calls before the candidate is committed.

\begin{table}[t]
\centering
\scriptsize
\resizebox{\columnwidth}{!}{%
\begin{tabular}{lccc}
\toprule
\multirow{2}{*}{Model} & LCB-Pro & \multicolumn{2}{c}{ICPC-Eval} \\
\cmidrule(lr){2-2}\cmidrule(lr){3-4}
& Pass@1 & Refine@5 & \#T \\
\midrule
\multicolumn{4}{l}{\textit{Frontier LLMs}} \\
DeepSeek-R1 & 39.5 & 14.4 & 2.1 \\
DeepSeek-V3.2-Reasoner & 58.1 & 33.9 & 1.6 \\
Gemini-3-pro & 71.9 & 48.3 & 1.4 \\
o4-mini-high & 59.3 & 42.4 & 1.2 \\
\midrule
\multicolumn{4}{l}{\textit{GPT-4o Based}} \\
GPT-4o & 5.4 & 3.4 & 1.8 \\
GPT-4o Reflexion & 9.6 & 7.7 & 2.4 \\
GPT-4o CodeSim & 8.4 & 5.1 & 1.5 \\
GPT-4o AlphaCodium & 5.4 & 5.9 & 2.7 \\
\rowcolor{cpagentblue} GPT-4o \model (ours) & 10.2 (+4.8) & 7.7 (+4.3) & 1.6 \\
\midrule
\multicolumn{4}{l}{\textit{Qwen3-235B-A22B-Instruct Based}} \\
Qwen3-235B-A22B-Instruct & 26.9 & 21.2 & 1.4 \\
Qwen3-235B-A22B Reflexion & 35.3 & 27.1 & 1.2 \\
Qwen3-235B-A22B CodeSim & 45.5 & 25.4 & 2.1 \\
Qwen3-235B-A22B AlphaCodium & 25.8 & 25.4 & 1.6 \\
\rowcolor{cpagentblue} Qwen3-235B-A22B \model (ours) & 46.7 (+19.8) & 26.3 (+5.1) & 1.6 \\
\midrule
\multicolumn{4}{l}{\textit{DeepSeek-V3.2-Chat Based}} \\
DeepSeek-V3.2-Chat & 25.8 & 22.9 & 1.2 \\
DeepSeek-V3.2-Chat Reflexion & 37.1 & 32.5 & 1.2 \\
DeepSeek-V3.2-Chat CodeSim & 40.1 & 29.7 & 1.2 \\
DeepSeek-V3.2-Chat AlphaCodium & 24.5 & 29.7 & 1.7 \\
\rowcolor{cpagentblue} DeepSeek-V3.2-Chat \model (ours) & 48.5 (+22.7) & 33.9 (+11.0) & 1.3 \\
\bottomrule
\end{tabular}
}
\caption{Performance of \model and strong baselines on competitive-programming benchmarks. LiveCodeBench Pro reports Pass@1 on the 167-problem 2025Q2 target split; ICPC-Eval reports Refine@5 and the average number of refinement turns \#T. Highlighted rows correspond to \model variants built upon the same backbone.}
\label{tab:mainresults}
\end{table}
\section{Experiment}
\label{sec:experiment}
\subsection{Experiment Setup}
\paragraph{Backbone LLMs}
To evaluate the general applicability of our method across different LLMs, we conducted experiments using three distinct backbone LLMs: GPT-4o~\cite{hurst2024gpt}, Qwen3-235B-A22B-Instruct~\cite{yang2025qwen3}, and DeepSeek-V3.2-Chat~\cite{liu2025deepseek} for all multi-agent workflows and \model variants.
\paragraph{Baselines}
We compare against two baseline groups: 
\textit{(i) Frontier LLMs:} DeepSeek-R1~\cite{guo2025deepseek}, DeepSeek-V3.2-Reasoner~\cite{liu2025deepseek}, Gemini-3-pro~\cite{pichai2025new}, and o4-mini-high~\cite{openai2025_o4mini_high}. 
\textit{(ii) Agent frameworks/workflows:} For a fair comparison, we evaluate three agent frameworks: Reflexion~\cite{shinn2023reflexion}, CodeSim~\cite{islam2025codesim}, and AlphaCodium~\cite{ridnik2024code}, each with the same three backbones, all targeting C++.

\paragraph{Evaluation datasets.}
We evaluate all baselines and \model variants on two competition-style, contamination-controlled benchmarks: LiveCodeBench Pro~\cite{zheng2026livecodebench} and ICPC-Eval~\cite{xu2026icpc}. The target split for LCB-Pro is the held-out 2025Q2 subset. The temporal partition that Experience uses on LCB-Pro and the controller transfer to ICPC-Eval are described in \S\ref{sec:calibration_protocol} (summarized in Table~\ref{tab:controller_choices}).

\paragraph{Metrics}
For ICPC-Eval, we use Refine@K~\cite{xu2026icpc} to measure iterative refinement under execution feedback, i.e., whether the model passes the hidden judge within $K$ response--revision attempts; we report Refine@5 ($K{=}5$) to reflect the interactive nature of competitive programming.
For LiveCodeBench Pro, we report Pass@1 to match its standard single-shot setting and prior work, and use per-refinement-step analyses ($t{=}1,2,3$) when studying the role of experience.

\paragraph{Implementations.}
The full frozen controller manifest---refinement horizon $T$, retrieval budgets $(k_{\rm gen},\ell_{\rm alg})$, TA intensity $m$ and consensus rule $\gamma_{\rm evid}$, decoding parameters, and prompt template---is given in Table~\ref{tab:controller_choices}. Baselines (Reflexion, CodeSim, AlphaCodium) use their official prompts and control flow with the same three backbones. HP, TA, and Experience are added only inside \model variants and never touch baselines.

\subsection{Calibration Protocol and Auditability}
\label{sec:calibration_protocol}
The controller is calibrated before final benchmark reporting and is never recomputed after observing evaluation outcomes. We use a temporal three-stage protocol,
\[
D_{\rm hist}^{\rm mem}\rightarrow D_{\rm cal}\rightarrow D_{\rm test}.
\]
First, the 387 historical pre-test problems in \texttt{biannual\_2024\_7\_12} construct candidate memory snapshots through the memory-building rule in Eq.~\eqref{eq:memory_update}. Second, for each candidate policy $\pi\in\Pi_\alpha$, we run the frozen agent with a fixed snapshot $M^\star$ on the disjoint 166-problem calibration split \texttt{quater\_2025\_1\_3}, record every active trace row, compute the one-sided bounds in Eq.~\eqref{eq:qh-calibration}, audit the observable admission gate using Eq.~\eqref{eq:theta-select}, and select the best declared manifest from $\Pi_\alpha$ by Eq.~\eqref{eq:controller-select}. Third, final benchmark runs on $D_{\rm test}$ use the selected controller in certificate-evaluation mode: the policy, memory snapshot, TA consensus threshold, HP tool interface, prompts, decoding parameters, maximum horizon, and write mode are frozen, and target-test experiences are written only to the audit log but not retrievable. For ICPC-Eval we transfer the LCB-Pro--selected controller and the same frozen snapshot $M^\star$ directly to the ICPC target split, under the identical certificate-evaluation mode.

Hidden-judge outcomes are used in calibration only as labels for $S_t$ and $F_t$ after a candidate has passed the observable gate $\Gamma_t$; they are not exposed to the agent during refinement and are never used for target-split memory writes, retrieval, prompt updates, routing, or controller selection. This separation is important: public/local execution can guide repair, whereas hidden outcomes define the calibration labels for the stopped-process certificate. A trace contributes to the denominator $n_t$ only if it is active at step $t$; a candidate that fails SV or TA contributes neither to $s_t$ nor to $f_t$ because it was not admitted. These conventions align the empirical hazards with Definition~\ref{def:stopped-process} and the admission risk with Theorem~\ref{thm:risk-gate}.

\begin{table}[t]
\centering
\scriptsize
\setlength{\tabcolsep}{3pt}
\renewcommand{\arraystretch}{1.05}
\begin{tabular}{@{}p{0.21\columnwidth}p{0.13\columnwidth}p{0.56\columnwidth}@{}}
\toprule
Controller entry & Symbol & Declared / fixed value before $D_{\rm cal}$ \\
\midrule
\multicolumn{3}{l}{\textit{Data splits (Assumption~\ref{assump:trace-calibration})}} \\
Temporal splits & --- & LCB-Pro: $|D_{\rm hist}^{\rm mem}|{=}387$, $|D_{\rm cal}|{=}166$, $|D_{\rm test}|{=}167$ (97/57/13 Easy/Med/Hard). \\
\midrule
\multicolumn{3}{l}{\textit{Manifest entries (Definition~\ref{def:controller-manifest})}} \\
Admission gate & $\Gamma^{\rm obs}_{1:T}$ & SV/TA hard screen; risk-score grid $|\Theta_t|{=}1$. \\
Probe interface & $\mathcal U_{\rm probe}$ & C++ snippet HP tool; backbone-decided invocation. \\
TA intensity & $m$ & $\le 32$ synthesized inputs/problem; $\le 5$ generation retries. \\
TA aggregation & $\gamma_{\rm evid}$ & 16 brute-force samples at sampling temperature $1.0$; accept on $\ge 9/16$ agreement. \\
General retrieval & $k_{\rm gen}$ & Inject $k_{\rm gen}{=}10$ general-bank items; cap $20$. \\
Algorithmic retrieval & $\ell_{\rm alg}$ & Retrieve $\ell_{\rm alg}{=}5$ items per bucket; per-bucket cap $K{=}10$. \\
Routing & $R_{\rm route}$ & Two-level taxonomy (8 primary, 68 secondary tags). \\
Deduplication & $D_{\rm dedup}$ & LLM-mediated insertion / merge decision. \\
Prompt template & $p_{\rm prompt}$ & Frozen system + user prompts (Appendix). \\
Decoding & $d_{\rm decode}$ & Backbone-default temperature; greedy where supported. \\
Horizon & $T$ & $T{=}5$ (ICPC Refine@5); $T{=}1$ (LCB-Pro Pass@1). \\
Memory snapshot & $M^\star$ & Built from $D_{\rm hist}^{\rm mem}$; read-only on $D_{\rm cal}\cup D_{\rm test}$. \\
Write mode & $\omega_{\rm write}$ & $\textsf{audit-only}$ on certificate-bearing runs. \\
\midrule
\multicolumn{3}{l}{\textit{Manifest class $\Pi_\alpha$ (5 controller variants, Theorem~\ref{thm:finite-class-selection})}} \\
\multicolumn{3}{p{0.93\columnwidth}}{
Base: backbone only (no SV/HP/TA, no $M^\star$); admission is the trivial all-pass gate. \quad
+SV: Base + Solution Validator (admission gated by SV). \quad
+SV+HP: above + Hypothesis Prober. \quad
+SV+HP+TA: above + Test Augmentation. \quad
Full (=$\pi^\star$): above + frozen Experience retrieval ($M^\star, k_{\rm gen}, \ell_{\rm alg}$). All five share $p_{\rm prompt}, d_{\rm decode}, T, \omega_{\rm write}$ and differ only by which manifest entries are switched on; mechanism factors of switched-off components fall back to $1$ in Eq.~\eqref{eq:false-risk} and to the trivial gate in $\Gamma^{\rm obs}_{1:T}$.
} \\
\bottomrule
\end{tabular}
\caption{Frozen controller manifest for certificate-bearing evaluation. Each row maps a manifest entry of Definition~\ref{def:controller-manifest} to its declared value; the bottom block defines the five controller variants in $\Pi_\alpha$. No entry is tuned on $D_{\rm test}$.}
\label{tab:controller_choices}
\end{table}

\subsection{Main Results}
From Table~\ref{tab:mainresults}: (1) \model improves consistently across backbones, with best gains on DeepSeek-V3.2-Chat of $+11.0$ (ICPC-Eval Refine@5) and $+22.7$ (LCB-Pro Pass@1, $81/167$ vs.\ $43/167$, Wilson 95\% CIs non-overlapping); (2) it matches or beats same-backbone baselines and equals DeepSeek-V3.2-Reasoner on ICPC-Eval at $33.9$ Refine@5 under frozen memory; (3) gains scale with backbone strength.  Section~\ref{sec:certificate_results} audits Corollary~\ref{thm:structural-certificate}'s clean component.

\subsection{Trace-Calibrated Certificate and Mechanism Evidence}
\label{sec:certificate_results}

We instantiate the certificate on a held-out calibration split using the logging schema in Definition~\ref{def:stopped-process}.  The pre-declared manifest class is $\Pi_\alpha=\{\text{Base},\,\text{+SV},\,\text{+SV+HP},\,\text{+SV+HP+TA},\,\text{Full}\}$ ($|\Pi_\alpha|=5$); each controller is evaluated on $D_{\rm cal}$ with the per-controller-split Clopper--Pearson allocations of the Numerical instantiation paragraph, and Eq.~\eqref{eq:controller-select} selects the one with the largest $C_T^{\rm ctrl}(\pi)$ for final evaluation.  Table~\ref{tab:calibrated_certificate} reports the theorem-level products and certificate values; compact display rates are derived from these products and are not homogeneous-step assumptions.

\begin{table}[t]
  \centering
  \scriptsize
  \setlength{\tabcolsep}{2.5pt}
  \renewcommand{\arraystretch}{0.95}
  \resizebox{\linewidth}{!}{%
  \begin{tabular}{@{}c c c c c c c@{}}
  \toprule
  \multirow{2}{*}{\makecell{Config\\(SV,HP,TA,Exp)}} &
  \multicolumn{3}{c}{Calibration certificate ($D_{\rm cal}$, $T{=}5$)} &
  \multicolumn{3}{c}{Empirical (target split)} \\
  \cmidrule(lr){2-4}\cmidrule(lr){5-7}
   & $\prod(1{-}\bar q)$ & $\prod(1{-}\underline h)$ & $C_5^{\rm ctrl}$ & \makecell{LCB-Pro\\Pass@1} & \makecell{ICPC-Eval\\R@5 (\#T)} & Cost \\
  \midrule
  \cfg{0}{0}{0}{0} & 63.9\% & 60.3\% & 3.6\%  & 25.8 & \rt{22.9}{1.2} & \$0.002 \\
  \cfg{1}{0}{0}{0} & 69.5\% & 63.4\% & 6.1\% & 43.7 & \rt{31.4}{1.3} & \$0.023 \\
  \cfg{1}{1}{0}{0} & 73.4\% & 61.9\% & 11.5\% & 44.3 & \rt{32.2}{1.3} & \$0.027 \\
  \cfg{1}{1}{1}{0} & 76.5\% & 60.3\% & 16.2\% & 46.1 & \rt{33.1}{1.3} & \$0.043 \\
  \cfg{1}{1}{1}{1} & \textbf{79.5\%} & \textbf{54.1\%} & \textbf{25.4\%} & \textbf{48.5} & \textbf{\rt{33.9}{1.3}} & \$0.035 \\
  \bottomrule
  \end{tabular}%
  }
  \caption{Trace-calibrated stopping-time certificate and component ablation on $\Pi_\alpha$ (DeepSeek-V3.2-Chat). \emph{Calibration certificate}: theorem-level products and $C_5^{\rm ctrl}$ on the 166-problem held-out $D_{\rm cal}$; a missing mechanism factor falls back to the admitted-failure UCB. \emph{Empirical}: LCB-Pro Pass@1 ($T{=}1$, 167-problem 2025Q2 split), transferred ICPC-Eval Refine@5 with mean \#refinements (transfer per \S\ref{sec:calibration_protocol}), and per-problem API cost. $C_5^{\rm ctrl}$ lower-bounds $\Pr(\tau_S\le 5,\tau_S<\tau_F)$; Pass@1 is cross-reference, not an empirical bound on $C_5^{\rm ctrl}$.}
  \label{tab:calibrated_certificate}
\end{table}

Table~\ref{tab:calibrated_certificate} shows that the certificate ranking matches the ablation trend: the Full temporal frozen-memory controller has the largest $C^{\rm ctrl}_5(\pi)$, and $C^{\rm ctrl}_5(\pi)$ lies below the matched-horizon ICPC-Eval Refine@5 in every row, as required.  Mechanism calibration on the 166-problem calibration split gives: HP odds contraction $\bar\sigma_t^{\rm bad}/\underline\sigma_t^{\rm good}=0.46/0.71=0.65$, with $\bar\sigma_t^{\rm bad}$ the Clopper--Pearson UCB on $\sigma_t^{\rm bad}$ (Proposition~\ref{prop:probe-odds}) and $\underline\sigma_t^{\rm good}$ the LCB on the analogous good-plan rate $\Pr(W_t^{\rm probe}{=}1\mid\neg B_t^\epsilon,A_t)$; TA evidence increment $\Delta I_t(m{=}32)=-\log\bar b_t(m{=}32)=1.00$ nats with post-survival bug rate $11/58=19.0\%$ on the 117-candidate diagnostic pool. For Experience, Table~\ref{tab:exp_steps} reports cumulative judge-acceptance with and without the frozen snapshot $M^\star$ over the first three refinement steps on both benchmarks. The refinement step $t$ is the response--revision attempt of Definition~\ref{def:stopped-process}, and we operationalize it per benchmark: on LCB-Pro, $t$ indexes the $t$-th Solution Validator admission within a trajectory---the per-step counterpart of the single-shot Pass@1 protocol ($T{=}1$ in Table~\ref{tab:controller_choices})---while on ICPC-Eval, $t$ indexes the $t$-th official judge submission under the Refine@5 protocol. Each cell is the fraction of $D_{\rm cal}$ problems whose first judge-accepted submission arrives by step $t$; both \emph{w/\ Exp} rows read the same pre-built $M^\star$, so the comparison isolates Experience and is order-independent. The lift concentrates at $t{=}1$ ($+9.6$~pp on LCB-Pro and $+3.4$~pp on ICPC-Eval) and shrinks at later steps, consistent with $\textsf{ContextAcquire}$ acting on the active-state hazard $\underline h_t(M)$ in Proposition~\ref{prop:experience-hazard} rather than on $\bar\rho_t^{\rm probe}$ or $\mathcal I_t$.

\begin{table}[t]
\centering
\scriptsize
\setlength{\tabcolsep}{4pt}
\renewcommand{\arraystretch}{0.95}
\begin{tabular}{@{}lccc@{}}
\toprule
\makecell[l]{Cumulative judge-acceptance (\%)\\
\scriptsize\textit{LCB-Pro: $t$-th SV admission; ICPC-Eval: $t$-th official submission}}
 & $t{=}1$ & $t{=}2$ & $t{=}3$ \\
\midrule
LCB-Pro w/o Exp                       & 50.6 & 61.4 & 68.7 \\
LCB-Pro w/\phantom{o} Exp             & 60.2 & 67.5 & 69.9 \\
\rowcolor{cpagentblue}\hspace{0.5em}$\Delta$ (Exp~$-$~no~Exp)
                                      & \textbf{+9.6} & +6.1 & +1.2 \\
\midrule
ICPC-Eval w/o Exp                     & 18.6 & 22.0 & 27.1 \\
ICPC-Eval w/\phantom{o} Exp           & 22.0 & 25.4 & 28.8 \\
\rowcolor{cpagentblue}\hspace{0.5em}$\Delta$ (Exp~$-$~no~Exp)
                                      & \textbf{+3.4} & +3.4 & +1.7 \\
\bottomrule
\end{tabular}
\caption{Cumulative judge-acceptance by refinement step $t$ (response--revision attempt of Def.~\ref{def:stopped-process}; not the interaction step of Fig.~\ref{fig:verification_tool_freq}) on $D_{\rm cal}$, with vs.\ without the frozen Experience snapshot $M^\star$ (DeepSeek-V3.2-Chat). Per-benchmark operationalization of $t$ is in the text. The lift concentrates at $t{=}1$, matching $\textsf{ContextAcquire}$ raising the active-state hazard $\underline h_t(M)$ (Prop.~\ref{prop:experience-hazard}) rather than $\bar\rho_t^{\rm probe}$ or $\mathcal I_t$.}
\label{tab:exp_steps}
\end{table}

\subsection{Efficiency Analysis}
\label{sec:efficiency}
We analyze cost--performance trade-offs with the variable-returns-to-scale Banker--Charnes--Cooper DEA model~\cite{charnes1978measuring,banker1984some} (input $=$ \textsc{Cost}, output $=$ \textsc{Pass@1}; $\theta{=}1$ marks the BCC-efficient frontier).
Figure~\ref{fig:efficiency} shows that \model lies on the frontier for all three backbones (DeepSeek-V3.2-Chat, GPT-4o, Qwen3-235B), whereas baselines such as Reflexion are on the frontier only on some backbones (DeepSeek-V3.2-Chat) and dominated on others (Qwen3-235B). On DeepSeek-V3.2-Chat, agentic baselines and \model all sit in the $10^{-2}$\,USD per-problem range (\model: \$$0.035$, Table~\ref{tab:calibrated_certificate}); the frontier is therefore driven by \model's higher \textsc{Pass@1} at comparable cost rather than by an order-of-magnitude cost gap. Tracked components account for $92.4\%$ of total cost (HP $69.7\%$, SV $17.6\%$, Experience $3.4\%$, TA $1.7\%$).
\subsection{Tool-Use Analysis}
We analyze how \model's tool use affects the three quantities identified by the theory: false-admission risk, calibrated admission evidence, and experience-conditioned success hazard.
\paragraph{Dual-Granularity Verification.}
Figure~\ref{fig:verification_tool_freq} (LCB-Pro, DeepSeek-V3.2-Chat) shows that, indexed by \emph{interaction step} (the $i$-th in-trajectory LLM/tool event, distinct from refinement step $t$ in Definition~\ref{def:stopped-process}), HP peaks at the first few steps and decays as the approach stabilizes, while SV stays roughly uniform---early hypothesis checks, then solution-level validation on executable candidates.

\paragraph{Test Augmentation.}
Let $\mathrm{GenRate}_{\rm aug}$ be the fraction of problems with a non-empty TA suite, and call problem $i$ \emph{rescued} if a public-passing candidate that initially failed augmented tests later passes them and is accepted. Figure~\ref{fig:aug} reports these on the 167-problem target split: GPT-4o reaches $\mathrm{GenRate}_{\rm aug}{=}39.5\%$ vs.\ Qwen3-235B-A22B $68.3\%$ and DeepSeek-V3.2-Chat $70.1\%$, mainly reflecting brute-force quality; $\mathrm{RescueRate}_{\rm aug}$ is substantial across backbones ($9.3$--$23.9\%$), with DeepSeek-V3.2-Chat highest, indicating that both test strength and the backbone's ability to act on feedback matter.
Both diagnostics map to the evidence variable of Proposition~\ref{prop:evidence-acquire}: $\mathrm{GenRate}_{\rm aug}$ is the availability of the evidence channel ($\mathcal T_{\rm aug}\neq\emptyset$, required for $\bar b_t^\pi(m)$ to be informative), and $\mathrm{RescueRate}_{\rm aug}$ tracks the ``counterexample-found'' rate that tightens $\bar b_t^\pi(m)$; \S\ref{sec:certificate_results} converts this to $\Delta I_t(m{=}32){\approx}1.00$ nats in Table~\ref{tab:calibrated_certificate}.
\begin{figure}[tbp]
    \centering
    \newcommand{\subw}{0.5\linewidth}

    \begin{subfigure}[t]{\subw}
        \centering
        \includegraphics[width=\linewidth]{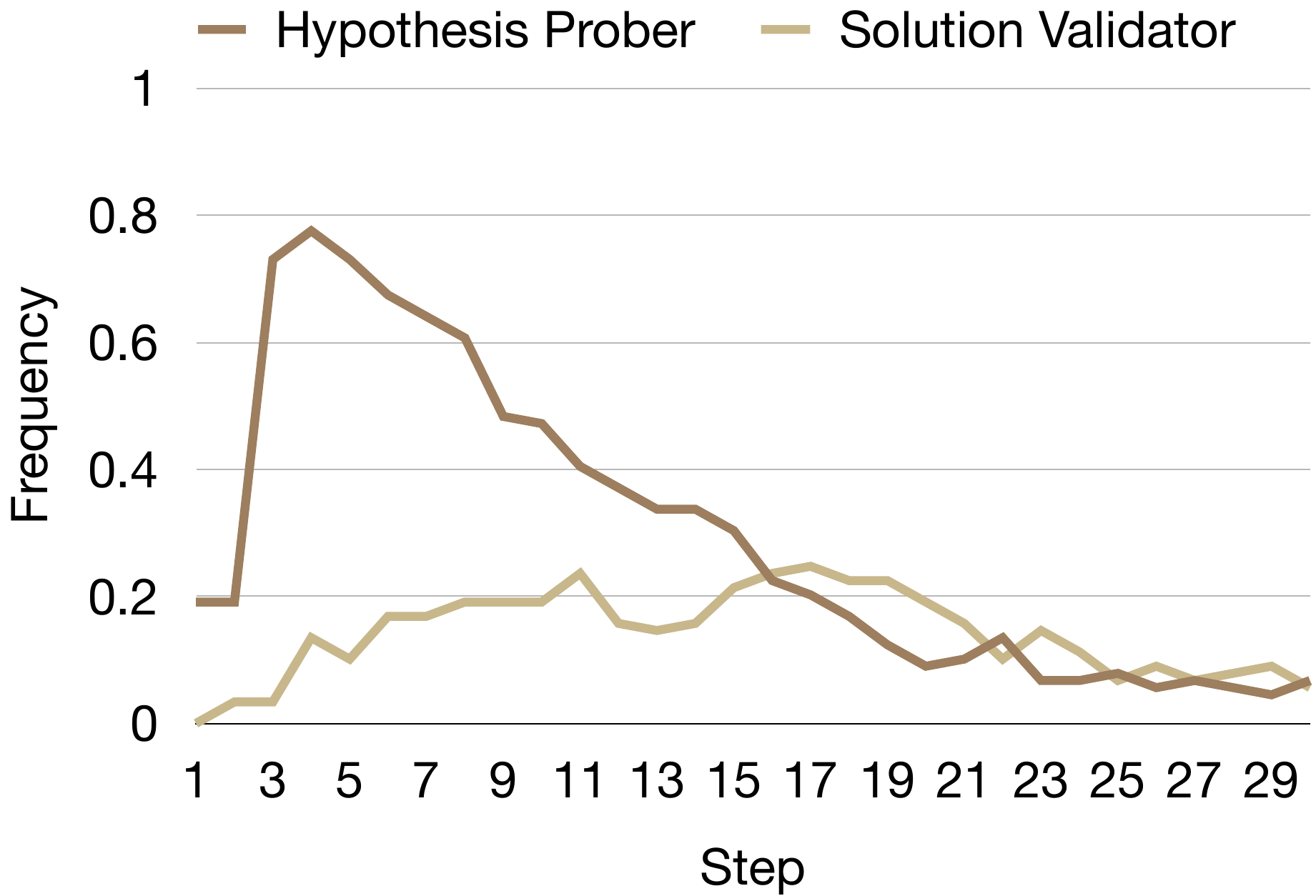}
        \caption{Verification tool usage across \emph{interaction steps}.}
        \Description{A line plot for DeepSeek-V3.2-Chat on LiveCodeBench Pro. The x-axis is the within-trajectory interaction-step index from 1 to about 29, and the y-axis is the call frequency of each verification tool. Two curves are shown: the Hypothesis Prober curve starts high at the first few interaction steps and decays monotonically toward zero as the trajectory progresses, while the Solution Validator curve stays roughly flat across all interaction steps. The pattern indicates that hypothesis-level probes concentrate early in reasoning and solution-level validation is invoked roughly uniformly once executable candidates are available.}
        \label{fig:verification_tool_freq}
    \end{subfigure}\hfill
    \begin{subfigure}[t]{\subw}
        \centering
        \includegraphics[width=\linewidth]{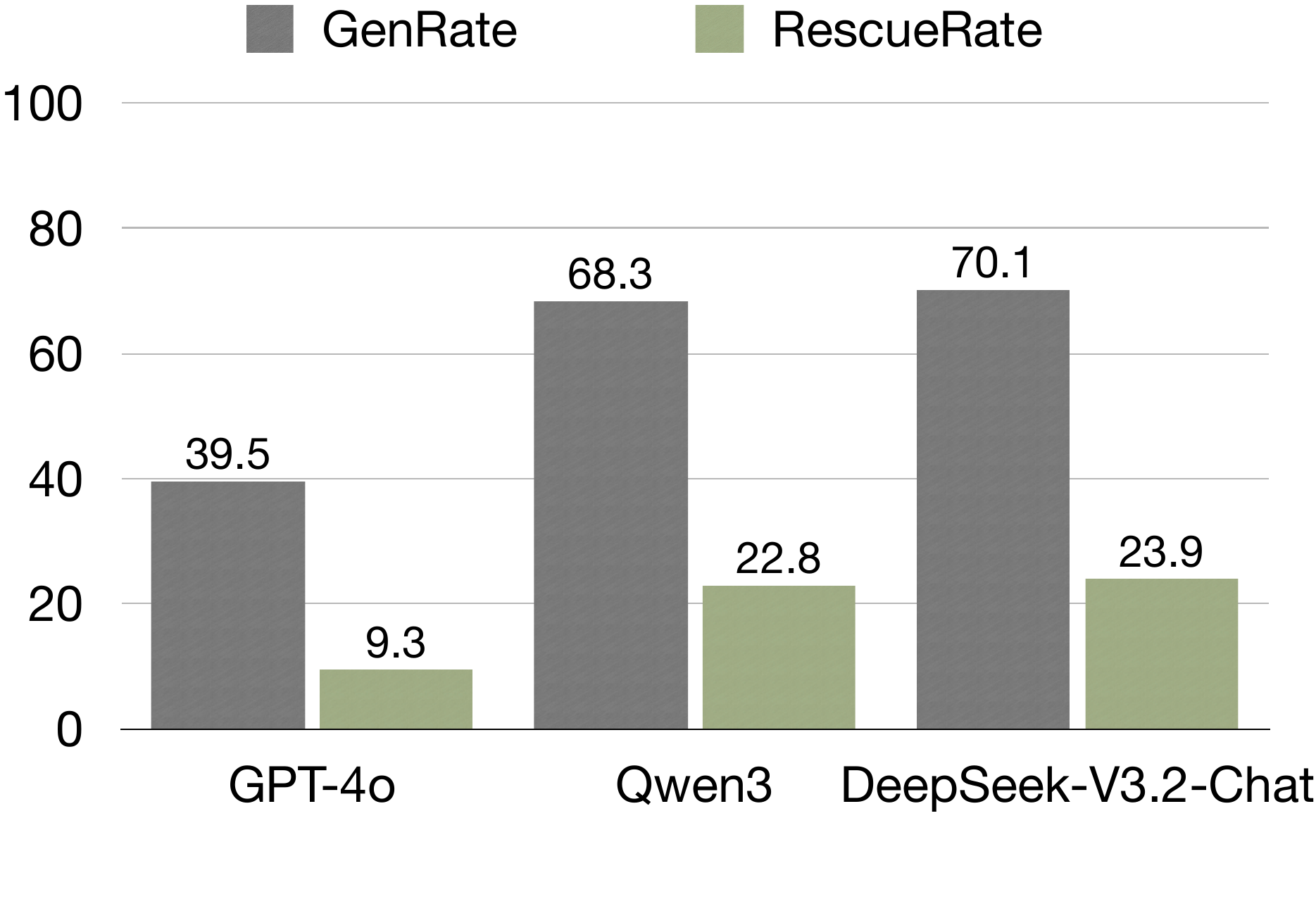}
        \caption{Test Augmentation statistics across backbones (y-axis: \%).}
        \Description{A grouped bar chart over the 167-problem LiveCodeBench Pro target split with three backbone groups along the x-axis: GPT-4o, Qwen3-235B-A22B-Instruct, and DeepSeek-V3.2-Chat. The y-axis is percentage from 0 to about 80. Each backbone has two bars: GenRate_aug, the fraction of problems for which Test Augmentation produces a non-empty consensus-filtered suite, and RescueRate_aug, the fraction of public-passing candidates whose hidden-wrong label is caught by the augmented suite. GenRate_aug is 39.5 percent for GPT-4o, 68.3 percent for Qwen3-235B-A22B-Instruct, and 70.1 percent for DeepSeek-V3.2-Chat. RescueRate_aug ranges from about 9.3 to 23.9 percent across backbones, with DeepSeek-V3.2-Chat the highest.}
        \label{fig:aug}
    \end{subfigure}

    \caption{Tool-Use Analysis on LCB-Pro. (a) HP/SV frequency per \emph{interaction step} on DeepSeek-V3.2-Chat. Panel~(a)'s x-axis indexes interaction steps ($1\ldots\!29$ per trajectory), \emph{not} refinement step $t\in\{1,\ldots,T\}$ of Definition~\ref{def:stopped-process} and Table~\ref{tab:exp_steps}. (b) $\mathrm{GenRate}_{\rm aug}$ = fraction of problems with non-empty TA suite; $\mathrm{RescueRate}_{\rm aug}$ = fraction of public-passing candidates whose hidden-wrong label is caught by TA, the operational counterpart of $\bar b_t^\pi(m)$ in Prop.~\ref{prop:evidence-acquire}.}
    \Description{A two-panel composite figure for tool-use analysis on LiveCodeBench Pro with DeepSeek-V3.2-Chat. Panel (a) is a line plot of Hypothesis Prober and Solution Validator call frequencies as a function of within-trajectory interaction step from 1 to about 29: HP decays from a high early-step peak toward zero while SV stays approximately uniform. Panel (b) is a grouped bar chart of two Test-Augmentation diagnostics, GenRate_aug and RescueRate_aug, across the three backbones GPT-4o, Qwen3-235B-A22B-Instruct, and DeepSeek-V3.2-Chat: GenRate_aug grows from 39.5 to 70.1 percent across backbones and RescueRate_aug ranges roughly between 9 and 24 percent, peaking on DeepSeek-V3.2-Chat.}
    \label{fig:freq_aug}
\end{figure}

\paragraph{Experience-Driven Self-Evolving}
Certificate-bearing benchmark results use the temporal frozen snapshot protocol in Section~\ref{sec:calibration_protocol}: historical problems build $M^\star$, calibration audits the controller with $M^\star$ fixed, and final testing reads the same snapshot.  Deployment-mode growth diagnostics in the full study show that general experiences saturate quickly while algorithmic memories keep expanding as new problem types appear; the curriculum effect follows our write-in criterion, because experiences are stored only after a failure is repaired.  These diagnostics explain snapshot construction on $D_{\rm hist}^{\rm mem}$, while all reported certificates freeze the resulting snapshot before $D_{\rm cal}$ and $D_{\rm test}$.

\subsection{Ablation Study}
\label{sec:ablation}
Table~\ref{tab:calibrated_certificate} also serves as the component ablation on DeepSeek-V3.2-Chat: SV and HP constitute Dual-Granularity Verification, TA generates extra tests beyond public cases, and Exp retrieves reusable repair knowledge from past successful trajectories.  Vanilla LLM yields $22.9$ Refine@5 on ICPC-Eval and $25.8$ Pass@1 on LCB-Pro at \$0.002 per problem.  Adding SV markedly improves both ($31.4 / 43.7$ at \$0.023); SV+HP further improves to $32.2 / 44.3$, supporting their complementarity.  Adding TA on top of Dual-Granularity Verification brings additional Pass@1 gains at higher cost ($33.1 / 46.1$ at \$0.043).  Finally, historical frozen Experience achieves the best performance ($33.9 / 48.5$) and reduces per-problem cost back to \$0.035, indicating that retrieved repair experiences reduce ineffective trial-and-error without using calibration- or test-split write-ins.  The certificate column $C_5^{\rm ctrl}$ rises monotonically with the same component additions, so the ablation ranking reproduces the certificate ranking on $D_{\rm cal}$.

\paragraph{Additional diagnostics.}
TA reliability on 117 public-pass candidates: TA flags 40/51 hidden-wrong programs while letting 47/66 hidden-correct survive, leaving the $11/58{=}19.0\%$ post-survival bug rate used by Prop.~\ref{prop:evidence-acquire}; sampling $k$ tests per suite raises detection to $74.5\%$ at $k{=}15$.
\section{Related Work}
\paragraph{Competitive programming agents.}
Among tool-augmented LLM agents that ground reasoning on deterministic execution feedback~\cite{yao2022react,madaan2023self,gao2023pal,chen2022program,gou2024tora,wu2025agentic,das2024mathsensei}, work on CP benchmarks~\cite{hendrycks2021measuring,chen2021evaluating,jain2025livecodebench,zheng2026livecodebench,xu2026icpc} mostly raises accuracy by scale: dedicated training or massive sampling~\cite{li2022competition,leblond2023alphacode,el2025competitive,guo2025deepseek,liu2025deepseek}.  Inference-time pipelines such as MapCoder, AlphaCodium, and CodeSim avoid retraining by iterating generation and verification, but are session-local and discard repair experience after each problem~\cite{islam2024mapcoder,ridnik2024code,islam2025codesim,yuhua2025oi,li2025algosimbench}.  Reflexion-style self-refinement~\cite{shinn2023reflexion} keeps verbal memory across problems but offers no admission rule for when retrieved memory should enter the loop.  \model adds neither training nor sampling: a calibrated controller ties generation, verification, and a frozen memory snapshot to a single held-out audit, closing the gap to a stronger reasoner~\cite{liu2025deepseek} on ICPC-Eval Refine@5.

\paragraph{Testing, risk control, and frozen memory.}
Randomized testing and generated checks have long driven program repair and synthesis~\cite{claessen2000quickcheck,solar2006combinatorial,chen2022codet,he2025hardtests}, but are usually consumed as raw signals.  We treat SV and TA as calibrated evidence channels via binomial calibration, conformal-style risk control, and anytime-valid evidence~\cite{clopper1934use,angelopoulos2024conformal,howard2021time,vovk2021values}, so coverage and post-survival risk are read off held-out traces.  Experience enters as a temporal frozen memory prior with PAC-Bayes/MDL complexity control~\cite{mcallester1999pac,mcallester1999pac}: tools and memory are fixed on $D_{\rm hist}^{\rm mem}$ before $D_{\rm cal}$ and $D_{\rm test}$ (Section~\ref{sec:calibration_protocol}) so the audit remains valid; memory may still grow in deployment.
\section{Conclusion}
A calibrated risk-controlled controller substantially improves contest-level programming without additional training. The theory derives certificate quantities and abstract action classes that are audited on held-out traces and frozen before evaluation; \model instantiates them as Dual-Granularity Verification, Test Augmentation, and Experience-Driven Self-Evolving, matches DeepSeek-V3.2-Reasoner on ICPC-Eval Refine@5, and provides trace-calibrated mechanism evidence for each action class.

\bibliographystyle{ACM-Reference-Format}
\bibliography{refs}
\clearpage
\appendix
\section{Tool-Orchestrated Pseudocode}
\label{app:orchestration}

Algorithm~\ref{alg:orchestration} formalizes \S\ref{sec:tool-orchestration} with two nested loops: the outer loop indexes refinement steps $t{=}1,\ldots,T$ (one admission each, matched to Refine@$K$/Pass@1), and the inner loop indexes interaction steps within $t$---HP probes consume interaction steps but not refinement steps, so a single $t$ (even $T{=}1$ on LCB-Pro) can absorb many HP calls before $\textsf{Commit}$. Memory writes follow Def.~\ref{def:controller-manifest} and uncalled channels record their gate as neutral (Def.~\ref{def:stopped-process}).

\begin{algorithm}[H]
\scriptsize
\caption{\model: tool-orchestrated refinement on a problem $q_i$. The outer \textbf{for} loop indexes refinement steps $t{=}1,\ldots,T$ (Def.~\ref{def:stopped-process}, matched to Refine@$K$/Pass@1); the inner \textbf{repeat}\,$\ldots$\,\textbf{until} loop indexes interaction steps---HP probes within step $t$, which do \emph{not} consume $t$ (cf.\ Fig.~\ref{fig:verification_tool_freq}). The deterministic SV $\to$ optional TA $\to$ admission cascade closes each step.}
\label{alg:orchestration}
{
\begin{algorithmic}[1]
\Require $q_i$, $M{=}(\mathcal{M}^{\mathrm{gen}},\mathcal{M}^{\mathrm{alg}})$, $T$, $(k_{\rm gen},\ell_{\rm alg})$, $\omega_{\rm write}$
\State $\textsf{SysPrompt}_i \gets \textsf{BaseSysPrompt}\,\Vert\,\mathrm{Format}(\mathcal{M}^{\mathrm{gen}}, k_{\rm gen})$ \Comment{$\textsf{ContextAcquire}$}
\State $(\hat\tau_1,\hat\tau_2)\gets\mathrm{ClassifyAlg}(q_i)$; $E_i^{\mathrm{alg}}\gets\mathrm{RetrieveAlgExp}(\hat\tau_1,\hat\tau_2;\ell_{\rm alg},M)$
\State $\mathcal{T}\gets\text{public tests}$; $\mathcal{T}_{\rm aug}\gets\emptyset$; $h_{i,0}\gets E_i^{\mathrm{alg}}$
\For{$t=1,\ldots,T$}
\State $X_{i,t}\gets h_{i,t-1}$; $G^{\mathrm{HP}}_{i,t}\gets$ neutral
\Repeat
\State $a\gets\mathrm{NextAction}(q_i,X_{i,t},M),\ a\in\{\textsf{Probe},\textsf{Commit}\}$
\If{$a=\textsf{Probe}$}
\State run HP on $X_{i,t}$; fold pass/fail into $X_{i,t}$; AND outcome into $G^{\mathrm{HP}}_{i,t}$ \Comment{Prop.~\ref{prop:probe-odds}}
\EndIf
\Until{$a=\textsf{Commit}$}
\State $s_{i,t}\gets\mathrm{Gen}(q_i,M,X_{i,t})$ \Comment{Eq.~\eqref{eq:gen_with_memory}}
\State $(v_{i,t},f_{i,t})\gets\Phi(q_i,s_{i,t};\mathcal{T})$; $G^{\mathrm{SV}}_{i,t}\gets v_{i,t}$ \Comment{public-test screen}
\If{$v_{i,t}=1$}
\If{$\mathcal{T}_{\rm aug}=\emptyset$}
\State $\mathcal{T}_{\rm aug}\gets\mathrm{Augment}(q_i)$ \Comment{TA once per $q_i$ (Tab.~\ref{tab:controller_choices})}
\EndIf
\State $(v_{i,t},f_{i,t})\gets\Phi(q_i,s_{i,t};\mathcal{T}\cup\mathcal{T}_{\rm aug})$; $G^{\mathrm{TA}}_{i,t}\gets v_{i,t}$ \Comment{Prop.~\ref{prop:evidence-acquire}}
\Else
\State $G^{\mathrm{TA}}_{i,t}\gets$ neutral
\EndIf
\State $\Gamma_{i,t}\gets G^{\mathrm{HP}}_{i,t}\wedge G^{\mathrm{SV}}_{i,t}\wedge G^{\mathrm{TA}}_{i,t}$; log $(Z_{i,t},A_{i,t},\Gamma_{i,t},S_{i,t},F_{i,t})$
\If{$\Gamma_{i,t}=1$}
\Return $s_{i,t}$ \Comment{$\textsf{Stop}$ admits}
\EndIf
\State $h_{i,t}\gets\mathrm{Update}(X_{i,t},f_{i,t})$ \Comment{$\textsf{Refine}$ to $t{+}1$}
\EndFor
\If{$\omega_{\rm write}=\textsf{audit-only}$}
\State append experience to $\widetilde M$ \Comment{Eq.~\eqref{eq:audit_memory}}
\Else
\State update memory via Eq.~\eqref{eq:memory_update}
\EndIf
\end{algorithmic}}
\end{algorithm}

\section{Notation Glossary}
\label{app:glossary}

Table~\ref{tab:glossary} consolidates every symbol used in Section~\ref{sec:theory}.

\begin{table}[!t]
\centering
\scriptsize
\setlength{\tabcolsep}{4pt}
\renewcommand{\arraystretch}{0.95}
\begin{tabular}{@{}p{0.30\linewidth}p{0.66\linewidth}@{}}
\toprule
Symbol & Meaning \\
\midrule
\multicolumn{2}{l}{\textit{Trajectory events and stopping times}} \\
$T$ & Refinement horizon. \\
$A_t$ & Active-trajectory event $\{\tau_S>t{-}1,\tau_F>t{-}1\}$. \\
$\Gamma_t,\,\Gamma_t^\alpha$ & Observable admission gate / risk-score-thresholded gate. \\
$B_t^\epsilon$ & Hidden-judge $\epsilon$-significant failure event for the admitted candidate. \\
$F_t,\,S_t$ & Indicators of a false admission / clean success at step $t$. \\
$\tau_F,\,\tau_S$ & Stopping times of false admission and clean success. \\
\midrule
\multicolumn{2}{l}{\textit{Certificate state, risk score, and abstract actions}} \\
$X_t,\,\phi$ & Implementation state and projection $Z_t=\phi(X_t)$. \\
$Z_t$ & Certificate summary state $(\rho_t,I_t,\underline h_t,t)$. \\
$\rho_t$ & Upper envelope on residual bad-candidate mass (calibrated as $\bar\rho_t^{\mathrm{probe},\pi}$). \\
$I_t$ & Accumulated program-level evidence in nats (calibrated as $\mathcal I_t^{\mathrm{evid},\pi}$). \\
$\underline h_t$ & Lower envelope on clean-success hazard (calibrated as $\underline h_t^\pi(M)$). \\
$\rho_{t+},\,I_{t+},\,\underline h_{t+}$ & Post-action certificate components after a single action. \\
$r_t(Z_t)$ & Residual admission-risk score $\rho_t\exp(-I_t)$. \\
$\hat\theta_{t,\alpha},\,\Theta_t$ & Calibrated admission threshold for $r_t$ and its finite grid. \\
$\textsf{Stop}$ & Commit current candidate through gate $\Gamma_t^\alpha$. \\
$\textsf{Refine}$ & Advance to next candidate distribution. \\
$\textsf{RiskProbe}$ & Action contracting $\rho_t$ via $\lambda_t(Z_t)\in[0,1]$. \\
$\lambda_t(Z_t)$ & Probe-induced contraction factor on $\rho_t$. \\
$\textsf{EvidenceAcquire}(m)$ & Action adding increment $\Delta I_t(m,Z_t)$ to $I_t$. \\
$\textsf{ContextAcquire}(k)$ & Action adding $\Delta h_t(k,Z_t)$ to $\underline h_t$ via memory snapshot. \\
\midrule
\multicolumn{2}{l}{\textit{Manifest entries (Definition~\ref{def:controller-manifest})}} \\
$\pi,\,\Pi_\alpha,\,\pi^\star$ & Frozen controller; finite pre-declared class; selected controller. \\
$\Gamma^{\rm obs}_{1:T}$ & Observable admission gate trajectory. \\
$\mathcal U_{\rm probe}$ & $\textsf{RiskProbe}$ tool interface (HP). \\
$m,\,\gamma_{\rm evid}$ & TA generation intensity and consensus aggregation rule. \\
$k_{\rm gen},\,\ell_{\rm alg}$ & General-bank injection budget and algorithmic-bucket retrieval budget (joint realization of $k$). \\
$R_{\rm route},\,D_{\rm dedup}$ & Retrieval routing rule and deduplication rule. \\
$p_{\rm prompt},\,d_{\rm decode}$ & Prompt template and decoding parameters. \\
$M^\star,\,\widetilde M$ & Frozen retrievable memory snapshot and audit-only log. \\
$\omega_{\rm write}$ & Memory write mode (\textsf{audit-only} on certificate-bearing runs). \\
\midrule
\multicolumn{2}{l}{\textit{Calibration counts and bounds}} \\
$n_t^\pi,\,f_t^\pi,\,s_t^\pi$ & Active calibration traces, false admissions, clean successes at step $t$ under $\pi$. \\
$n_t(\theta),\,f_t(\theta),\,\bar R_t(\theta)$ & Active admissions, hidden-rejected admissions, and Clopper--Pearson UCB at threshold $\theta$. \\
$f_t^{\mathrm{probe},\pi}$ & Calibration count of bad-and-probe-surviving traces. \\
$q_t^\pi,\,h_t^\pi$ & Deployment active-state false / clean hazards. \\
$\bar q_{t,\mathrm{raw}}^\pi$ & Clopper--Pearson UCB on $q_t^\pi$ from admitted failures. \\
$\underline h_t^\pi(M)$ & Clopper--Pearson LCB on $h_t^\pi$ under memory $M$; $\underline h_t^\pi:=\underline h_t^\pi(\varnothing)$. \\
$W_t^{\rm probe},\,W_t^{\rm evid}$ & Survival indicators at probe / evidence stages. \\
$\rho_t^\pi$ & Deployment bad-prior $\Pr_\pi(B_t^\epsilon\mid A_t)$. \\
$\sigma_t^{\rm bad,\pi},\,\sigma_t^{\rm good,\pi}$ & Probe-survival rates conditional on $B_t^\epsilon$ / $\neg B_t^\epsilon$. \\
$\bar\rho_t^{\mathrm{probe},\pi}$ & UCB on joint bad-and-probe-survive mass (Prop.~\ref{prop:probe-odds}). \\
$\bar b_t^\pi(m)$ & UCB on evidence-gate survival given bad and probe-survive. \\
$\Delta I_t(m),\,\mathcal I_t^{\mathrm{evid},\pi}$ & Per-step evidence $-\log\bar b_t^\pi(m)$ and cumulative evidence. \\
$\bar q_{t,\mathrm{mech}}^\pi$ & Mechanism-factorized UCB $\bar\rho_t^{\mathrm{probe},\pi}\exp[-\mathcal I_t^{\mathrm{evid},\pi}]$. \\
$\bar q_{t,\mathrm{ctrl}}^\pi$ & $\min\{\bar q_{t,\mathrm{raw}}^\pi,\bar q_{t,\mathrm{mech}}^\pi\}$. \\
$C_T(\pi),\,C_T^{\rm ctrl}(\pi)$ & Clean-before-false lower bound (raw / mechanism-tightened). \\
\midrule
\multicolumn{2}{l}{\textit{Confidence allocations and data splits}} \\
$\alpha,\,\alpha_t$ & False-admission risk budget (overall / per-step). \\
$\delta$ & Calibration failure probability. \\
$\delta_{\mathrm{raw}},\,\delta_{\mathrm{gate}},\,\delta_{\mathrm{probe}},\,\delta_{\mathrm{evid}}$ & Per-channel allocations of $\delta$ (each $=0.025$). \\
$D_{\rm hist}^{\rm mem}$ & Historical split used only for memory construction. \\
$D_{\rm cal}$ & Held-out split used for calibration and selection. \\
$D_{\rm test}$ & Target evaluation split (read-only $M^\star$). \\
\bottomrule
\end{tabular}
\caption{Consolidated notation for Section~\ref{sec:theory}. Each symbol used in Definitions~\ref{def:stopped-process}--\ref{def:controller-manifest}, Theorems~\ref{thm:risk-gate} and~\ref{thm:finite-class-selection}, Propositions~\ref{prop:probe-odds}--\ref{prop:experience-hazard}, and Corollary~\ref{thm:structural-certificate} appears here.}
\label{tab:glossary}
\end{table}

\input{appendix_proofs}

\section{Additional Empirical Analyses}
\label{sec:more_analysis}
This appendix probes the main results along seven diagnostic axes: whether Test Augmentation produces a useful signal at all (\S\ref{app:ta_diagnostics}); whether Dual-Granularity Verification absorbs the TA noise we measure (\S\ref{app:dgv_robustness}); whether the gains are an artifact of weak backbones (\S\ref{app:frontier_transfer}); whether the cost--accuracy position is genuinely on the efficiency frontier (\S\ref{app:efficiency_bcc}); where the tokens actually go (\S\ref{app:component_breakdown}); how the tool budget shifts with problem difficulty (\S\ref{app:tool_use_difficulty}); and whether Experience's hazard lift translates into observable per-problem savings in refinement steps and cost (\S\ref{app:experience_efficiency}). All seven analyses reuse the frozen manifest $\pi^\star$ from Section~\ref{sec:experiment}, leaving $\Pi_\alpha$ and $C_T^{\rm ctrl}(\pi^\star)$ unchanged; the detailed proofs of all theorems, propositions, and corollaries are provided in Appendix~\ref{app:proofs}.

\subsection{Test-Augmentation Diagnostics}
\label{app:ta_diagnostics}

This analysis evaluates whether Test Augmentation (TA) provides useful candidate-level evidence beyond public tests. To avoid circular validation, the augmented tests were generated by \model with the DeepSeek-V3.2-Chat backbone, while the evaluated candidate solutions were produced independently by DeepSeek-V3.2-Reasoner on the same LCB-Pro problems.

\paragraph{TA label accuracy.}
The diagnostic pool contains 117 problems where TA successfully generated augmented tests and the candidate solution had already passed all public tests. Among them, $51$ candidates are actually incorrect and $66$ are actually correct according to the online judge. A positive prediction means the augmented tests flag the candidate as buggy.

\begin{table}[h]
\footnotesize
\centering
\setlength{\tabcolsep}{4pt}
\begin{tabular}{lccc}
\toprule
 & Pred.\ Buggy & Pred.\ Correct & Total \\
\midrule
Actually Incorrect & TP $=40$ & FN $=11$ & 51 \\
Actually Correct   & FP $=19$ & TN $=47$ & 66 \\
\midrule
Total              & 59       & 58       & 117 \\
\bottomrule
\end{tabular}

\begin{tabular}{lc}
\toprule
Metric & Value \\
\midrule
Accuracy    & $87/117 = 74.4\%$ \\
Precision   & $40/59 = 67.8\%$ \\
Recall      & $40/51 = 78.4\%$ \\
F1          & $72.7\%$ \\
Specificity & $47/66 = 71.2\%$ \\
FPR         & $19/66 = 28.8\%$ \\
FNR         & $11/51 = 21.6\%$ \\
\bottomrule
\end{tabular}
\caption{Confusion matrix and derived metrics for the quality of TA-generated augmented test cases. Augmented tests are produced by \model with the DeepSeek-V3.2-Chat backbone; the evaluated candidates are produced independently by DeepSeek-V3.2-Reasoner on the same LCB-Pro problems.}
\label{tab:ta_label_accuracy}
\end{table}

Although TA is an imperfect verifier, it improves the quality of the accepted set in this diagnostic pool. Before TA filtering, the hidden-bug rate is $51/117 = 43.6\%$. Among candidates that survive TA, the hidden-bug rate drops to $11/58 = 19.0\%$, so the post-survival correctness rate is $47/58 = 81.0\%$.

\paragraph{Test-case diversity.}
We estimate diversity by randomly sampling $k$ generated test cases from each TA suite and measuring whether at least one sampled case exposes the incorrect candidate. The evaluation is conducted on the $51$ actually-incorrect candidates from the diagnostic pool. Results are mean $\pm$ standard deviation over $100$ random draws.

\begin{table}[h]
\footnotesize
\centering
\setlength{\tabcolsep}{8pt}
\begin{tabular}{cc}
\toprule
$k$ (sampled cases) & Bug detection rate \\
\midrule
1  & 37.6\% $\pm$ 5.8\% \\
2  & 51.4\% $\pm$ 5.1\% \\
3  & 58.9\% $\pm$ 4.4\% \\
5  & 65.7\% $\pm$ 3.3\% \\
9  & 70.0\% $\pm$ 2.0\% \\
15 & 74.5\% $\pm$ 0.8\% \\
\bottomrule
\end{tabular}
\caption{Bug-detection rate when randomly sampling $k$ augmented test cases per problem (LCB-Pro, DeepSeek-V3.2-Chat backbone).}
\label{tab:ta_diversity}
\end{table}

Detection grows steadily with $k$, from $37.6\%$ at $k{=}1$ to $74.5\%$ at $k{=}15$. The fact that additional samples keep adding bug-detection coverage---rather than saturating after the first one or two cases---indicates that TA cases cover complementary failure modes rather than only duplicating the same signal.

\subsection{Dual-Granularity Verification Robustness}
\label{app:dgv_robustness}

This analysis targets the main failure mode exposed by the TA diagnostic: verifier-level false positives. A false positive occurs when a correct candidate fails generated augmented tests because the generated expected outputs or test conditions are noisy.

\paragraph{Counterfactual study.}
We examine the $19$ verifier-level false-positive cases from the TA label-accuracy diagnostic. The comparison starts from the same intermediate candidate/state and runs the full \model refinement loop with TA versus without TA.

\begin{table}[h]
\footnotesize
\centering
\begin{tabular}{lc}
\toprule
Counterfactual outcome on 19 FP cases & Count \\
\midrule
Successful with or without TA & 14 \\
Failed with or without TA     & 5 \\
Succeeded without TA but failed with TA & 0 \\
\bottomrule
\end{tabular}
\caption{End-to-end counterfactual analysis on the $19$ verifier-level TA false-positive cases.}
\label{tab:dgv_counterfactual}
\end{table}

In these cases, verifier-level TA false positives do not produce observed end-to-end harm: no case succeeds without TA but fails with TA. Among the $14$ cases that remain successful in both conditions, TA introduces $5.64$ additional steps on average to recover.

\paragraph{Why recovery is possible.}
Dual-Granularity Verification (DGV) gives \model two independent executable feedback channels:
\begin{itemize}
\item Solution Validator (SV) and TA surface program-level failure signals.
\item Hypothesis Prober (HP) checks local claims, brute-force comparisons, edge cases, and alternative failure hypotheses.
\end{itemize}
When these channels disagree, \model arbitrates the evidence rather than blindly trusting a single noisy verifier.

\paragraph{Case study: Codeforces 2098B.}
\begin{figure}[h]
\centering
\includegraphics[width=\columnwidth]{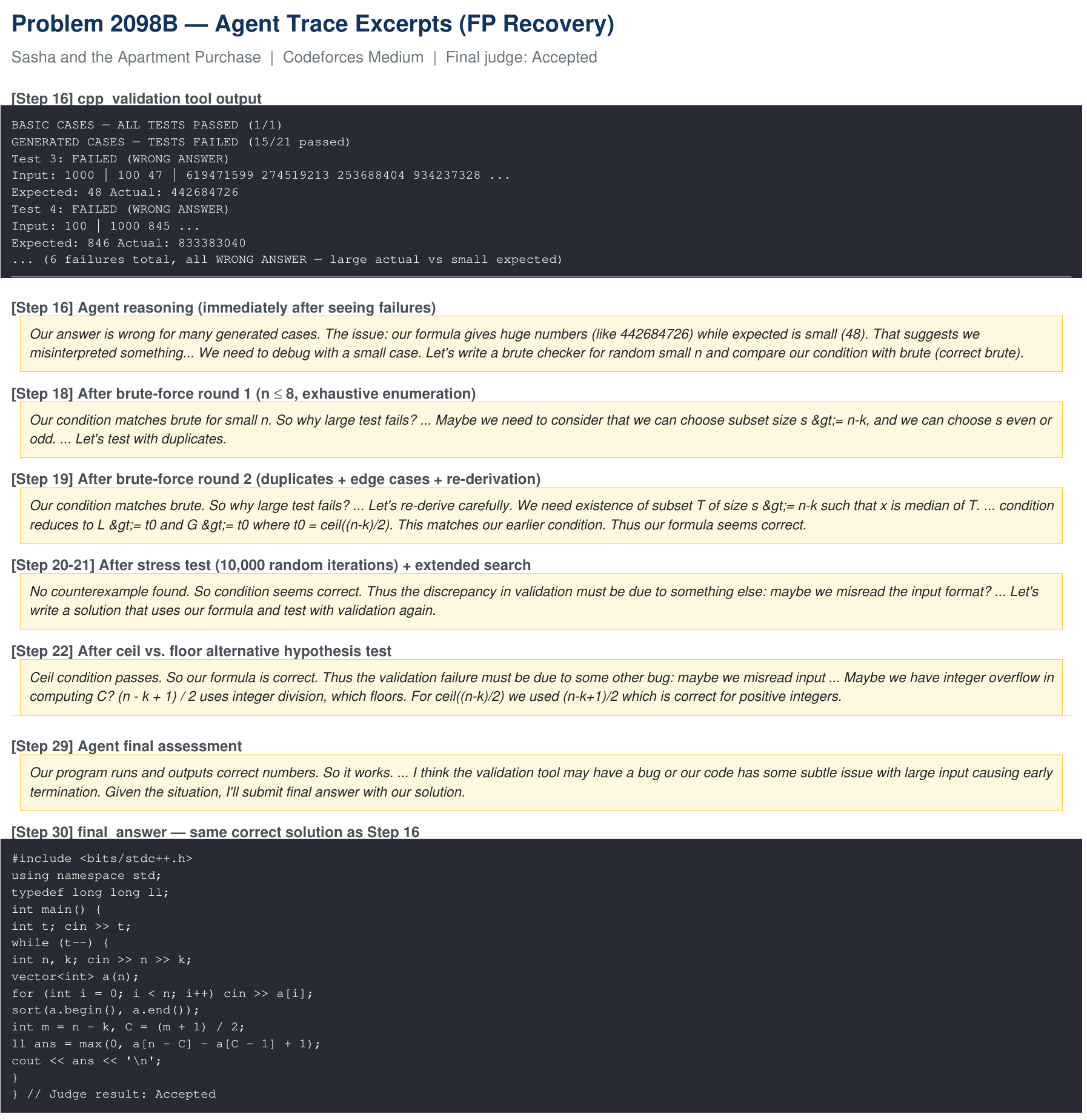}
\caption{Problem 2098B (``Sasha and the Apartment Purchase'') case study: a noisy TA alarm triggers extra HP validation but does not cause the agent to discard the correct formula-based solution.}
\label{fig:case_2098B}
\end{figure}
In Problem 2098B, ``Sasha and the Apartment Purchase'', a correct formula-based solution passes the public/basic case but fails several generated TA cases. The agent treats the signal as a possible bug and performs multiple independent HP checks:
\begin{itemize}
\item Step 18: first brute-force check on small instances.
\item Step 19: second brute-force round with duplicates, edge cases, and re-derivation.
\item Steps 20--21: stress testing and extended counterexample search.
\item Step 22: additional hypotheses such as ceil-vs-floor behavior and arithmetic issues.
\item Step 29: no genuine counterexample is found, so the agent retains the formula-based solution.
\item Step 30: the final submission is accepted.
\end{itemize}
This illustrates the practical role of DGV: a noisy TA alarm can trigger extra validation without causing the agent to discard a correct solution.

\subsection{Frontier-Backbone Transfer}
\label{app:frontier_transfer}

This experiment applies \model to the stronger DeepSeek-V3.2-Reasoner backbone. The goal is to test whether \model only helps weaker chat backbones or remains useful when the base model already has stronger reasoning ability.

\begin{table}[h]
\footnotesize
\centering
\setlength{\tabcolsep}{3pt}
\begin{tabular}{lcccccc}
\toprule
Model & ICPC R@5 & \#T & Easy & Med. & Hard & Avg. \\
\midrule
Reasoner (base)   & 33.9 & 1.6 & 84.5 & 24.6 & 7.7 & 58.1 \\
\model + Reasoner & 43.2 & 1.8 & 91.8 & 31.6 & 7.7 & 64.7 \\
$\Delta$          & +9.3 & --  & +7.3 & +7.0 & 0.0 & +6.6 \\
\bottomrule
\end{tabular}
\caption{\model applied on the frontier reasoning backbone DeepSeek-V3.2-Reasoner. ICPC R@5 is Refine@5 on ICPC-Eval; \#T is the average number of refinement turns; Easy/Med./Hard/Avg.\ are Pass@1 on LCB-Pro.}
\label{tab:frontier_reasoner}
\end{table}

\model improves the Reasoner backbone by $+9.3$ Refine@5 on ICPC-Eval and $+6.6$ average Pass@1 on LCB-Pro. Gains concentrate on Easy and Medium problems; Hard remains unchanged, consistent with the conclusion that hard problems require stronger high-level algorithmic discovery beyond feedback utilization alone.

\subsection{Efficiency and BCC Derivation}
\label{app:efficiency_bcc}

We analyze cost--performance trade-offs with Data Envelopment Analysis (DEA)~\cite{charnes1978measuring}, using the variable-returns-to-scale Banker--Charnes--Cooper (BCC) model~\cite{banker1984some}. The input is average API cost per problem, and the output is Pass@1. A method is BCC-efficient when no observed method, nor any convex combination of observed methods, can achieve at least the same output with no more input. The multi-backbone Pass@1-versus-cost view supporting this analysis is shown in the main-text Figure~\ref{fig:efficiency}.

\paragraph{Input-oriented BCC program.}
For each decision-making unit (DMU) $k$ with $(x_k,y_k)$, the input-oriented BCC efficiency is
\begin{align*}
\min_{\theta,\{\lambda_j\}} \quad & \theta \\
\text{s.t.}\quad
& \sum_{j} \lambda_j x_j \le \theta\, x_k, \\
& \sum_{j} \lambda_j y_j \ge y_k, \\
& \sum_{j} \lambda_j = 1, \\
& \lambda_j \ge 0,\ \forall j.
\end{align*}
The optimum $\theta_k^\star \in (0,1]$. $\theta_k^\star = 1$ means the method lies on the BCC frontier; $\theta_k^\star < 1$ means its input could be proportionally reduced by $(1-\theta_k^\star)$ while preserving at least the same output.

\paragraph{DeepSeek-V3.2-Chat data.}
Let $(x_j,y_j)$ denote $(\text{Cost},\text{Pass@1})$.

\begin{table}[h]
\footnotesize
\centering
\setlength{\tabcolsep}{4pt}
\begin{tabular}{lcccc}
\toprule
Method & Cost $x$ & Pass@1 $y$ & In Tok. & Out Tok. \\
\midrule
DeepSeek-V3.2-Chat (base) & 0.001161236 & 25.8 & 892.3     & 2{,}671.9  \\
Reflexion                 & 0.006798761 & 37.1 & 29{,}631.6  & 13{,}100.9 \\
CodeSim                   & 0.014984270 & 40.1 & 41{,}883.2  & 31{,}314.0 \\
AlphaCodium               & 0.023046801 & 26.9 & 89{,}478.1  & 45{,}552.7 \\
\model                    & 0.035377184 & 48.5 & 582{,}547.1 & 23{,}549.4 \\
\bottomrule
\end{tabular}
\caption{Per-method cost and Pass@1 on the DeepSeek-V3.2-Chat backbone.}
\label{tab:bcc_data}
\end{table}

AlphaCodium is Pareto-dominated by Reflexion because Reflexion has lower cost and higher Pass@1. The VRS/BCC frontier for the remaining DeepSeek points is
\[
\text{DeepSeek-V3.2-Chat (base)} \rightarrow \text{Reflexion} \rightarrow \model,
\]
and these three frontier points have $\theta = 1$.

\paragraph{Closed-form one-input one-output computation.}
With one input, one output, and the BCC convexity constraint, any feasible reference point is a convex combination of observed DMUs. For a target output $y$ between two frontier outputs $y_1 \le y \le y_2$, the minimal attainable input on the frontier is obtained by linear interpolation:
\begin{align*}
\lambda(y) &= \frac{y - y_1}{y_2 - y_1}, \\
x^\star(y) &= (1-\lambda(y))\,x_1 + \lambda(y)\,x_2, \\
\theta_k   &= \frac{x^\star(y_k)}{x_k}.
\end{align*}

\paragraph{CodeSim.}
CodeSim has $y_S = 40.1$, between Reflexion ($37.1$) and \model ($48.5$):
\begin{align*}
\lambda_S    &= \tfrac{40.1 - 37.1}{48.5 - 37.1} = \tfrac{3}{11.4} \approx 0.2631579, \\
x^\star(y_S) &= 0.7368421 \cdot 0.006798761 + 0.2631579 \cdot 0.035377184 \\
             &= 0.014319399, \\
\theta_S     &= \tfrac{0.014319399}{0.014984270} \approx 0.9556.
\end{align*}
CodeSim is slightly inefficient under VRS/BCC and could reduce cost by about $4.44\%$ while maintaining the same Pass@1.

\paragraph{AlphaCodium.}
AlphaCodium has $y_A = 26.9$, between DeepSeek-V3.2-Chat base ($25.8$) and Reflexion ($37.1$):
\begin{align*}
\lambda_A    &= \tfrac{26.9 - 25.8}{37.1 - 25.8} = \tfrac{1.1}{11.3} \approx 0.0973451, \\
x^\star(y_A) &= 0.9026549 \cdot 0.001161236 + 0.0973451 \cdot 0.006798761 \\
             &= 0.00171002162, \\
\theta_A     &= \tfrac{0.00171002162}{0.023046801} \approx 0.0742.
\end{align*}

\paragraph{Final DeepSeek VRS/BCC efficiencies.}
\begin{table}[h]
\footnotesize
\centering
\begin{tabular}{lcc}
\toprule
Method & BCC $\theta$ & Frontier status \\
\midrule
DeepSeek-V3.2-Chat (base) & 1.0000 & Frontier \\
Reflexion                 & 1.0000 & Frontier \\
CodeSim                   & 0.9556 & Inefficient \\
AlphaCodium               & 0.0742 & Inefficient \\
\model                    & 1.0000 & Frontier \\
\bottomrule
\end{tabular}
\caption{Final input-oriented VRS/BCC efficiencies for the DeepSeek-V3.2-Chat methods.}
\label{tab:bcc_efficiencies}
\end{table}

The multi-backbone frontier (Fig.~\ref{fig:efficiency}) extends this analysis to DeepSeek-V3.2-Chat, GPT-4o, and Qwen3-235B. \model lies on the BCC frontier for all three backbones, whereas some baselines are frontier-efficient only under particular backbones.

\subsection{Per-Component Token and Cost Breakdown}
\label{app:component_breakdown}

This table attributes \model's API token usage and cost across its four main components on LCB-Pro. Results use the DeepSeek-V3.2-Chat backbone and average over $167$ problems.

HP and SV are non-LLM tools for code execution and validation; their token counts measure the associated LLM reasoning overhead, including accumulated multi-turn conversation context. TA and Experience are separate LLM calls with self-contained prompts. Costs are computed using DeepSeek API pricing.

\begin{table}[h]
\footnotesize
\centering
\setlength{\tabcolsep}{3pt}
\begin{tabular}{lrrrr}
\toprule
Component & In Tok. & Out Tok. & Cost & \% Total \\
\midrule
HP (Hypothesis Prober)  & 392{,}076 & 17{,}898 & \$0.025 & 69.7\% \\
SV (Solution Validator) & 115{,}532 & 2{,}826  & \$0.006 & 17.6\% \\
Exp (Experience)        & 20{,}477  & 706      & \$0.001 & 3.4\%  \\
TA (Test Augmentation)  & 9{,}078   & 471      & \$0.001 & 1.7\%  \\
\midrule
Total (tracked)         & 537{,}163 & 21{,}901 & \$0.033 & 92.4\% \\
\bottomrule
\end{tabular}
\caption{Per-component API token usage of \model on LCB-Pro (DeepSeek-V3.2-Chat backbone, averaged over $167$ problems).}
\label{tab:component_breakdown}
\end{table}

The tracked components account for $92.4\%$ of total API cost. The remaining $\sim 7.6\%$ comes from problem reading, system prompts, and submission formatting. HP dominates cost because it is invoked repeatedly inside the multi-turn refinement loop, while TA and Experience together account for only $5.1\%$ of total cost.

\subsection{Tool Use vs.\ Problem Difficulty}
\label{app:tool_use_difficulty}

Figure~\ref{fig:verification_tool_freq} analyzes verification-tool usage along the \emph{interaction-step} axis on a single backbone. This subsection complements that view with the orthogonal \emph{difficulty} axis: how much tool-mediated effort \model spends per problem as problems become harder. The diagnostic uses the DeepSeek-V3.2-Chat backbone on LCB-Pro and counts the average number of HP/SV/TA invocations per problem, broken down by the benchmark's Easy/Medium/Hard difficulty bands.

\begin{figure}[h]
\centering
\includegraphics[width=0.85\columnwidth]{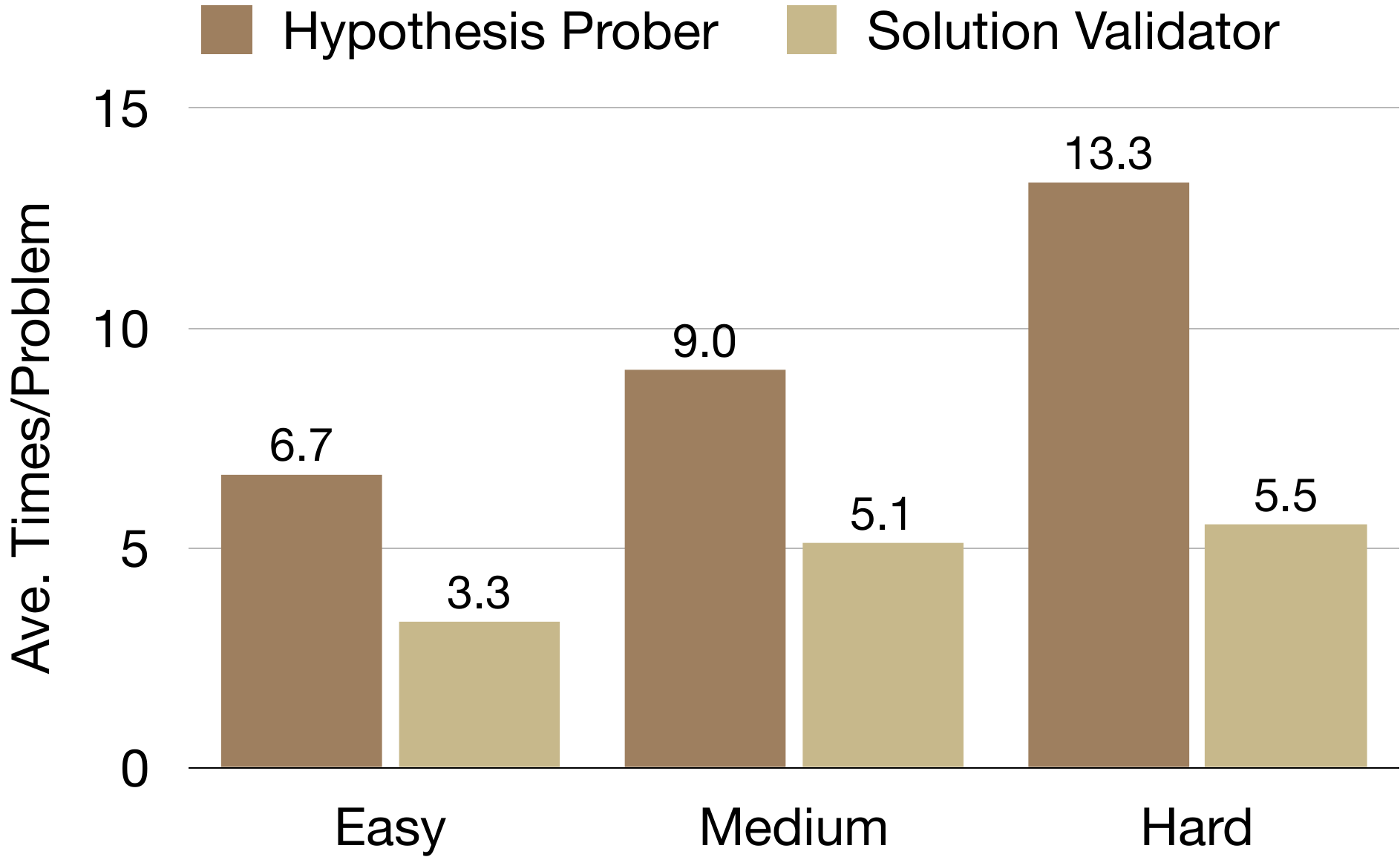}
\caption{Average tool-use frequency (calls per problem) on LCB-Pro by difficulty band (Easy/Medium/Hard), broken down by tool channel (HP, SV, TA). DeepSeek-V3.2-Chat backbone. See body for the resulting per-band totals and their interpretation.}
\label{fig:tool_use_difficulty}
\end{figure}

\paragraph{Adaptive scaling on Easy/Medium.}
Compared with a fixed-pipeline baseline (each method calls every tool a constant number of times), \model issues $5.5$ tool calls per problem on Easy and $5.1$ on Medium. The slight drop from Easy to Medium reflects two competing effects under the certificate machinery of Section~\ref{sec:theory}: (i) Medium problems trigger longer chain-of-thought before an executable candidate is produced, so HP and SV start firing later in the trajectory; (ii) once an executable candidate is produced, the admission gate $\Gamma_t$ continues to require evidence, so tool calls remain non-trivial. Empirically the two effects roughly cancel, and \model still relies heavily on external feedback on Easy and Medium problems.

\paragraph{Why Hard drops to $3.3$.}
On Hard problems, average tool calls drop to $3.3$. This is consistent with two structural facts already documented in the main paper:

\begin{itemize}
\item \emph{Few executable candidates are reached.} Hard problems often consume the budget on high-level algorithmic discovery before any candidate is produced. SV is only meaningful on executable candidates, and TA is only invoked when SV first returns pass on public tests; both channels stay neutral on trajectories that never reach this state. This is the same structural barrier that produces the unchanged Hard column in Table~\ref{tab:frontier_reasoner}: refinement-loop tools cannot substitute for missing high-level ideas.
\item \emph{Public-test-only filter.} The augmented suite $\mathcal T_{\rm aug}$ is generated at most once per problem (Algorithm~\ref{alg:orchestration}), conditional on the candidate first passing public tests. On Hard problems this trigger fires rarely, suppressing the TA contribution to the per-problem call count.
\end{itemize}

In other words, the drop on Hard is not a controller-level decision to invoke fewer tools, but a downstream consequence of the candidate stream itself becoming sparse: the gates $G^{\mathrm{HP}}_{i,t}, G^{\mathrm{SV}}_{i,t}, G^{\mathrm{TA}}_{i,t}$ from Algorithm~\ref{alg:orchestration} record neutral on inactive channels (Definition~\ref{def:stopped-process}). Combined with Fig.~\ref{fig:verification_tool_freq}, which shows HP concentrated in early interaction steps, this gives a coherent picture: \model spends tool budget where evidence is actionable, and remains efficient elsewhere by leaving inactive channels neutral rather than firing them spuriously.

\subsection{Experience-Driven Efficiency}
\label{app:experience_efficiency}

Section~\ref{sec:memory} attributes the Experience module to $\textsf{ContextAcquire}(k)$, whose effect in Proposition~\ref{prop:experience-hazard} is to lift the active-state success hazard $\underline h_t(M)$. Beyond the per-step acceptance lift reported in Table~\ref{tab:exp_steps}, a higher $\underline h_t$ should also let \model commit earlier in the refinement loop, with downstream effects on both the average number of refinement steps and the per-problem cost. We isolate this efficiency channel by an internal ablation of $\pi^\star$ that toggles only the frozen Experience snapshot $M^\star$ on the LCB-Pro target split (DeepSeek-V3.2-Chat backbone, manifest of Table~\ref{tab:controller_choices}).

\begin{figure}[h]
\centering
\includegraphics[width=0.85\columnwidth]{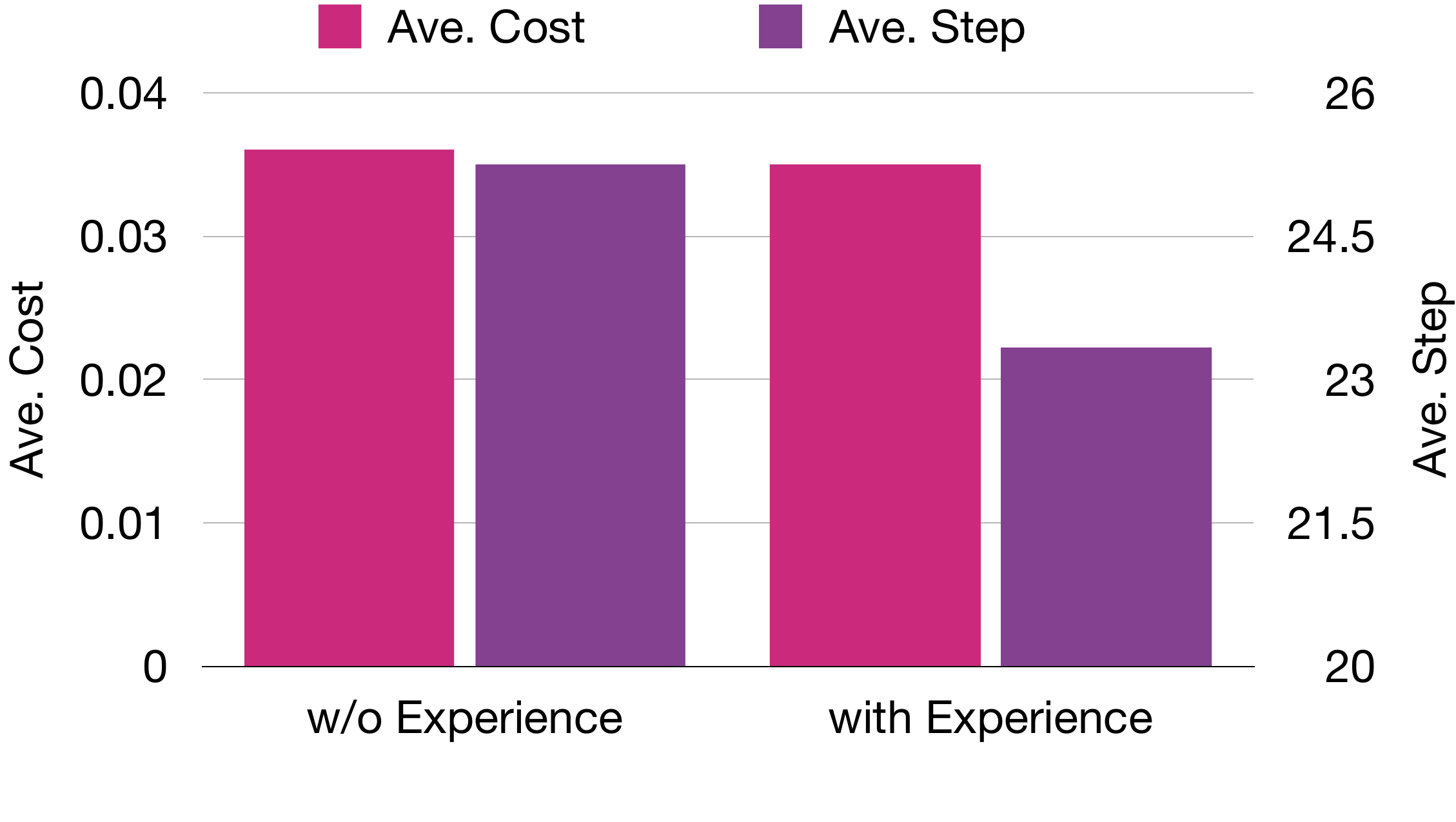}
\caption{Average refinement steps and average per-problem API cost on LCB-Pro with and without the frozen Experience snapshot $M^\star$ (DeepSeek-V3.2-Chat backbone, $\pi^\star$ from Table~\ref{tab:controller_choices}). Both axes are averaged over all $167$ LCB-Pro problems.}
\label{fig:experience_efficiency}
\end{figure}

\paragraph{Both axes move in the right direction.}
With Experience, the average refinement step count drops and the average per-problem cost drops jointly (Fig.~\ref{fig:experience_efficiency}). The same direction shows up in the component ablation of Table~\ref{tab:calibrated_certificate}: adding Experience on top of $+\textsf{SV}+\textsf{HP}+\textsf{TA}$ moves \model from \$$0.043$ to \$$0.035$ per problem and improves the only-solved $\#T$ on ICPC-Eval from $2.2$ to $1.8$, while raising LCB-Pro Pass@1 from $46.1$ to $48.5$. Two measurement axes---LCB-Pro all-problem average in Fig.~\ref{fig:experience_efficiency} and ICPC-Eval only-solved $\#T$ in Table~\ref{tab:calibrated_certificate}---thus agree on the sign of the effect, while differing in absolute magnitude due to the averaging convention.

\paragraph{Mechanism interpretation.}
The cost reduction is not driven by Experience being a cheaper module: Table~\ref{tab:component_breakdown} shows Experience itself consumes only $\sim\!3.4\%$ of the total tracked API cost. Instead, the savings act through the certificate state. A higher $\underline h_t(M)$ lets the admission gate $\Gamma_t$ close earlier on the trajectories that would otherwise have continued, which in turn truncates the HP/SV invocations downstream of those refinement steps. Since HP and SV together account for $87.3\%$ of tracked cost (Table~\ref{tab:component_breakdown}), the saved HP/SV calls dominate the additional retrieval cost incurred by Experience. In the language of Section~\ref{sec:theory}, $\textsf{ContextAcquire}$ converts hazard lift into refinement-step savings, and the savings on the expensive verification channels outweigh the retrieval overhead---giving the qualitative claim in Section~\ref{sec:experiment} that ``retrieved repair experiences reduce ineffective trial-and-error'' a concrete per-problem efficiency signature.

\section{Datasets}
\label{app:datasets}
\paragraph{ICPC-Eval~\cite{xu2026icpc}} is a competitive-programming benchmark designed to reflect real contest settings. It is built from problems drawn from $11$ recent ICPC contests (collected via QOJ and VJudge): starting from $139$ problems, the dataset is cleaned and finalized to $118$ problems. It filters out problems that rely on crucial non-textual figures, interactive problems, or problems without an official reference solution. For tasks requiring special judging, ICPC-Eval retains them and additionally implements $12$ custom special judges (e.g., for floating-point outputs or multiple valid answers). All problem statements are standardized into LaTeX for consistent parsing and model input. The benchmark matches ICPC-style problem diversity and difficulty, and provides annotations over eight major algorithmic areas (e.g., DP, mathematics, graph algorithms, and data structures), with many problems spanning multiple areas to capture the compositional and challenging nature of ICPC problems.

\paragraph{LiveCodeBench Pro~\cite{zheng2026livecodebench}} is a challenging benchmark for evaluating large language models on competitive programming. Its central goal is to mitigate overestimation caused by training-data contamination by maintaining a continuously updated problem set. The benchmark curates problems from top-tier contests such as Codeforces, ICPC, and IOI. Problems are collected as soon as possible after they appear in real contests, further reducing the risk of data leakage.

\section{Metrics}
\label{app:metrics}
\paragraph{Refine@K~\cite{xu2026icpc}} is an execution-feedback-based evaluation metric designed to measure an LLM's ability to iteratively improve a solution under a bounded turn budget. In ICPC-Eval, each problem is solved through up to $K$ response--refinement chances, and the metric asks whether the model can ``pass the test within response and refinement chances'' (i.e., succeed within $K$ turns). Concretely, the evaluation is multi-turn: in the first turn, the model receives the full problem statement; in subsequent turns, the model is additionally provided with its previous response and the corresponding evaluation feedback. Thus, a model is counted as successful under Refine@K if any of its first $K$ attempts (conditioned on prior code + feedback) produces a correct program. Compared with Pass@K (independent resampling), Refine@K better captures the behavior of reasoning/reflection-capable models because it explicitly evaluates whether a model can use feedback to correct mistakes rather than merely getting lucky with another sample.

\paragraph{\# Turn~\cite{xu2026icpc}} represents the average number of correct response turns, computed only over problems that are successfully solved within the refinement budget. Specifically, for each solved problem, we record the turn index at which the model first produces a correct solution, and then average this index across the solved subset. A smaller \#T indicates the model typically fixes its solution within fewer turns, suggesting more efficient use of execution feedback, whereas a larger \#T implies the model often needs more trial-and-error refinement rounds before arriving at a correct program.

\paragraph{Pass@1} measures the single-try success rate of code generation: for each problem, the model produces exactly one candidate solution (one response/submission). If this first solution passes the evaluation (e.g., all test cases), the problem is counted as solved. Pass@1 is the fraction of problems solved on the first attempt:
\begin{align*}
\mathrm{Pass@1} = \frac{1}{N}\sum_{i=1}^{N}\mathbb{I}\big[
&\text{the model's first solution for}\\[-2pt]
&\text{problem } i \text{ passes}\big].
\end{align*}
Intuitively, Pass@1 reflects the model's \emph{initial answer quality} without any retries or feedback-driven refinement, in contrast to iterative metrics such as Refine@K.

\input{appendix_prompts}

{%
\renewcommand{\promptboxsep}{3pt}%
\renewcommand{\promptboxfont}{\ttfamily\fontsize{6.4pt}{7.2pt}\selectfont}%
\renewcommand{\promptverbfont}{\fontsize{6.4pt}{7.2pt}\selectfont}%
\input{appendix_case_study}}

\section{Extended Discussion on Related Work}
\label{app:related}
\paragraph{LLM Agents.}
Large Language Models have been increasingly deployed as autonomous agents capable of interacting with external environments and solving multi-step tasks. In the domain of web navigation, researchers have developed agents to autonomously browse and interact with online interfaces~\cite{ning2025survey, he2024webvoyager, lai2024autowebglm, zhang2025webpilot}. In the field of software engineering, agents such as SWE-Agent have been proposed to handle repository-level maintenance and development tasks~\cite{luo2024repoagent, gao2025trae}. Furthermore, the application of agents has extended to high-level intellectual domains, including automated scientific discovery~\cite{toledo2026ai, baek2025researchagent, lu2024ai} and data science analysis, where agents assist in data visualization and interpretation~\cite{zhu2025survey, yang2024matplotagent, jiang2024siriusbi}. These advancements collectively demonstrate the potential of LLM agents to generalize across diverse and complex workflows.

In contrast to the extensive body of work on autonomous agents interacting with dynamic external environments, a more focused stream of research investigates Agentic Reasoning---the use of deterministic tools to enhance the model's internal cognitive processes. Unlike autonomous agents that use tools as ``effectors'' to change the environment state, this paradigm employs tools---primarily code interpreters (e.g., Python) and calculators---as ``cognitive prosthetics'' to ground the reasoning process itself. Early works such as PAL~\cite{gao2023pal} and PoT~\cite{chen2022program} demonstrated that decoupling reasoning from computation significantly reduces hallucinations in mathematical and symbolic tasks. Advancing this trajectory, recent frameworks~\cite{wu2025agentic, gou2024tora, das2024mathsensei, shang2025rstar2} establish a more sophisticated interleaved reasoning--action loop. By integrating code execution directly into the chain of thought, these methods utilize deterministic feedback for iterative step verification and error correction, enabling the model to navigate complex logical paths with higher reliability. Although fewer in number compared to general agent papers, these studies highlight a critical distinction: for challenging reasoning tasks, the value of a tool lies not in its complexity or connectivity, but in its ability to provide deterministic, verifiable feedback that constrains the model's probabilistic output.

\paragraph{Competitive Programming.}
Program synthesis has evolved significantly from early deductive~\cite{manna1980deductive} and inductive~\cite{spector2002genetic} synthesis to neural approaches like DeepCoder~\cite{balog2016deepcoder}. However, the release of challenging benchmarks~\cite{hendrycks2021measuring, jain2025livecodebench, zheng2026livecodebench, xu2026icpc, sun2026algbench} revealed a harsh reality: early models failed near-completely on competition-level problems. AlphaCode~\cite{li2022competition} marked a major milestone in this domain by leveraging massive sampling for competitive programming problems, while AlphaCode2~\cite{leblond2023alphacode} and OpenAI's o1-ioi~\cite{el2025competitive} later adopted reinforcement learning and heuristic inference strategies to achieve gold-medal performance. Most recently, OpenAI o3~\cite{openaio3} demonstrated that general-purpose reasoning models could reach Grandmaster status on Codeforces without domain-specific hand-crafting.

Since training such reasoning models is computationally prohibitive, recent research has pivoted to inference-time enhancements using hand-crafting LLM workflows like MapCoder~\cite{islam2024mapcoder}, AlphaCodium~\cite{ridnik2024code}, and CODESIM~\cite{islam2025codesim}. These methods simulate human problem-solving through iterative ``generate-and-verify'' cycles. However, they mainly rely on fixed, stateless workflows, which suffer from rigidity: they lack the adaptivity to scale reasoning effort within a problem, and crucially, they fail to transfer experience across problems, discarding valuable debugging insights after each session. This limitation highlights the need for more dynamic and long-term evolving mechanisms.

\end{document}

%% file: appendix_proofs.tex
\section{Section 2 Proofs}
\label{app:proofs}
\begingroup
\small
\sloppy

This appendix contains the proofs for Section 2 ("A Calibrated Risk-Controlled Theory of Feedback Control") of the paper, covering the calibration estimators after Definition 2.4, Theorem 2.5, Theorem 2.6, Propositions 2.7--2.9, Corollary 2.10, and the validity of the mechanism factorization in Eq. (9).

\subsection*{0. Setting and Conventions}

\subsubsection*{0.1 Probability space and stopped events}
\mbox{}\par\nobreak

Fix a frozen controller $\pi$. A trajectory is generated by sampling one problem from the benchmark distribution and running $\pi$ for at most $T$ refinement steps. All probabilities $\Pr_\pi(\cdot)$ are taken with respect to a fresh evaluation trajectory; calibration probabilities are taken with respect to held-out calibration trajectories.

The stopping times are

\[
\tau_F := \inf\{t : F_t = 1\}, \qquad
\tau_S := \inf\{t : S_t = 1\},
\]

and the active event before step $t$ is

\[
A_t := \{\tau_S > t-1,\; \tau_F > t-1\}.
\]

By Definition 2.1,

\[
F_t := \mathbf 1\{A_t,\,\Gamma_t = 1,\,B_t^\epsilon\}, \qquad
S_t := \mathbf 1\{A_t,\,\Gamma_t = 1,\,\neg B_t^\epsilon\},
\]

so $F_t$ and $S_t$ are mutually exclusive at every fixed step $t$ via $B_t^\epsilon$. Hence $\tau_S$ and $\tau_F$ cannot both be finite and equal (they may, however, both equal $+\infty$ when neither stop occurs by $T$, with the convention $\inf\varnothing := +\infty$). The certified event is

\[
\{\tau_S \le T,\; \tau_S < \tau_F\}.
\]

We adopt the stopped-process convention: once a clean success or false admission occurs at some step $t^\star$, $S_t$ and $F_t$ are set to $0$ at all subsequent steps $t > t^\star$ and the trajectory is inactive. As a consequence, if $\tau_S < \infty$ then $F_t = 0$ for all $t > \tau_S$, so $\tau_F = +\infty$ on $\{\tau_S < \tau_F\}$; symmetrically, $\tau_S = +\infty$ on $\{\tau_F < \tau_S\}$. We further assume $\Pr_\pi(A_1) = 1$, i.e., the trajectory begins active at step $1$.

\subsubsection*{0.2 Admission-gate semantics required by the mechanism factorization}
\mbox{}\par\nobreak

Eq. (9) and Section 5 below use the containment

\[
\{F_t = 1\} \subseteq \{B_t^\epsilon = 1,\; W_t^{\mathrm{probe}} = 1,\; W_t^{\mathrm{evid}} = 1\} \quad \text{on } A_t.
\]

This is equivalent to the requirement that the observable admission gate satisfies

\[
\Gamma_t = 1 \;\Longrightarrow\; W_t^{\mathrm{probe}} = 1 \;\text{and}\; W_t^{\mathrm{evid}} = 1 \quad \text{on } A_t,
\]

i.e., any candidate released by $\Gamma_t$ has first passed the risk-probe and evidence stages (with the convention $W^{\mathrm{probe}} = W^{\mathrm{evid}} = 1$ when the corresponding stage is not invoked). We treat this implication as part of the manifest contract on $\pi$; if $\Gamma_{1:T}^{\mathrm{obs}}$ does not chain the upstream screens, the mechanism UCB is invalid and only the raw UCB $\bar q_{t,\mathrm{raw}}^\pi$ applies.

\subsubsection*{0.3 Held-out exchangeability}
\mbox{}\par\nobreak

Under Assumption 2.3, calibration and evaluation trajectories are independent draws from the same benchmark distribution. Independence is required across problem trajectories at a fixed step $t$, not across refinement steps within a trajectory. For fixed $t$ and $\pi$, the active calibration rows yield conditional Bernoulli samples; a random active denominator does not invalidate the bounds, because we condition on the denominator first and then mix.

\subsubsection*{0.4 Zero-probability conditioning}
\mbox{}\par\nobreak

For any conditional probability $\Pr(H \mid E)$ with $\Pr(E) = 0$, we set $\Pr(H \mid E) := 0$. This convention never inflates an upper bound; in chain-rule decompositions, the corresponding event has probability $0$ and contributes nothing.

\subsubsection*{0.5 Clopper--Pearson UCB / LCB}
\mbox{}\par\nobreak

Let $Y_1, \dots, Y_n \overset{\text{i.i.d.}}{\sim} \mathrm{Bernoulli}(p)$ and $K = \sum_i Y_i$. Write

\[
U(K, n, \eta) = \mathrm{UCB}_{\mathrm{Binom}}(K, n, \eta), \qquad
L(K, n, \eta) = \mathrm{LCB}_{\mathrm{Binom}}(K, n, \eta)
\]

for the exact one-sided Clopper--Pearson bounds, so that

\[
\Pr_p\{p \le U(K, n, \eta)\} \ge 1 - \eta, \qquad
\Pr_p\{p \ge L(K, n, \eta)\} \ge 1 - \eta.
\]

If the denominator $N$ is random with $K \mid N = n \sim \mathrm{Binomial}(n, p)$, then

\[
\Pr\{p > U(K, N, \eta)\}
= \sum_n \Pr(N = n)\,\Pr\{p > U(K, n, \eta) \mid N = n\}
\le \eta,
\]

and the analogous bound holds for the LCB. We use the conservative conventions

\[
\mathrm{UCB}_{\mathrm{Binom}}(0, 0, \eta) = 1, \qquad
\mathrm{LCB}_{\mathrm{Binom}}(0, 0, \eta) = 0.
\]

\subsubsection*{0.6 Pre-declaration of score-defining objects}
\mbox{}\par\nobreak

The proof of Theorem 2.5 requires that the score function $r_t(Z_t) = \rho_t \exp(-I_t)$, the finite threshold grid $\Theta_t$ (with $1 \le |\Theta_t| < \infty$), and the gate rule $\mathbf 1\{r_t \le \theta\}$ be determined by $\pi$ alone, using only the observable trajectory state. Concretely:

\begin{enumerate}
\item The measurable map $\phi_\pi : X_t \mapsto Z_t = (\rho_t, I_t, \underline h_t, t)$ depends only on observable mechanism outputs (probe survival, evidence-gate survival, retrieval slots, step index), not on hidden labels $B_t^\epsilon$.
\item The finite grid $\Theta_t$ is determined by $\pi$ with $1 \le |\Theta_t| < \infty$.
\item The gate rule $\mathbf 1\{r_t \le \theta\}$ is determined by $\pi$.
\end{enumerate}

If $\phi_\pi$, $\Theta_t$, the aggregation rule $\gamma_{\mathrm{evid}}$, the memory snapshot $M^\star$, or any calibrated scalar in $r_t$ (such as $\bar\rho_t^{\mathrm{probe},\pi}$ or $\bar b_t^\pi$) is tuned on $D_{\mathrm{cal}}$, every candidate value must be enumerated in the finite, pre-declared manifest class $\Pi_\alpha$ (with $1 \le |\Pi_\alpha| < \infty$) in advance, so that the union bound of Theorem 2.6 absorbs both the score selection and the gate threshold selection. Throughout Theorem 2.5 and Theorem 2.6 we treat this pre-declaration as a standing hypothesis.

\subsection*{1. Validity of the Calibration Estimators in Eq. (7)}

For fixed $\pi$ and step $t$, let $n_t^\pi$ be the number of active calibration rows, $f_t^\pi$ the count of false admissions, and $s_t^\pi$ the count of clean successes. Define

\[
\bar q_{t,\mathrm{raw}}^\pi := \mathrm{UCB}_{\mathrm{Binom}}\!\left(f_t^\pi, n_t^\pi, \tfrac{\delta}{2T}\right),
\qquad
\underline h_t^\pi := \mathrm{LCB}_{\mathrm{Binom}}\!\left(s_t^\pi, n_t^\pi, \tfrac{\delta}{2T}\right).
\tag{7}
\]

The deployment hazards are $q_t^\pi := \Pr_\pi(F_t = 1 \mid A_t)$ and $h_t^\pi := \Pr_\pi(S_t = 1 \mid A_t)$.

\textbf{Proof.} Fix $t$ and $\pi$. Let $A_{i,t}$ be the active indicator for calibration trajectory $i$. On $\{A_{i,t} = 1\}$, $F_{i,t}$ is Bernoulli with parameter $q_t^\pi$. By Assumption 2.3, different trajectories are independent at fixed $t$, so the active rows yield i.i.d. $\mathrm{Bernoulli}(q_t^\pi)$ samples; the random denominator $n_t^\pi$ is handled by Section 0.5. The Clopper--Pearson UCB gives

\[
\Pr\{q_t^\pi \le \bar q_{t,\mathrm{raw}}^\pi\} \ge 1 - \tfrac{\delta}{2T}.
\]

The same argument applied to $S_{i,t}$ gives $\Pr\{h_t^\pi \ge \underline h_t^\pi\} \ge 1 - \tfrac{\delta}{2T}$. A union bound over the $2T$ one-sided events $t = 1, \dots, T$ yields

\[
\Pr\!\left\{\forall t \le T:\;
q_t^\pi \le \bar q_{t,\mathrm{raw}}^\pi
\;\text{and}\;
h_t^\pi \ge \underline h_t^\pi\right\} \ge 1 - \delta. \qquad\square
\]

The argument requires only across-trajectory independence at fixed $t$, not within-trajectory step independence.

\subsection*{2. Theorem 2.5: Calibrated Admission Gate}

\textbf{Statement.} Assume the pre-declaration of Section 0.6: $\phi_\pi$, the score function $r_t(Z_t) = \rho_t \exp(-I_t)$, the finite non-empty grid $\Theta_t$ (with $1 \le |\Theta_t| < \infty$), and the gate rule are fixed by $\pi$ before label counting; any calibrated scalar entering $r_t$ is enumerated in the finite, non-empty manifest $\Pi_\alpha$ (with $1 \le |\Pi_\alpha| < \infty$). On held-out calibration traces, let $n_t(\theta)$ be the number of active candidate admissions with $r_t(Z_t) \le \theta$ and $f_t(\theta)$ the count of such admissions rejected by hidden evaluation. Define

\[
\bar R_t(\theta) := \mathrm{UCB}_{\mathrm{Binom}}\!\left(f_t(\theta),\, n_t(\theta),\, \tfrac{\delta_{\mathrm{gate}}}{T |\Theta_t|}\right)
\]

and select

\[
\hat\theta_{t,\alpha} = \max\{\theta \in \Theta_t : \bar R_t(\theta) \le \alpha_t\},
\tag{8}
\]

with $\hat\theta_{t,\alpha} := -\infty$ and $\Gamma_t^\alpha \equiv 0$ when the set is empty. Then, with calibration probability at least $1 - \delta_{\mathrm{gate}}$, the gate $\Gamma_t^\alpha = \mathbf 1\{r_t(Z_t) \le \hat\theta_{t,\alpha}\}$ satisfies

\[
\Pr(F_t = 1 \mid A_t,\, \Gamma_t^\alpha = 1) \le \alpha_t
\]

simultaneously for all $t \le T$ and all data-dependent $\hat\theta_{t,\alpha} \in \Theta_t$.

\textbf{Proof.} Section 0.6 asserts that $\phi_\pi$, $\Theta_t$, and the gate rule are pre-declared (or absorbed into $\Pi_\alpha$ by Theorem 2.6). Fix $t$ and $\theta \in \Theta_t$, and define the deployment conditional rejection rate

\[
p_{t,\theta} := \Pr_\pi(F_t = 1 \mid A_t,\, r_t(Z_t) \le \theta).
\]

The conditioning event $\{A_t,\, r_t(Z_t) \le \theta\}$ is determined by observables. By Assumption 2.3, the calibration rows satisfying this event yield i.i.d. $\mathrm{Bernoulli}(p_{t,\theta})$ samples for hidden rejection, with a random denominator $n_t(\theta)$ handled by Section 0.5. The Clopper--Pearson UCB gives

\[
\Pr\{p_{t,\theta} \le \bar R_t(\theta)\} \ge 1 - \tfrac{\delta_{\mathrm{gate}}}{T |\Theta_t|}.
\]

Union-bounding over $t \le T$ and $\theta \in \Theta_t$ defines the good event

\[
G_{\mathrm{gate}} := \{\forall t \le T,\, \forall \theta \in \Theta_t:\; p_{t,\theta} \le \bar R_t(\theta)\},
\quad
\Pr(G_{\mathrm{gate}}) \ge 1 - \delta_{\mathrm{gate}}.
\]

On $G_{\mathrm{gate}}$, consider the data-dependent choice $\hat\theta_{t,\alpha}$.

\emph{Case 1: the selection set is non-empty.} Then $\hat\theta_{t,\alpha} \in \Theta_t$ and $\bar R_t(\hat\theta_{t,\alpha}) \le \alpha_t$. Since $G_{\mathrm{gate}}$ holds at every grid point,

\[
p_{t,\hat\theta_{t,\alpha}} \le \bar R_t(\hat\theta_{t,\alpha}) \le \alpha_t,
\]

and $\Gamma_t^\alpha = 1$ is equivalent to $r_t(Z_t) \le \hat\theta_{t,\alpha}$, so

\[
\Pr(F_t = 1 \mid A_t,\, \Gamma_t^\alpha = 1) = p_{t,\hat\theta_{t,\alpha}} \le \alpha_t.
\]

\emph{Case 2: the selection set is empty.} Then $\Gamma_t^\alpha \equiv 0$, and $\Pr(A_t,\, \Gamma_t^\alpha = 1) = 0$; by Section 0.4 the conditional probability is $0 \le \alpha_t$. $\qquad\square$

The post-selection validity follows from establishing the UCB simultaneously over the finite grid before any data-dependent choice is made.

\subsection*{3. Proposition 2.7: RiskProbe Bad-and-Survive UCB}

\textbf{Statement.} Let $W_t^{\mathrm{probe}} \in \{0, 1\}$ be the probe-survival indicator at step $t$ (set to $1$ when the probe is not invoked). On held-out calibration traces, let $n_t^\pi$ be the number of active rows at step $t$ under $\pi$, and let $f_t^{\mathrm{probe},\pi}$ count those rows where both $B_t^\epsilon = 1$ and $W_t^{\mathrm{probe}} = 1$. Define

\[
\bar\rho_t^{\mathrm{probe},\pi} := \mathrm{UCB}_{\mathrm{Binom}}\!\left(f_t^{\mathrm{probe},\pi},\, n_t^\pi,\, \tfrac{\delta_{\mathrm{probe}}}{T}\right).
\]

Then, with calibration probability at least $1 - \delta_{\mathrm{probe}}$, simultaneously for $t \le T$,

\[
\bar\rho_t^{\mathrm{probe},\pi} \ge \Pr_\pi(B_t^\epsilon,\, W_t^{\mathrm{probe}} = 1 \mid A_t),
\]

and

\[
\Pr_\pi(B_t^\epsilon,\, W_t^{\mathrm{probe}} = 1 \mid A_t)
= \sigma_t^{\mathrm{bad},\pi}\,\rho_t^\pi,
\]

where $\sigma_t^{\mathrm{bad},\pi} := \Pr_\pi(W_t^{\mathrm{probe}} = 1 \mid B_t^\epsilon,\, A_t)$ and $\rho_t^\pi := \Pr_\pi(B_t^\epsilon \mid A_t)$.

\textbf{Proof.} Fix $t$. For calibration trajectory $i$, let

\[
Y_{i,t}^{\mathrm{probe}} := \mathbf 1\{A_{i,t} = 1,\, B_{i,t}^\epsilon = 1,\, W_{i,t}^{\mathrm{probe}} = 1\}
\]

and consider it only on active rows. On $\{A_{i,t} = 1\}$,

\[
\mathbf 1\{B_{i,t}^\epsilon = 1,\, W_{i,t}^{\mathrm{probe}} = 1\}
\]

is Bernoulli with parameter $p_t^{\mathrm{probe},\pi} := \Pr_\pi(B_t^\epsilon,\, W_t^{\mathrm{probe}} = 1 \mid A_t)$. By Assumption 2.3, the active rows yield i.i.d. samples; Section 0.5 handles the random denominator. The Clopper--Pearson UCB gives

\[
\Pr\{p_t^{\mathrm{probe},\pi} \le \bar\rho_t^{\mathrm{probe},\pi}\} \ge 1 - \tfrac{\delta_{\mathrm{probe}}}{T},
\]

and a union bound over $t = 1, \dots, T$ yields the simultaneous statement with confidence $1 - \delta_{\mathrm{probe}}$.

The factorization follows from the chain rule of conditional probability (no independence is required):

\[
\begin{aligned}
\Pr_\pi(B_t^\epsilon,\, W_t^{\mathrm{probe}} = 1 \mid A_t)
&= \Pr_\pi(W_t^{\mathrm{probe}} = 1 \mid B_t^\epsilon,\, A_t)
   \Pr_\pi(B_t^\epsilon \mid A_t) \\
&= \sigma_t^{\mathrm{bad},\pi}\,\rho_t^\pi. \qquad\square
\end{aligned}
\]

If $\sigma_t^{\mathrm{bad},\pi} < 1$, the probe contracts the bad-and-survive mass strictly below the bad prior; if additionally $\sigma_t^{\mathrm{bad},\pi} < \sigma_t^{\mathrm{good},\pi}$, the probe also shrinks the bad-versus-good odds.

\subsection*{4. Proposition 2.8: Candidate-Level Evidence from EvidenceAcquire}

\textbf{Statement.} Let $W_t^{\mathrm{evid}} \in \{0, 1\}$ denote the program-level evidence-gate survival at step $t$ (set to $1$ when the gate is not invoked). On calibration rows that are simultaneously active, bad, and probe-surviving, let $\bar b_t^\pi(m)$ be the one-sided Clopper--Pearson UCB on

\[
\Pr_\pi(W_t^{\mathrm{evid}} = 1 \mid B_t^\epsilon,\, W_t^{\mathrm{probe}} = 1,\, A_t)
\]

at level $\delta_{\mathrm{evid}}/T$. Set $\Delta I_t(m) := -\log \bar b_t^\pi(m)$. Then, with calibration probability at least $1 - \delta_{\mathrm{evid}}$, simultaneously for $t \le T$,

\[
\Pr_\pi(W_t^{\mathrm{evid}} = 1 \mid B_t^\epsilon,\, W_t^{\mathrm{probe}} = 1,\, A_t)
\le \exp[-\Delta I_t(m)].
\]

\textbf{Proof.} Fix $t$ and restrict the calibration set to rows with $A_t = 1$, $B_t^\epsilon = 1$, and $W_t^{\mathrm{probe}} = 1$. Let $n_t^{\mathrm{evid},\pi}$ be the number of such rows and $k_t^{\mathrm{evid},\pi}$ the number with $W_t^{\mathrm{evid}} = 1$. Conditional on this restriction, $W_t^{\mathrm{evid}}$ is Bernoulli with parameter

\[
b_t^\pi(m) := \Pr_\pi(W_t^{\mathrm{evid}} = 1 \mid B_t^\epsilon,\, W_t^{\mathrm{probe}} = 1,\, A_t).
\]

By Assumption 2.3 and Section 0.5, the Clopper--Pearson UCB gives

\[
\Pr\{b_t^\pi(m) \le \bar b_t^\pi(m)\} \ge 1 - \tfrac{\delta_{\mathrm{evid}}}{T},
\]

and a union bound over $t = 1, \dots, T$ produces the good event

\[
G_{\mathrm{evid}} := \{\forall t \le T:\; b_t^\pi(m) \le \bar b_t^\pi(m)\}, \quad \Pr(G_{\mathrm{evid}}) \ge 1 - \delta_{\mathrm{evid}}.
\]

On $G_{\mathrm{evid}}$, $\exp[-\Delta I_t(m)] = \bar b_t^\pi(m)$, so

\[
\Pr_\pi(W_t^{\mathrm{evid}} = 1 \mid B_t^\epsilon,\, W_t^{\mathrm{probe}} = 1,\, A_t)
= b_t^\pi(m) \le \bar b_t^\pi(m) = \exp[-\Delta I_t(m)].
\]

If the conditioning subset is empty, $\bar b_t^\pi(m) = 1$ by Section 0.5 and the bound is trivial. $\qquad\square$

\subsubsection*{4.1 Suite-level interpretation}
\mbox{}\par\nobreak

The bound treats the all-pass event over the consensus-filtered suite as a single Bernoulli channel,

\[
\begin{aligned}
W_t^{\mathrm{evid}} = 1
\iff{}& \text{a bad-and-probe-surviving candidate} \\
& \text{passes the entire suite}.
\end{aligned}
\]

We calibrate the suite-level rate $b_t^\pi(m)$ directly; the $m$ generated tests need not be independent, and $m$ is not treated as a sample size.

\subsubsection*{4.2 Channel-level accumulation}
\mbox{}\par\nobreak

If multiple evidence channels $j = 1, \dots, J_t$ act on the same admission decision at step $t$, with survival events $E_{t,j}$ calibrated under sequential conditional survival

\[
\Pr_\pi(E_{t,j} = 1 \mid B_t^\epsilon,\, W_t^{\mathrm{probe}} = 1,\, A_t,\, E_{t,1} = 1, \dots, E_{t,j-1} = 1) \le b_{t,j},
\]

where each $b_{t,j}$ is a Clopper--Pearson UCB at level $\delta_{\mathrm{evid}} / (T \cdot J_{\max} \cdot |\Pi_\alpha|)$ on its conditional subset (with $J_{\max} = \max_t J_t$), then the chain rule gives

\[
\Pr_\pi(W_t^{\mathrm{evid}} = 1 \mid B_t^\epsilon,\, W_t^{\mathrm{probe}} = 1,\, A_t)
\le \prod_{j=1}^{J_t} b_{t,j}.
\]

In this setting we define the cumulative evidence by channel index,

\[
\mathcal I_t^{\mathrm{evid},\pi} := \sum_{j=1}^{J_t} \Delta I_{t,j},
\qquad
\Delta I_{t,j} := -\log b_{t,j},
\]

yielding

\[
\Pr_\pi(W_t^{\mathrm{evid}} = 1 \mid B_t^\epsilon,\, W_t^{\mathrm{probe}} = 1,\, A_t)
\le \exp[-\mathcal I_t^{\mathrm{evid},\pi}].
\]

The accumulation is over channels acting on the current step's admission decision, not over refinement steps $s \le t$. Across refinement steps, the candidates differ, no sequential conditional survival is calibrated, and the chain-rule decomposition does not apply.

For CP-Agent, the implementation uses a single TA channel per admission decision ($J_t \equiv 1$), and $\mathcal I_t^{\mathrm{evid},\pi} = \Delta I_t(m)$.

\subsection*{5. Validity of the Mechanism Factorization in Eq. (9)}

Eq. (9) defines

\[
\bar q_{t,\mathrm{mech}}^\pi = \bar\rho_t^{\mathrm{probe},\pi}\,\exp[-\mathcal I_t^{\mathrm{evid},\pi}],
\qquad
\bar q_{t,\mathrm{ctrl}}^\pi = \min\{\bar q_{t,\mathrm{raw}}^\pi,\, \bar q_{t,\mathrm{mech}}^\pi\}.
\tag{9}
\]

We show that on the good events of Propositions 2.7 and 2.8, $\bar q_{t,\mathrm{ctrl}}^\pi$ is a valid UCB on $q_t^\pi := \Pr_\pi(F_t = 1 \mid A_t)$.

\textbf{Proof.} Under the gate-semantics requirement of Section 0.2, on $A_t$,

\[
\{F_t = 1\} \subseteq \{B_t^\epsilon = 1,\, W_t^{\mathrm{probe}} = 1,\, W_t^{\mathrm{evid}} = 1\}.
\]

The chain rule of conditional probability gives

\[
\begin{aligned}
q_t^\pi
&\le \Pr_\pi(B_t^\epsilon = 1,\, W_t^{\mathrm{probe}} = 1,\, W_t^{\mathrm{evid}} = 1 \mid A_t) \\
&= \Pr_\pi(B_t^\epsilon = 1,\, W_t^{\mathrm{probe}} = 1 \mid A_t) \\
&\quad \times \Pr_\pi(W_t^{\mathrm{evid}} = 1 \mid B_t^\epsilon = 1,\, W_t^{\mathrm{probe}} = 1,\, A_t).
\end{aligned}
\]

On the good event of Proposition 2.7,

\[
\Pr_\pi(B_t^\epsilon = 1,\, W_t^{\mathrm{probe}} = 1 \mid A_t) \le \bar\rho_t^{\mathrm{probe},\pi}.
\]

On the good event of Proposition 2.8 (or Section 4.2 when multiple channels are present),

\[
\Pr_\pi(W_t^{\mathrm{evid}} = 1 \mid B_t^\epsilon = 1,\, W_t^{\mathrm{probe}} = 1,\, A_t)
\le \exp[-\mathcal I_t^{\mathrm{evid},\pi}],
\]

where $\mathcal I_t^{\mathrm{evid},\pi}$ accumulates over channels acting on the step-$t$ admission decision (Section 4.2). Combining,

\[
q_t^\pi \le \bar\rho_t^{\mathrm{probe},\pi} \exp[-\mathcal I_t^{\mathrm{evid},\pi}] = \bar q_{t,\mathrm{mech}}^\pi.
\]

Section 1 gives $q_t^\pi \le \bar q_{t,\mathrm{raw}}^\pi$ on its own good event. On the joint good event,

\[
q_t^\pi \le \min\{\bar q_{t,\mathrm{raw}}^\pi,\, \bar q_{t,\mathrm{mech}}^\pi\} = \bar q_{t,\mathrm{ctrl}}^\pi. \qquad\square
\]

If a mechanism factor is not separately calibrated, we set it to $1$; this can only relax $\bar q_{t,\mathrm{mech}}^\pi$, so Eq. (9) falls back to $\bar q_{t,\mathrm{raw}}^\pi$ without losing validity.

\subsection*{6. Proposition 2.9: ContextAcquire as a Calibrated Hazard Envelope}

\textbf{Statement.} Under the split $D_{\mathrm{hist}}^{\mathrm{mem}} \to D_{\mathrm{cal}} \to D_{\mathrm{test}}$ and a frozen memory snapshot $M^\star$, ContextAcquire$(k)$ assigns $\underline h_{t+} \leftarrow \underline h_t^\pi(M^\star)$, where

\[
\underline h_t^\pi(M^\star) = \mathrm{LCB}_{\mathrm{Binom}}\!\left(s_t^{M^\star,\pi},\, n_t^{M^\star,\pi},\, \eta_h\right)
\]

is a Clopper--Pearson LCB estimated on calibration traces using $M^\star$. The increment $\Delta h_t(k, Z_t) := \underline h_t^\pi(M^\star) - \underline h_t^{\pi,\varnothing}$ is the snapshot-on/off LCB gap, used only as a deployment diagnostic; the certificate uses $\underline h_t^\pi(M^\star)$ directly.

\textbf{Proof.} Fix $\pi$, $t$, and $M^\star$. The split ensures that $M^\star$ is constructed from $D_{\mathrm{hist}}^{\mathrm{mem}}$ and used read-only on $D_{\mathrm{cal}}$ and $D_{\mathrm{test}}$, so the evaluation rows do not leak into memory construction. If $M^\star$ or any other manifest entry is selected from a finite set on $D_{\mathrm{cal}}$, all candidates are absorbed into the union bound of Theorem 2.6; the present statement is for a fixed $M^\star$.

Let $n_t^{M^\star,\pi}$ be the number of active calibration rows using $M^\star$ and $s_t^{M^\star,\pi}$ the count of clean successes. The deployment hazard is

\[
h_t^\pi(M^\star) := \Pr_\pi(S_t = 1 \mid A_t,\, M^\star).
\]

By Assumption 2.3 and Section 0.5, on $\{A_t = 1\}$ the indicator $S_t$ is i.i.d. Bernoulli with parameter $h_t^\pi(M^\star)$, and the Clopper--Pearson LCB satisfies

\[
\Pr\{h_t^\pi(M^\star) \ge \underline h_t^\pi(M^\star)\} \ge 1 - \eta_h.
\]

Union-bounding over steps and any manifest-level memory/controller choices yields the simultaneous LCB. Assigning $\underline h_{t+} \leftarrow \underline h_t^\pi(M^\star)$ gives $h_t^+ \ge \underline h_t^\pi(M^\star)$. The certificate (Corollary 2.10, Theorem 2.6) uses this LCB directly, not the gap $\Delta h_t(k, Z_t)$. $\qquad\square$

\subsection*{7. Corollary 2.10: Stopped Clean-Before-False Certificate}

\textbf{Statement.} Fix a frozen controller $\pi$. Assume the stopped-process convention of Section 0.1 and the initial-activity condition $\Pr_\pi(A_1) = 1$. On a calibration good event giving simultaneous active-step bounds $q_t^\pi \le \bar q_t^\pi$ and $h_t^\pi \ge \underline h_t^\pi$ for all $t \le T$, a fresh evaluation trajectory satisfies

\[
\Pr_\pi(\tau_S \le T,\, \tau_S < \tau_F)
\ge C_T(\pi)
:= \left[
\prod_{t=1}^T (1 - \bar q_t^\pi)
- \prod_{t=1}^T (1 - \underline h_t^\pi)
\right]_+ .
\tag{13}
\]

\textbf{Proof.} The argument uses only the chain rule on stopped events; no across-step independence is required.

\emph{Step 1 (no-false survival).} The event $\{F_t = 1\}$ implies the trajectory is active at $t$, hence $\{F_t = 1\} \subseteq A_t \subseteq \{\tau_F > t-1\}$. For events $E \subseteq B \subseteq C$ with $\Pr(B) > 0$,

\[
\Pr(E \mid C) = \frac{\Pr(E)}{\Pr(C)} \le \frac{\Pr(E)}{\Pr(B)} = \Pr(E \mid B),
\]

so

\[
\Pr_\pi(F_t = 1 \mid \tau_F > t-1)
\le \Pr_\pi(F_t = 1 \mid A_t) = q_t^\pi \le \bar q_t^\pi.
\]

If $\Pr(A_t) = 0$, Section 0.4 sets the conditional to $0$ and the inequality remains valid. Iterating the chain rule on $\{\tau_F > t\} = \{\tau_F > t-1\} \cap \{F_t = 0\}$,

\[
\Pr_\pi(\tau_F > T)
= \prod_{t=1}^T \Pr_\pi(F_t = 0 \mid \tau_F > t-1)
\ge \prod_{t=1}^T (1 - \bar q_t^\pi).
\tag{14}
\]

\emph{Step 2 (still-active upper bound).} Since $A_{t+1} = A_t \cap \{S_t = 0,\, F_t = 0\} \subseteq A_t \cap \{S_t = 0\}$,

\[
\Pr_\pi(A_{t+1} \mid A_t) \le \Pr_\pi(S_t = 0 \mid A_t) = 1 - h_t^\pi \le 1 - \underline h_t^\pi.
\]

Iterating, and using $\Pr_\pi(A_1) = 1$ from the Section 0.1 initial-activity assumption together with the inclusion $A_{T+1} \subseteq A_T \subseteq \cdots \subseteq A_1$,

\[
\Pr_\pi(A_{T+1}) = \prod_{t=1}^T \Pr_\pi(A_{t+1} \mid A_t)
\le \prod_{t=1}^T (1 - \underline h_t^\pi).
\tag{15}
\]

\emph{Step 3 (decomposition).} We claim the disjoint identity

\[
\{\tau_F > T\} = \{\tau_S \le T,\; \tau_S < \tau_F\} \;\dot\cup\; A_{T+1}.
\]

To verify it, fix a trajectory in $\{\tau_F > T\}$ and split on whether $\tau_S \le T$ or $\tau_S > T$:

(i) If $\tau_S \le T$ then $\tau_S \le T < \tau_F$, so the trajectory lies in $\{\tau_S \le T,\, \tau_S < \tau_F\}$.

(ii) If $\tau_S > T$ then $\tau_S > T$ and $\tau_F > T$, hence the trajectory lies in $A_{T+1} = \{\tau_S > T,\, \tau_F > T\}$.

For the reverse containment, note that the two right-hand events are disjoint: in $A_{T+1}$ we have $\tau_S > T$, contradicting $\tau_S \le T$. To see that every trajectory in $\{\tau_S \le T,\, \tau_S < \tau_F\}$ also lies in $\{\tau_F > T\}$, we invoke the stopped-process convention of Section 0.1: on $\{\tau_S < \tau_F\}$, $\tau_S$ is the first stop, so $F_t = 0$ at every active step $t < \tau_S$ (by minimality of $\tau_S$) and $F_t = 0$ at every step $t > \tau_S$ (by the stopped-process zeroing); the within-step exclusivity of $F_{\tau_S}$ and $S_{\tau_S}$ via $B_t^\epsilon$ (Section 0.1) gives $F_{\tau_S} = 0$. Hence $\tau_F = \inf\{t : F_t = 1\} = +\infty > T$, and the trajectory is in $\{\tau_F > T\}$. Without this convention the identity fails (e.g., $\tau_S = 2,\,\tau_F = 3,\,T = 5$ would satisfy $\tau_S \le T,\,\tau_S < \tau_F$ but $\tau_F \le T$).

Therefore

\[
\Pr_\pi(\tau_S \le T,\, \tau_S < \tau_F) = \Pr_\pi(\tau_F > T) - \Pr_\pi(A_{T+1}).
\]

Combining (14) and (15) and taking the positive part,

\[
\Pr_\pi(\tau_S \le T,\, \tau_S < \tau_F)
\ge \left[\prod_{t=1}^T (1 - \bar q_t^\pi) - \prod_{t=1}^T (1 - \underline h_t^\pi)\right]_+ . \qquad\square
\]

\emph{Specialization to $T = 1$.} $C_1(\pi) = [\underline h_1^\pi - \bar q_1^\pi]_+$, recovering the Pass@1 form.

\subsection*{8. Theorem 2.6: Simultaneously Valid Finite-Manifest Calibration}

\textbf{Statement.} Let $\Pi_\alpha$ be a finite, pre-declared, non-empty class of frozen controllers with $1 \le |\Pi_\alpha| < \infty$, declared before $D_{\mathrm{cal}}$ is opened. Each $\pi \in \Pi_\alpha$ specifies the stop/refine rule, risk-probe invocation policy, evidence intensity $m$, context intensity $k$, prompts, routing and deduplication rules, decoding parameters, retrievable memory snapshot $M^\star$, write mode $\omega_{\mathrm{write}}$, and (when a threshold gate is used) a finite non-empty grid $\Theta_t$ with $1 \le |\Theta_t| < \infty$ per step. With probability at least $1 - \delta$ over calibration, the UCB/LCB quantities entering

\[
C_T^{\mathrm{ctrl}}(\pi)
:= \left[\prod_{t=1}^T (1 - \bar q_{t,\mathrm{ctrl}}^\pi) - \prod_{t=1}^T (1 - \underline h_t^\pi(M))\right]_+
\tag{11}
\]

are simultaneously valid for every $\pi \in \Pi_\alpha$, so the data-dependent selection

\[
\pi^\star \in \arg\max_{\pi \in \Pi_\alpha} C_T^{\mathrm{ctrl}}(\pi)
\tag{12}
\]

inherits a post-selection-valid lower bound $C_T^{\mathrm{ctrl}}(\pi^\star)$ on $\Pr_{\pi^\star}(\tau_S \le T,\, \tau_S < \tau_F)$.

\subsubsection*{8.1 Confidence budget}
\mbox{}\par\nobreak

Choose non-negative $\delta_{\mathrm{raw}}, \delta_{\mathrm{gate}}, \delta_{\mathrm{probe}}, \delta_{\mathrm{evid}}$ with

\[
\delta_{\mathrm{raw}} + \delta_{\mathrm{gate}} + \delta_{\mathrm{probe}} + \delta_{\mathrm{evid}} \le \delta. \tag{G}
\]

For each $\pi \in \Pi_\alpha$, allocate per-row Clopper--Pearson confidences as

\[
\frac{\delta_{\mathrm{raw}}}{2T |\Pi_\alpha|}, \quad
\frac{\delta_{\mathrm{gate}}}{T |\Theta_t| |\Pi_\alpha|}, \quad
\frac{\delta_{\mathrm{probe}}}{T |\Pi_\alpha|}, \quad
\frac{\delta_{\mathrm{evid}}}{T |\Pi_\alpha| \cdot J_{\max}}
\]

for the raw, gate, probe, and evidence bounds, respectively, with $J_{\max} = \max_t J_t$ (which equals $1$ in the single-channel implementation). Substituting $\delta = 0.10$, $\delta_\bullet = 0.025$, $|\Pi_\alpha| = 5$, $T = 5$, $|\Theta_t| = 1$, $J_{\max} = 1$ gives the per-row raw confidence $0.025 / (2 \cdot 5 \cdot 5) = 5 \times 10^{-4}$ used in the main text.

\subsubsection*{8.2 Proof}
\mbox{}\par\nobreak

For each $\pi \in \Pi_\alpha$ define the good events

\[
\begin{aligned}
G_{\mathrm{raw}}(\pi) &: \forall t,\; q_t^\pi \le \bar q_{t,\mathrm{raw}}^\pi \text{ and } h_t^\pi(M) \ge \underline h_t^\pi(M), \\
G_{\mathrm{gate}}(\pi) &: \forall t,\,\forall \theta \in \Theta_t,\; p_{t,\theta}^\pi \le \bar R_t^\pi(\theta), \\
G_{\mathrm{probe}}(\pi) &: \forall t,\; \Pr_\pi(B_t^\epsilon,\, W_t^{\mathrm{probe}} = 1 \mid A_t) \le \bar\rho_t^{\mathrm{probe},\pi}, \\
G_{\mathrm{evid}}(\pi) &: \forall t,\,\forall j \le J_t,\;
\Pr_\pi(E_{t,j} = 1 \mid \cdots) \le b_{t,j}^\pi.
\end{aligned}
\]

The per-row allocations of Section 8.1 give

\[
\begin{aligned}
\Pr(G_{\mathrm{raw}}(\pi)^c) &\le \tfrac{\delta_{\mathrm{raw}}}{|\Pi_\alpha|}, &
\Pr(G_{\mathrm{gate}}(\pi)^c) &\le \tfrac{\delta_{\mathrm{gate}}}{|\Pi_\alpha|}, \\
\Pr(G_{\mathrm{probe}}(\pi)^c) &\le \tfrac{\delta_{\mathrm{probe}}}{|\Pi_\alpha|}, &
\Pr(G_{\mathrm{evid}}(\pi)^c) &\le \tfrac{\delta_{\mathrm{evid}}}{|\Pi_\alpha|}.
\end{aligned}
\]

Let

\[
G := \bigcap_{\pi \in \Pi_\alpha}\!\bigl(G_{\mathrm{raw}}(\pi) \cap G_{\mathrm{gate}}(\pi) \cap G_{\mathrm{probe}}(\pi) \cap G_{\mathrm{evid}}(\pi)\bigr).
\]

A union bound over $\Pi_\alpha$, steps, threshold grids, mechanism factors, and channels gives

\[
\Pr(G) \ge 1 - (\delta_{\mathrm{raw}} + \delta_{\mathrm{gate}} + \delta_{\mathrm{probe}} + \delta_{\mathrm{evid}}) \ge 1 - \delta.
\]

On $G$, for any $\pi \in \Pi_\alpha$ and any $t \le T$:

\begin{enumerate}
\item $q_t^\pi \le \bar q_{t,\mathrm{raw}}^\pi$ from $G_{\mathrm{raw}}(\pi)$.
\item $q_t^\pi \le \bar\rho_t^{\mathrm{probe},\pi} \exp[-\mathcal I_t^{\mathrm{evid},\pi}] = \bar q_{t,\mathrm{mech}}^\pi$ from Section 5 using $G_{\mathrm{probe}}(\pi) \cap G_{\mathrm{evid}}(\pi)$.
\item Therefore $q_t^\pi \le \bar q_{t,\mathrm{ctrl}}^\pi := \min\{\bar q_{t,\mathrm{raw}}^\pi,\, \bar q_{t,\mathrm{mech}}^\pi\}$.
\item $h_t^\pi(M) \ge \underline h_t^\pi(M)$ from $G_{\mathrm{raw}}(\pi)$.
\end{enumerate}

Corollary 2.10 then gives $\Pr_\pi(\tau_S \le T,\, \tau_S < \tau_F) \ge C_T^{\mathrm{ctrl}}(\pi)$ for every $\pi \in \Pi_\alpha$. Since $G$ holds simultaneously for all $\pi \in \Pi_\alpha$, any data-dependent $\pi^\star$ in this finite class also satisfies

\[
\Pr_{\pi^\star}(\tau_S \le T,\, \tau_S < \tau_F) \ge C_T^{\mathrm{ctrl}}(\pi^\star).
\]

If $\pi^\star$'s manifest includes a score-threshold admission rule, $G_{\mathrm{gate}}(\pi^\star)$ provides the Theorem 2.5 guarantee at the corresponding threshold. $\qquad\square$

\subsection*{9. Monotonicity of the Certificate}

In the unclipped regime, define

\[
C_T = \prod_{s=1}^T (1 - \bar q_s) - \prod_{s=1}^T (1 - \underline h_s),
\qquad
\bar q_t = \bar\rho_t^{\mathrm{probe}} \exp(-\mathcal I_t^{\mathrm{evid}}).
\]

We have

\[
\frac{\partial C_T}{\partial \bar q_t} = -\prod_{s \ne t} (1 - \bar q_s) \le 0,
\]

so any reduction in $\bar q_t$ does not decrease $C_T$. Since

\[
\frac{\partial \bar q_t}{\partial \bar\rho_t^{\mathrm{probe}}} = \exp(-\mathcal I_t^{\mathrm{evid}}) \ge 0,
\qquad
\frac{\partial \bar q_t}{\partial \mathcal I_t^{\mathrm{evid}}} = -\bar q_t \le 0,
\]

it follows that

\[
\frac{\partial C_T}{\partial \bar\rho_t^{\mathrm{probe}}} \le 0,
\qquad
\frac{\partial C_T}{\partial \mathcal I_t^{\mathrm{evid}}} \ge 0,
\qquad
\frac{\partial C_T}{\partial \underline h_t} = \prod_{s \ne t} (1 - \underline h_s) \ge 0.
\]

RiskProbe contracts $\bar\rho_t^{\mathrm{probe}}$, EvidenceAcquire increases $\mathcal I_t^{\mathrm{evid}}$, and ContextAcquire raises $\underline h_t$, each weakly increasing the certificate. The same conclusions hold in the clipped regime by subgradient monotonicity of $[\cdot]_+$.

\subsection*{10. Summary of Structural Conditions}

\begin{enumerate}
\item Across-trajectory independence at fixed $t$ is required; within-trajectory step independence is not (Section 0.3).
\item The suite-level Bernoulli treatment does not require independence across the $m$ generated tests (Section 4.1); test-level independence is needed only for sequential channel accumulation (Section 4.2).
\item The Hypothesis Prober contributes only to $\bar\rho_t^{\mathrm{probe}}$; Solution Validator and Test Augmentation contribute only to $\mathcal I_t^{\mathrm{evid}}$ (Sections 3, 4, 5).
\item Finite grids and finite manifests give post-selection validity (Theorems 2.5, 2.6).
\item Empty cells use the conservative conventions $\mathrm{UCB}(0, 0, \eta) = 1$ and $\mathrm{LCB}(0, 0, \eta) = 0$ (Section 0.5).
\item Memory must be frozen, or all candidate snapshots, routes, and retrieval budgets must be absorbed into $\Pi_\alpha$ (Section 6).
\item Corollary 2.10 lower-bounds $\Pr(\tau_S \le T,\, \tau_S < \tau_F)$, which is strictly stricter than aggregate Refine@$T$ or Pass@1.
\item $F_t$ and $S_t$ are mutually exclusive by construction; $\tau_S = \tau_F$ does not occur (Section 0.1).
\item The mechanism containment requires $\Gamma_t = 1 \Rightarrow W_t^{\mathrm{probe}} = W_t^{\mathrm{evid}} = 1$ on $A_t$; without it, only the raw UCB applies (Section 0.2).
\item The score map $\phi_\pi$, threshold grid $\Theta_t$, aggregation rule $\gamma_{\mathrm{evid}}$, memory snapshot $M^\star$, and any calibrated scalar in $r_t$ must be pre-declared or absorbed into $\Pi_\alpha$ before label counting (Section 0.6).
\item Evidence accumulates over channels acting on the same admission decision, not over refinement steps. The CP-Agent implementation uses a single channel per step, so $\mathcal I_t^{\mathrm{evid},\pi} = \Delta I_t(m)$ (Section 4.2).
\item The confidence budget satisfies $\delta_{\mathrm{raw}} + \delta_{\mathrm{gate}} + \delta_{\mathrm{probe}} + \delta_{\mathrm{evid}} \le \delta$ (Section 8.1).
\end{enumerate}

\subsection*{11. Compact Proof Skeleton}

\begin{enumerate}
\item \emph{Lemma (Clopper--Pearson with random active denominator).} Conditioning on the active-row count, the CP UCB/LCB is valid; a union bound covers steps, grids, and manifests.
\item \emph{Theorem 2.5.} For each $(t, \theta)$, calibrate $p_{t,\theta} = \Pr(F_t = 1 \mid A_t,\, r_t \le \theta)$; the finite-grid simultaneous UCB yields a post-selection-valid threshold.
\item \emph{Proposition 2.7.} On active rows, $\mathbf 1\{B_t^\epsilon,\, W_t^{\mathrm{probe}} = 1\}$ is Bernoulli; the chain rule gives the factorization $\sigma_t^{\mathrm{bad}} \rho_t$.
\item \emph{Proposition 2.8.} On bad-and-probe-surviving rows, calibrate the suite-level Bernoulli survival; $\Delta I = -\log \bar b$ requires no independence across the $m$ tests; channel-level accumulation requires sequential conditional calibration.
\item \emph{Eq. (9).} The chain rule on $\{F_t = 1\} \subseteq \{B_t^\epsilon,\, W_t^{\mathrm{probe}},\, W_t^{\mathrm{evid}}\}$ on $A_t$ gives $q_t^\pi \le \bar\rho \exp(-\mathcal I)$; the minimum with the raw UCB is again a UCB.
\item \emph{Proposition 2.9.} With frozen $M^\star$, the active-success hazard admits the CP LCB; the snapshot-on/off gap is a diagnostic, not a certificate component.
\item \emph{Corollary 2.10.} No-false survival lower bound, still-active upper bound, and disjoint decomposition $\{\tau_F > T\} = \{\tau_S \le T,\, \tau_S < \tau_F\} \dot\cup A_{T+1}$.
\item \emph{Theorem 2.6.} All UCB/LCB quantities are simultaneously valid over the finite manifest; the argmax inherits the bound.
\end{enumerate}

\endgroup

%% file: appendix_prompts.tex
\section{Prompts}
\label{app:prompt}

\subsection{System Prompt}
\begin{promptboxapp}

You are a professional competitive programming contestant who specializes in solving algorithm and data structure problems.

Please carefully read the problem statement in the input, and first analyze:

**Problem Category**  

- Determine which type(s) of algorithmic problem it mainly belongs to.  

- You may choose or combine from the following categories:  

  Use the format `"Primary Tag: [Secondary Tag1, Secondary Tag2, ...]"`  
  - Basic Algorithms: [Simulation, Enumeration, Recursion, Greedy, Divide and Conquer, Binary Search, Sorting, Doubling, Construction, Prefix Sum \& Difference Array]  
  
  - Dynamic Programming: [Memoized Search, Knapsack DP, Interval DP, DP on DAG, Tree DP, Bitmask DP, Digit DP, Counting DP, Probabilistic DP, DP Optimization]  
  
  - Strings: [String Matching (KMP/Boyer–Moore), String Hashing, Trie, Aho–Corasick Automaton, Suffix Array (SA), Suffix Automaton (SAM), Manacher, Palindromic Tree]  
  
  - Mathematics: [Number Theory, Combinatorics, Linear Algebra, Game Theory, Probability \& Expectation, Polynomials, Fast Exponentiation, Primes, Divisors, Congruence]  
  
  - Data Structures: [Stack, Queue, Linked List, Hash Table, Disjoint Set Union (DSU), Fenwick Tree (BIT), Segment Tree, Sqrt Decomposition, Balanced BST, Tree of Trees, Persistent Data Structures]  
  
  - Graph Theory: [Graph Traversal (DFS/BFS), Shortest Path, Minimum Spanning Tree, Topological Sort, Bipartite Graph, Network Flow, Strongly Connected Components (SCC), Biconnected Components, Lowest Common Ancestor (LCA), Tree Problems]  
  
  - Computational Geometry: [Vector Operations, Convex Hull, Half-Plane Intersection, Rotating Calipers, Sweep Line]  
  
  - Miscellaneous: [Two Pointers, Discretization, Randomization, Search (Pruning/Heuristics/A*)]

Next, please write concise, efficient, and correct C++ code.  

Finally, please use tools to validate the code’s correctness and debug issues.

\#\# Experience System

\#\#\# General Programming Experience (Auto-injected)
Historical general programming experiences (e.g., syntax errors, header includes, type conversions) will be shown below. Please pay attention to these common issues while coding.

{\%- if long\_term\_experiences \%}
{{long\_term\_experiences}}
{\%- endif \%}

Here are some examples of using tools: 

Task: "Solve a graph shortest path problem: N cows stand in a line, each cow has a breed attribute. You need to send information from cow 1 to cow N. Whether two breeds can communicate is determined by a matrix S, and the cost is the distance between positions." 

Thought: This is a shortest path problem with constraints, belonging to graph theory. Let me first retrieve historical experience on graph theory - shortest path, to see if there are similar solutions. 

\{\{code\_block\_opening\_tag\}\} 

experience\_data = retrieve\_algorithm\_experience(primary\_tag="Graph Theory", secondary\_tag="Shortest Path") 

print(experience\_data) \{\{code\_block\_closing\_tag\}\} 

Observation: "[Graph Theory - Shortest Path Experience] 1. When using Dijkstra, ensure all edge weights are non-negative..." Thought: This experience is useful; I need to ensure edge weights are non-negative. Let me use Hypothesis Prober to verify whether the edge weights are non-negative and whether Dijkstra is applicable. 

\{\{code\_block\_opening\_tag\}\} 

cpp\_code = r''' 

\#include <bits/stdc++.h> 

using namespace std;

int main() { 

int N = 10; 

cout << "Edges in directed complete graph: " << 1LL * N * (N - 1) << "\\n"; 

for (int i = 1; i <= N; i++) 

for (int j = 1; j <= N; j++) 

if (i != j \&\& abs(i - j) < 0) 

cout << "Negative weight?\\n"; 

cout << "All |i-j| >= 0 -> Dijkstra applicable\\n"; 

return 0; 

} 

'''; 

result = hypo\_prober(cpp\_code); 

print(result) 

\{\{code\_block\_closing\_tag\}\}

Observation: The weights are non-negative, but building the complete graph is too large and needs layering/graph-construction optimization. 

Thought: After understanding the historical experience, I will write the final code example. To avoid blow-up, do not build an O($N^2$) graph; avoid Dijkstra and instead use 0-1 BFS (deque) on a layered graph to compute shortest paths. 

\{\{code\_block\_opening\_tag\}\} 

cpp\_code = r''' 

\#include <bits/stdc++.h> 

using namespace std;

... 

int ans = dist[id\_pos(N)]; 

if (ans >= INF/2) 

cout << -1 << "\\n"; 

else cout << ans << "\\n"; 

return 0; 

'''

result = cpp\_validation(cpp\_code) 

print(result) 

\{\{code\_block\_closing\_tag\}\} 
\end{promptboxapp}

\begin{promptboxapp}

The above examples use hypothetical tools that may not exist in real tasks. Aside from executing computations inside the Python code snippets you create, you can only access the tools listed below, whose behavior is similar to regular Python functions: 

\{\{code\_block\_opening\_tag\}\} 

\{\%- for tool in tools.values() \%\} 

\{\{ tool.to\_code\_prompt() \}\} 

\{\% endfor \%\} 

\{\{code\_block\_closing\_tag\}\} 

Here are the rules you should follow:

1. C++ code must be passed to tool functions (e.g., cpp\_validation, hypo\_validator, final\_answer) as a string. Correct format example: 

Thought: [Your thought] 

Next, I will write C++ code to implement this idea. 

\{\{code\_block\_opening\_tag\}\} 

code = ''' 

\#include <iostream> 

int main() 

...

''' 

result = cpp\_validation(code) 

print(result) 

\{\{code\_block\_closing\_tag\}\} 

2. Only use variables that you have defined! 

3. C++ code must not be executed directly by the Python interpreter! 

4. Do not name any new variable as any tool name: e.g., do not name a variable "final\_answer".

7. Do not create any dummy variables in the code, because these variables appearing in logs may mislead you. 

8. Do not give up! You have no time or step limit for writing code. You are the one solving the task, not providing guidance on how to solve it.

Please start solving the problem.
\end{promptboxapp}
\subsection{Tool Descriptions}
\subsubsection{Test Case Generator}
\mbox{}\par\nobreak
\begin{promptboxapp}
You are an elite competitive-programming assistant.

Produce only valid C++17 code.

Use standard I/O (cin/cout).

Include every helper necessary for a brute-force solution.

Return only the C++ source code, wrapped in a single fenced code block: ```cpp ... ```

You are an expert competitive programming assistant.

Your task is to write a correct and brute-force C++ solution for the following problem. The program must:

- Read input from standard input and write output to standard output, strictly following the problem's input/output format.

- Implement a straightforward, brute-force (but correct) algorithm, prioritizing clarity and correctness over efficiency.

- Avoid any optimizations or advanced algorithms; use only simple, exhaustive approaches.

- Ensure the code is self-contained and compiles with GNU G++23 14.2 (64bit, msys2).

---

{Question}

---

Language and Compiler

- Language: cpp

- Compiler: GNU G++23 14.2 (64bit, msys2)

Output:

Provide only the complete C++ source code, enclosed in a single code block using the syntax ```cpp ... ```.
\end{promptboxapp}
\subsubsection{Hypothesis Prober}
\mbox{}\par\nobreak
\begin{promptboxapp}
This tool compiles and executes C++ code in a local sandbox environment.

It takes C++ code as input, compiles it, and returns the execution output. The tool can be used for quick numerical computation and verification: write small C++ programs to compute results (e.g., prefix sums, combinations, matrix operations, simulations, randomized experiments), and treat the printed output as a reliable computed reference to support your reasoning.

**When input is needed, you should hard-code the input directly into the C++ program** (e.g., embed test data in the code). This is not a tool for validating the code you intend to submit.

If any compilation or runtime errors occur, it will return the corresponding error messages, along with historical error reminders.
\end{promptboxapp}

\subsubsection{Solution Validator}
\mbox{}\par\nobreak
\begin{promptboxapp}
This tool validates your C++ code by compiling and running it against local test cases.

It automatically injects test inputs and compares the actual output with the expected output.

For failed tests, it returns the input data, the expected output, and the actual output.

The output is limited to 500 words to prevent excessive verbosity.
This tool is used to verify whether the code you intend to submit is correct, and it does not require you to provide input.

Before submitting the final code, you must use this tool at least once.
\end{promptboxapp}

\subsubsection{Experience Retriever}
\mbox{}\par\nobreak
\begin{promptboxapp}
Retrieve historical fix experiences for a specific algorithm type.

When you need to solve an algorithm-related problem, you can use this tool to obtain past experiences for that algorithm type. These experiences come from previously solved similar problems and may help with the current task.

Algorithm taxonomy:

- Basic Algorithms: Simulation, Enumeration, Recursion, Greedy, Divide and Conquer, Binary Search, Sorting, Doubling, Construction, Prefix Sum \& Difference Array

- Dynamic Programming: Memoized Search, Knapsack DP, Interval DP, DP on DAG, Tree DP, Bitmask DP, Digit DP, Counting DP, Probabilistic DP, DP Optimization

- Strings: String Matching, String Hashing, Trie, Aho–Corasick Automaton, Suffix Array, Manacher, Palindromic Tree

- Mathematics: Number Theory, Combinatorics, Linear Algebra, Game Theory, Probability \& Expectation, Fast Exponentiation, Primes, Divisors, Congruence

- Data Structures: Stack, Queue, Linked List, Hash Table, Disjoint Set Union (DSU), Fenwick Tree (BIT), Segment Tree, Sqrt Decomposition, Balanced BST

- Graph Theory: Graph Traversal (DFS/BFS), Shortest Path, Minimum Spanning Tree, Topological Sort, Bipartite Graph, Network Flow, Strongly Connected Components (SCC), Lowest Common Ancestor (LCA)

- Computational Geometry: Vector Operations, Convex Hull, Half-Plane Intersection, Rotating Calipers, Sweep Line

- Miscellaneous: Two Pointers, Discretization, Randomization, Search (Pruning/Heuristics/A*)

\end{promptboxapp}
\subsection{Experience Generation}
\begin{promptboxapp}
\#\# Your Task
As an expert in extracting code-fix experiences, you need to analyze the debugging process that turns a failed solution into a successful one, and distill reusable debugging experience and repair strategies.

\#\# Core Responsibilities
1. Accuracy: Ensure the extracted experiences are accurate and verified.
2. Reusability: The summarized experience should be applicable to similar problems.
3. Practicality: Provide concrete, actionable fix methods.
4. Conciseness: Express the key points in the most concise language.

\#\# Repair Process to Analyze

\#\#\# First Verification (Failed)
Error Type: \{error\_summary\}

Error Message:
\{failure\_result\}

Code Snippet at Failure:
\begin{verbatim}
{failure_code}
\end{verbatim}

\#\#\# Intermediate Improvement Steps
\{steps\_text\}

\#\#\# Second Verification (Succeeded)
Code Snippet After Success:
\begin{verbatim}
{success_code}
\end{verbatim}

\#\# Analysis Requirements

\#\#\# 1. Identify the Root Cause
- What is the fundamental cause of this error?
- Is it an algorithmic logic issue, a boundary-condition issue, or a syntax/library-usage issue?

\#\#\# 2. Extract the Fix Strategy
- What specific method did the agent use to fix the issue?
- What are the key steps of this fix?
- Are there any debugging tricks or thought processes worth recording?

\#\#\# 3. Summarize Reusable Experience
- If a similar issue occurs in the future, what approach should be taken?
- What was the most critical insight in this fix process?
- What general cautions or best practices apply?

\#\# Output Format Requirements

Return a JSON object with the following fields:
\begin{verbatim}
{
  "error\_context": "When/where the error occurred (what was being implemented/handled)",
  "error\_cause": "The specific cause of the error",
  "fix\_method": "The concrete modification made",
  "fix\_result": "The effect after the fix",
  "key\_insight": "A reusable key takeaway (one-sentence lesson)"
}
\end{verbatim}

\#\#\# Output Specifications

1. error\_context (required):
   - Describe the scenario and timing of the error.
   - Examples: "While implementing divide-and-conquer optimized DP", "When handling large-scale array inputs", "When enumerating subsets using bit operations".
   - Length: 10--30 words.

2. error\_cause (required):
   - Clearly state the cause of the error.
   - Examples: "Incorrect subproblem evaluation order in recursion, violating state dependencies."
   - Examples: "Using <= in the loop bound caused out-of-bounds array access."
   - Length: 15--50 words.

3. fix\_method (required):
   - Describe the concrete fix.
   - Examples: "Move the recursive call solve(l, mid-1) before computing dp[mid]."
   - Examples: "Change the loop condition from i<=n to i<n."
   - Length: 15--50 words.

4. fix\_result (required):
   - Describe the effect after applying the fix.
   - Examples: "State dependencies are satisfied; outputs match expected results."
   - Examples: "All array accesses are within bounds; no segmentation fault."
   - Length: 10--30 words.

5. key\_insight (required):
   - The most important reusable lesson.
   - It should help others avoid the same mistake at a glance.
   - Examples: "In divide-and-conquer DP optimization, subproblems that the current state depends on must be computed before evaluating dp[mid]."
   - Examples: "Array indices range from 0 to n-1; use a strict < n loop bound when iterating."
   - Length: 20--60 words.

\#\# Example Outputs

\begin{verbatim}
{
  "error\_context": "While implementing divide-and-conquer optimized DP",
  "error\_cause": "Left-half subproblems dp[l..mid-1] were not computed before dp[mid], breaking the required
  dependency order",
  "fix\_method": "Move the recursive call solve(l, mid-1) to execute before computing dp[mid]",
  "fix\_result": "Dependencies are satisfied and all test cases pass",
  "key\_insight": "For divide-and-conquer DP optimization, compute all dependent subproblems before evaluating 
  the current state"
}
\end{verbatim}

\begin{verbatim}
{
  "error\_context": "While iterating over array elements",
  "error\_cause": "The loop condition used i<=n instead of i<n, causing out-of-bounds access beyond the last 
  element",
  "fix\_method": "Change for(int i=0; i<=n; i++) to for(int i=0; i<n; i++)",
  "fix\_result": "Array access stays in range and the program runs normally",
  "key\_insight": "Array indices are 0..n-1; always use a strict <n loop bound when traversing arrays"
}
\end{verbatim}

\end{promptboxapp}
\subsection{Experience Deduplication}
\begin{promptboxapp}
 Please determine whether the new experience is a duplicate of, or highly similar to, any existing experience.

\#\# New Experience
\{new\_exp.get('fix\_summary', '')\}

\#\# Existing Experience List
\{existing\_summaries\}

\#\# Criteria
- If the problem described and the solution method are essentially the same as an existing experience (even if phrased differently), it is considered a duplicate.
- If the new experience is a special case or a generalization of an existing one, it is also considered a duplicate.
- If the new experience involves a different problem or a different solution method, it is considered non-duplicate.

\#\# Output Format
Return a JSON object:
\begin{verbatim}
{
  "is\_duplicate": true or false,
  "duplicate\_index": the index of the duplicated experience (1-based; null if not duplicate),
  "reason": "Justification for the decision"
}
\end{verbatim}
   
\end{promptboxapp}

\subsection{Experience Routing}
\begin{promptboxapp}
Please analyze the following repair experience and determine which category it belongs to.

\#\# Repair Experience
- Error Description: \{original\_error\}
- Fix Method: \{fix\_summary\}
\{f'- Code Context: \{code\_context[:500]\}' if code\_context else ''\}

\#\# Classification Rules

\#\#\# 1. General (General-Purpose Errors)
Applicable to common programming errors that are not tied to a specific algorithm, such as:
- Syntax errors (missing semicolons, mismatched braces/parentheses, etc.)
- Spelling mistakes (typos)
- Header/library import issues
- Undeclared/uninitialized variables
- Type conversion/casting errors
- Input/output format issues
- Memory management issues (not algorithm-related)

\#\#\# 2. Algorithm (Algorithmic Errors)
Applicable to errors tied to a particular algorithm, where you must choose the corresponding algorithm tags:
\{taxonomy\_text\}

\#\# Output Format
Return a JSON object:
\begin{verbatim}
{
  "category": "general" or "algorithm",
  "primary\_tag": "Primary tag (required only for algorithm)",
  "secondary\_tag": "Secondary tag (required only for algorithm)",
  "reason": "Brief justification"
}
\end{verbatim}

Note:
- If the category is general, set primary\_tag and secondary\_tag to null.
- If the category is algorithm, you must choose from the tags above (pick the closest match if needed).

\end{promptboxapp}

%% file: appendix_case_study.tex
\section{Case Study}
\label{app:case_study}

Given the substantial length of both the inputs and outputs of \model, we limit the presentation below to the full problem description, \model's tool-use behavior, and the final submission.

\textbf{Problem}

\begin{promptboxapp}
\paragraph{Alea Iacta Est}
\url{https://qoj.ac/contest/1662/problem/8684}

\noindent\textbf{Time limit:} 10000\,ms \\
\textbf{Memory limit:} 1024\,MB

\paragraph{Description}
You play a game with multiple fair six-sided dice. Each die's face displays a single symbol. The objective of the game is to roll the dice and create a valid word from the symbols on top of each die. If you cannot form a word, you may reroll the dice for another attempt.

Suppose there are five dice: one of them contains letters A, B, C, D, E, and P (abbreviated as \texttt{ABCDEP}), and the other dice contain \texttt{AEHOXU}, \texttt{AISOLR}, \texttt{ABCDEF}, and \texttt{ABCSCC}. The first roll yields the following letters on the tops of respective dice: $P, X, R, E$, and $S$. As it is impossible to arrange these letters into a valid word, you decide to keep the $P, S$, and $E$, and reroll the other dice, in an attempt to make words like \texttt{PARSE}, \texttt{PAUSE}, \texttt{PHASE}, \texttt{POISE}, \texttt{PROSE}, \texttt{PULSE}, or \texttt{PURSE}. The two dice yield $E$ and $A$, resulting in the following five letters: $P, E, A, E$, and $S$. You still cannot think of a valid word, so you decide to keep four letters and reroll only the last die, which has three sides with letter \texttt{C}. By doing so, there is a $50\%$ chance that it will be possible to make a final valid word: \texttt{PEACE}, as shown in Figure K.1.

When you roll a die, it lands on any one of its faces with equal probability. What is the expected number of rolls needed to make a valid word, assuming you use an optimal strategy?

\paragraph{Input}
The first line of input contains two numbers $d$ and $w$, where $d\ (1 \le d \le 6)$ is the number of dice and
$w\ (1 \le w \le 2 \cdot 10^5)$ is the number of valid words in the dictionary.
The following $d$ lines each have $6$ symbols, one for each face of the die.
The final $w$ lines contain $w$ distinct valid words in the dictionary.
Every word has exactly $d$ symbols.

All symbols in the input are either uppercase letters (\texttt{A--Z}) or digits (\texttt{0--9}).

\paragraph{Output}
If it is possible to make a valid word, output the expected number of rolls needed to make a valid word when using an optimal strategy. Otherwise, output \texttt{impossible}. Your answer should have an absolute or relative error of at most $10^{-6}$.

\paragraph{Sample Input 1}
\begin{verbatim}
5 8
ABCDEP
AEHOXU
AISOLR
ABCDEF
ABCSCC
PARSE
PAUSE
PHASE
POISE
PROSE
PULSE
PURSE
PEACE
\end{verbatim}

\paragraph{Sample Output 1}
\begin{verbatim}
9.677887141
\end{verbatim}

\paragraph{Sample Input 2}
\begin{verbatim}
2 1
AAAAAA
BBBBBB
AB
\end{verbatim}

\paragraph{Sample Output 2}
\begin{verbatim}
1.0
\end{verbatim}

\paragraph{Sample Input 3}
\begin{verbatim}
3 1
123456
123456
123456
666
\end{verbatim}

\paragraph{Sample Output 3}
\begin{verbatim}
10.555444555
\end{verbatim}

\paragraph{Sample Input 4}
\begin{verbatim}
2 1
ABCDEF
GHI234
AB
\end{verbatim}

\paragraph{Sample Output 4}
\begin{verbatim}
impossible
\end{verbatim}

\end{promptboxapp}
\textbf{Solution}

\begin{promptboxapp}
Generating test case...

\noindent\textbf{Thought} \\
I need to carefully analyze this problem. This is a game involving dice and words, and we need to compute the expected number of rolls under an optimal strategy.

Problem Analysis: \\
\ldots

\noindent Let us retrieve prior experience on similar problems.

\noindent This problem involves minimizing an expectation and searching over a state space, which likely falls under dynamic programming or a Markov decision process. Let me retrieve experience under the tag ``DP -- Probability DP''.

\begin{verbatim}
<code>
exp = retrieve_algorithm_experience(primary_tag="Dynamic programming", secondary_tag="Probabilistic DP")
print(exp)
</code>
\end{verbatim}
{\color{green!55!black}\ttfamily
Tool\_Response: When solutions based on complex state transitions (e.g., graph DP) are difficult to debug, one should revisit the essence of the problem and try to find a simpler mathematical or counting structure that may completely bypass the original complicated logic.
}

\noindent\textbf{Thought:} The experience may be useful later. Let me continue analyzing the problem. Since $d \le 6$, the state space is finite, so I can try value iteration, but the computation needs to be optimized.

\noindent I will first write a small program to validate some ideas and compute the number of states and possible actions.
\noindent\textbf{Thought:} The C++ code contains a syntax error. Let me fix it.

\begin{verbatim}
cpp_code = '''
#include <bits/stdc++.h>
using namespace std;

int main() {
    int d = 6;
    long long states = 1;
    for (int i = 0; i < d; i++) states *= 6;
    cout << "Maximum number of configs: " << states << endl;
    
    // Number of possible retain subsets (non-empty complement)
    // R can be any subset except the full set (all dice retained)
    // Actually R can be any subset, including the empty set (reroll all)
    // But if R = full set, then no reroll; the game stops if winning,
    // otherwise it is impossible. Hence, for a non-winning configuration,
    // R cannot be the full set.
    // So the number of actions = 2^d - 1 (excluding the full set).
    int actions = (1 << d) - 1;
    cout << "Maximum number of actions per config: " << actions << endl;
    
    // For each action, the number of successor configurations is 6^{|U|},
    // where U is the complement of R.
    // The average |U| = d/2, so the average number of successors is
    // 6^{d/2} \approx 6^3 = 216.
    // Total computations per iteration:
    // states * actions * successors
    // 46656 * 63 * 216 
    // 6.35 \times 10^8,
    // which is large but may be manageable with optimization.
    double avg_successors = pow(6, d / 2.0);
    cout << "Approximate total successors per iteration: "
         << states * actions * avg_successors << endl;
    
    return 0;
}
'''
result = cpp_interpreter(cpp_code)
print(result)
\end{verbatim}

\begin{verbatim}
<code>
cpp_code = '''
#include <bits/stdc++.h>
using namespace std;

int main() {
    int d = 6;
    long long states = 1;
    for (int i = 0; i < d; i++) states *= 6;
    cout << "Maximum number of configs: " << states << endl;
    
    // Number of possible retain subsets (non-empty complement)
    // R can be any subset except full set (all dice retained)
    // Actually R can be any subset, including empty set (reroll all)
    // But if R = full set, then no reroll, game stops if winning, else impossible
    // So for non-winning config, R cannot be full set.
    // So number of actions = 2^d - 1 (excluding full set)
    int actions = (1 << d) - 1;
    cout << "Maximum number of actions per config: " << actions << endl;
    
    // For each action, number of successor configs = 6^{|U|}
    // where U = complement of R
    // Average |U| = d/2, so average successors = 6^{d/2} 
    // 6^3=216
    // Total computations per iteration: states * actions * successors
    // 46656 * 63 * 216 
    // 635M, too large but maybe manageable with optimization?
    cout << "Approximate total successors per iteration: " 
         << states * actions * pow(6, d/2.0) << endl;
    
    return 0;
}
'''
result = cpp_interpreter(cpp_code)
print(result)
</code>
\end{verbatim}

{\color{green!55!black}\ttfamily
Tool\_Response: Error: Sandbox error: \\
Compile stderr: /tmp/tmpx95y45qt/tmp6iulsg2f.cpp: In function ‘int main()’: \\
/tmp/tmpx95y45qt/tmp6iulsg2f.cpp:20:10: error: expected primary-expression before ‘<<’ token \\
20 \ \ \ \ \ \ \ \ \ \ \textasciicircum{}
}

\noindent\textbf{Thought:} The C++ code contains a syntax error. Let me fix it.

\begin{verbatim}
<code>
cpp_code = '''
#include <bits/stdc++.h>
using namespace std;

int main() {
    int d = 6;
    long long states = 1;
    for (int i = 0; i < d; i++) states *= 6;
    cout << "Maximum number of configs: " << states << endl;
    
    // Number of possible retain subsets (non-empty complement)
    // R can be any subset except the full set (all dice retained)
    // Actually R can be any subset, including the empty set (reroll all)
    // But if R = full set, then no reroll; the game stops if winning,
    // otherwise it is impossible. Hence, for a non-winning configuration,
    // R cannot be the full set.
    // So the number of actions = 2^d - 1 (excluding the full set).
    int actions = (1 << d) - 1;
    cout << "Maximum number of actions per config: " << actions << endl;
    
    // For each action, the number of successor configurations is 6^{|U|},
    // where U is the complement of R.
    // The average |U| = d/2, so the average number of successors is
    // 6^{d/2} \approx 6^3 = 216.
    // Total computations per iteration:
    // states * actions * successors
    // \approx 46656 * 63 * 216 \approx 6.35 \times 10^8,
    // which is large but may be manageable with optimization.
    double avg_successors = pow(6, d / 2.0);
    cout << "Approximate total successors per iteration: "
         << states * actions * avg_successors << endl;
    
    return 0;
}
'''
result = cpp_interpreter(cpp_code)
print(result)
</code>
\end{verbatim}
{\color{green!55!black}\ttfamily
Tool\_Response:
Maximum number of configs: 46656\\
Maximum number of actions per config: 63\\
Approximate total successors per iteration: 6.34895e+08
}

\noindent\textbf{Thought:} $6.35\times 10^8$ computations per iteration is likely too slow for value iteration. We need optimization.

...

\noindent\textbf{Implementation plan:}
\begin{enumerate}
  \item Enumerate all full configurations ($6^d$). For each configuration id, store its face array (characters) and a win/lose flag.
  \item Precompute $\mathrm{sum}[\mathrm{mask}][\mathrm{proj}]$: for each mask $\mathrm{mask}\in[0,2^d-1]$, the projection $\mathrm{proj}$ is a length-$|\mathrm{mask}|$ array of face indices. Encode $\mathrm{proj}$ as an integer in base-$6$: for the positions $i$ in \texttt{mask} (in order), with face index $f_i\in\{0,\dots,5\}$, encode as a base-$6$ number.
  \item Initialize \texttt{dp}: winning configurations have \texttt{dp}=0; losing ones have \texttt{dp}=\texttt{INF} (or a large number).
  \item Value iteration:
  \begin{enumerate}
    \item Update \texttt{sum} from the current \texttt{dp}.
    \item For each non-winning configuration $c$, compute
    \begin{align*}
    \mathrm{dp\_new}[c] = \min_{\mathrm{mask}\neq \mathrm{full}}
    \!\!\bigg[
      1 \!+\! \tfrac{\mathrm{sum}[\mathrm{mask}][\mathrm{proj}_c]}{6^{d-|\mathrm{mask}|}}
    \!\bigg].
    \end{align*}
    \item Check convergence.
  \end{enumerate}
\end{enumerate}

\noindent\textbf{Complexity.}
Each iteration updates \texttt{sum}: for each configuration $c$ and each mask ($2^d$), compute the projection and accumulate $\mathrm{dp}[c]$. This costs $6^d\cdot 2^d = 46656\cdot 64 \approx 3\times 10^6$ operations, acceptable.

Then for each non-winning configuration $c$, computing \texttt{dp\_new} scans all masks ($2^d-1$): another $46656\cdot 63\approx 3\times 10^6$ operations.

So each iteration is about $6\times 10^6$ ops. If value iteration converges in tens of iterations, the total is a few hundred million ops, acceptable.

\noindent\textbf{Memory.}
\texttt{sum} has size $7^d=117649$ doubles. \texttt{dp} has size $46656$.

\noindent Now I need to validate the algorithm. I will first implement a small-$d$ test.

\noindent First, I need to handle word matching. For each full configuration $c$, I need to check whether its multiset of symbols matches any word. Since $d\le 6$, I can sort the symbols of the configuration and compare against sorted strings of the words.

\noindent\textbf{Preprocessing:} Convert each word into its sorted string and store them in an \texttt{unordered\_set}.

\noindent For each configuration $c$, collect the $d$ symbols, sort them, concatenate into a string, and test membership in the set.

    \noindent Now let me write a C++ test to verify the bookkeeping logic.

\begin{verbatim}
<code>
cpp_code = '''
#include <bits/stdc++.h>
using namespace std;

int main() {
    // Test small case: d=1
    int d = 1;
    int num_configs = pow(6, d); // 6
    int num_masks = 1 << d; // 2
    int num_partial = pow(7, d); // 7
    
    cout << "d=" << d << ": configs=" << num_configs;
    cout << ", masks=" << num_masks;
    cout << ", partial=" << num_partial << endl;
    
    d = 2;
    num_configs = pow(6, d); // 36
    num_masks = 1 << d; // 4
    num_partial = pow(7, d); // 49
    cout << "d=" << d << ": configs=" << num_configs;
    cout << ", masks=" << num_masks;
    cout << ", partial=" << num_partial << endl;
    
    d = 3;
    num_configs = pow(6, d); // 216
    num_masks = 1 << d; // 8
    num_partial = pow(7, d); // 343
    cout << "d=" << d << ": configs=" << num_configs;
    cout << ", masks=" << num_masks;
    cout << ", partial=" << num_partial << endl;
    
    d = 6;
    num_configs = pow(6, d); // 46656
    num_masks = 1 << d; // 64
    num_partial = pow(7, d); // 117649
    cout << "d=" << d << ": configs=" << num_configs;
    cout << ", masks=" << num_masks;
    cout << ", partial=" << num_partial << endl;
    
    return 0;
}
'''
result = cpp_interpreter(cpp_code)
print(result)
</code>
\end{verbatim}

{\color{green!55!black}\ttfamily
Tool\_Response: d=1: configs=6, masks=2, partial=7\\
d=2: configs=36, masks=4, partial=49\\
d=3: configs=216, masks=8, partial=343\\
d=6: configs=46656, masks=64, partial=117649
}

\noindent\textbf{Thought:} In Sample 4, the output is $6001$ instead of ``impossible''. This happens because in my algorithm, even when the game is impossible, it still returns a finite value. The issue is that when it is impossible, the \texttt{dp} value converges to a very large number, but my threshold $10^{12}$ is not large enough. Since Sample 4 returns $6001 < 10^{12}$, it was not recognized as impossible.

\noindent However, according to the statement, Sample 4 should output ``impossible''. Let me analyze this:
\ldots

\noindent I need to modify the algorithm to correctly handle the impossible case.
\ldots

\noindent Therefore, I need to detect whether there exists at least one winning configuration. If not, directly output ``impossible''.

\noindent Let me add this check.

\begin{verbatim}
<code>
cpp_code = '''
#include <bits/stdc++.h>
using namespace std;

...
        cout << "Sample 4: ";
        if (ans >= INF/2) {
            cout << "impossible" << endl;
        } else {
            cout << ans << endl;
        }
        cout << "Expected: impossible" << endl;
    }
    
    return 0;
}
'''
result = cpp_interpreter(cpp_code)
print(result)
</code>
\end{verbatim}

    {\color{green!55!black}\ttfamily
Tool\_Response: Sample 1: 9.677887141 (expected 9.677887141)\\
Sample 2: 1.000000000 (expected 1.0)\\
Sample 3: 10.555444555 (expected 10.555444555)\\
Sample 4: impossible\\
Expected: impossible
}

\noindent\textbf{Thought:} Perfect! All samples pass. Now I need to write the final solution, handle I/O, and ensure efficiency.

\noindent Let me summarize the final algorithm:
\ldots

\noindent Details to note:
\begin{itemize}
  \item Use \texttt{double}; a precision of $10^{-6}$ is sufficient.
  \item At most $5000$ value-iteration steps should be enough to converge.
  \item Memory: \texttt{dp} has size $6^d \le 46656$, and \texttt{sum} has size $7^d \le 117649$, which is acceptable.
  \item Time complexity: each iteration is $O(6^d\cdot 2^d)\approx 3\times 10^6$, multiplied by the number of iterations (about $150$--$200$), giving roughly $5\times 10^8$ operations, which should fit within a 10-second limit after C++ optimization.
\end{itemize}

\noindent Now I will write the final code and test it with the validator.

\begin{verbatim}
<code>
final_code = '''
#include <bits/stdc++.h>
using namespace std;

const double INF = 1e18;
const double EPS = 1e-12;

int main() {
...
        // Compute dp_new
        max_diff = 0.0;
        for (int c = 0; c < total_configs; c++) {
            if (win[c]) {
                dp_new[c] = 0.0;
                continue;
            }
            double best = INF;
            int full_mask = (1 << d) - 1;
            for (int R = 0; R < full_mask; R++) {
                int U = full_mask ^ R;
                int k = popcount[U];
                int proj_id = getProjId(c, R);
                int idx = offset[R] + proj_id;
                double expected = 1.0 + sum[idx] / pow6[k];
                if (expected < best) best = expected;
            }
            dp_new[c] = best;
            max_diff = max(max_diff, fabs(dp_new[c] - dp[c]));
        }
        
        swap(dp, dp_new);
        iter++;
    } while (max_diff > EPS && iter < 5000);
    
    // Compute initial expected rolls
    double total = 0.0;
    for (int c = 0; c < total_configs; c++) {
        total += dp[c];
    }
    double ans = 1.0 + total / pow6[d];
    
    cout << fixed << setprecision(9) << ans << endl;
    return 0;
}
'''

result = cpp_validation(final_code)
print(result)
</code>
\end{verbatim}

{\color{green!55!black}\ttfamily
Tool\_Response\\
ALL TESTS PASSED (4/4)\\
==================================================\\
Test 1: PASSED\\
Test 2: PASSED\\
Test 3: PASSED\\
Test 4: PASSED
}

\noindent\textbf{Thought:} All tests passed! Now I can submit the final answer.

\begin{verbatim}
<code>
final_answer(final_code)
</code>
\end{verbatim}
\end{promptboxapp}
\textbf{Submission}
\begin{promptboxapp}
\begin{verbatim}
#include <bits/stdc++.h>
using namespace std;

const double INF = 1e18;
const double EPS = 1e-12;

int main() {
    ios::sync_with_stdio(false);
    cin.tie(nullptr);
    
    int d, w;
    cin >> d >> w;
    
    vector<string> dice(d);
    for (int i = 0; i < d; i++) {
        cin >> dice[i];
        // each dice[i] has exactly 6 characters
    }
    
    vector<string> words(w);
    for (int i = 0; i < w; i++) {
        cin >> words[i];
    }
    
    // Preprocess words: sort characters
    unordered_set<string> wordSet;
    for (string& wrd : words) {
        string sorted = wrd;
        sort(sorted.begin(), sorted.end());
        wordSet.insert(sorted);
    }
    
    int total_configs = pow(6, d);
    
    // faces[config][i] = face index of die i
    vector<vector<int>> faces(total_configs, vector<int>(d));
    vector<bool> win(total_configs, false);
    bool any_win = false;
    
    for (int id = 0; id < total_configs; id++) {
        int tmp = id;
        vector<char> symbols(d);
        for (int i = 0; i < d; i++) {
            int f = tmp % 6;
            tmp /= 6;
            faces[id][i] = f;
            symbols[i] = dice[i][f];
        }
        string sorted_sym(symbols.begin(), symbols.end());
        sort(sorted_sym.begin(), sorted_sym.end());
        if (wordSet.count(sorted_sym)) {
            win[id] = true;
            any_win = true;
        }
    }
    
    if (!any_win) {
        cout << "impossible" << endl;
        return 0;
    }
    
    // DP arrays
    vector<double> dp(total_configs, 0.0);
    vector<double> dp_new(total_configs, 0.0);
    for (int i = 0; i < total_configs; i++) {
        dp[i] = win[i] ? 0.0 : 1000.0; // initial guess
    }
    
    // Precompute popcount and powers of 6
    vector<int> popcount(1 << d, 0);
    for (int m = 0; m < (1 << d); m++) {
        popcount[m] = __builtin_popcount(m);
    }
    vector<double> pow6(d + 1, 1.0);
    for (int i = 1; i <= d; i++) {
        pow6[i] = pow6[i - 1] * 6.0;
    }
    
    // Precompute offsets for sum array
    int total_partial = pow(7, d);
    vector<double> sum(total_partial, 0.0);
    
    vector<int> offset(1 << d, 0);
    int cur_offset = 0;
    for (int m = 0; m < (1 << d); m++) {
        offset[m] = cur_offset;
        cur_offset += pow(6, popcount[m]);
    }
    \end{verbatim}

    \begin{verbatim}
    // Helper: compute projection id for config id and mask
    auto getProjId = [&](int config_id, int mask) -> int {
        int proj_id = 0;
        int base = 1;
        for (int i = 0; i < d; i++) {
            if (mask >> i & 1) {
                int face = faces[config_id][i];
                proj_id += face * base;
                base *= 6;
            }
        }
        return proj_id;
    };
    
    // Value iteration
    int iter = 0;
    double max_diff;
    do {
        // Update sum array
        fill(sum.begin(), sum.end(), 0.0);
        for (int c = 0; c < total_configs; c++) {
            double val = dp[c];
            for (int m = 0; m < (1 << d); m++) {
                int proj_id = getProjId(c, m);
                int idx = offset[m] + proj_id;
                sum[idx] += val;
            }
        }
        
        // Compute dp_new
        max_diff = 0.0;
        for (int c = 0; c < total_configs; c++) {
            if (win[c]) {
                dp_new[c] = 0.0;
                continue;
            }
            double best = INF;
            int full_mask = (1 << d) - 1;
            for (int R = 0; R < full_mask; R++) {
                int U = full_mask ^ R;
                int k = popcount[U];
                int proj_id = getProjId(c, R);
                int idx = offset[R] + proj_id;
                double expected = 1.0 + sum[idx] / pow6[k];
                if (expected < best) best = expected;
            }
            dp_new[c] = best;
            max_diff = max(max_diff, fabs(dp_new[c] - dp[c]));
        }
        
        swap(dp, dp_new);
        iter++;
    } while (max_diff > EPS && iter < 5000);
    
    // Compute initial expected rolls
    double total = 0.0;
    for (int c = 0; c < total_configs; c++) {
        total += dp[c];
    }
    double ans = 1.0 + total / pow6[d];
    
    cout << fixed << setprecision(9) << ans << endl;
    return 0;
}
\end{verbatim}
\end{promptboxapp}